\tikzset{
    latent/.style={circle, draw=white, thick, minimum size=1cm, fill={rgb,255:red,235; green,243; blue,251}},
    observed/.style={circle, draw=black, thick, minimum size=1cm},
    data/.style={circle, draw=white, thick, minimum size=1cm, fill={rgb,255:red,225; green,225; blue,225}},
    dashed_arrow/.style={-stealth, dashed, thick},
    solid_arrow/.style={-stealth, thick},
}
\newcommand{\norm}[1]{\left\lVert #1 \right\rVert_2}
\theoremstyle{plain}
\newtheorem{theorem}{Theorem}[section]
\newtheorem{proposition}[theorem]{Proposition}
\newtheorem{lemma}[theorem]{Lemma}
\newtheorem{corollary}[theorem]{Corollary}
\theoremstyle{definition}
\newtheorem{definition}[theorem]{Definition}
\newtheorem{assumption}[theorem]{Assumption}
\theoremstyle{remark}
\newtheorem{remark}[theorem]{Remark}
\icmltitlerunning{}
\begin{document}

\twocolumn[
\icmltitle{Diffusion Models for Inverse Problems in the Exponential Family}

\icmlsetsymbol{equal}{*}

\begin{icmlauthorlist}
\icmlauthor{Alessandro Micheli}{equal,yyy}
\icmlauthor{Mélodie Monod}{equal,yyy}
\icmlauthor{Samir Bhatt}{yyy,comp}
\end{icmlauthorlist}

\icmlaffiliation{yyy}{Imperial College London}
\icmlaffiliation{comp}{University of Copenhagen}

\icmlcorrespondingauthor{Alessandro Micheli}{a.micheli19@imperial.ac.uk}
\icmlcorrespondingauthor{Mélodie Monod}{melodie.monod18@imperial.ac.uk}

% KEYWORD TODO
\icmlkeywords{Machine Learning, ICML}

\vskip 0.3in
]

\printAffiliationsAndNotice{\icmlEqualContribution} 

\begin{abstract}
    Diffusion models have emerged as powerful tools for solving inverse problems, yet prior work has primarily focused on observations with Gaussian measurement noise, restricting their use in real-world scenarios.
This limitation persists due to the intractability of the likelihood score, which until now has only been approximated in the simpler case of Gaussian likelihoods.
In this work, we extend diffusion models to handle inverse problems where the observations follow a distribution from the exponential family, such as a Poisson or a Binomial distribution. 
By leveraging the conjugacy properties of exponential family distributions, we introduce the \textit{evidence trick}, a method that provides a tractable approximation to the likelihood score.
In our experiments, we demonstrate that our methodology effectively performs Bayesian inference on spatially inhomogeneous Poisson processes with intensities as intricate as ImageNet images. Furthermore, we demonstrate the real-world impact of our methodology by showing that it performs competitively with the current state-of-the-art in predicting malaria prevalence estimates in Sub-Saharan Africa.

%Diffusion models have recently emerged as powerful generative tools for solving inverse problems, renowned for their high-quality reconstructions and seamless integration with existing iterative solvers. However, most studies primarily address simple linear inverse problems under Gaussian noise assumptions, which fail to capture the complexity of real-world scientific challenges.
% In this work, we extend diffusion solvers to efficiently address general noisy, non-linear inverse problems by approximating posterior sampling. Leveraging conjugate prior distributions, our approach accommodates data with likelihoods from the exponential family, including normal, Poisson, gamma, exponential, binomial, and negative binomial distributions. This innovation significantly broadens the applicability of diffusion models to inverse problems appearing a wider range of scientific fields.

\end{abstract}

% \paragraph{TODO List}

% ALESSANDRO:
% \begin{itemize}
% \item address my comments in 3.4 -- waiting for your response
% \item address todo on footnote in appendix A
% \item Write appendix D 
% \item Did you do all the todos below?
% \item Do you want to address of all sam's comments?
% \end{itemize}

% TODOS
% \begin{itemize}
% \item write appendix about Hx0
% \item for every statement we make about the derivative of $\log p_{\mathbf{x}_{0}\vert\mathbf{x}_t}$, clarify that it has to be sufficiently differentiable.
% \item add convexity KL divergence for exponential family (fisher information matrix) because we minimize it
% \item review number of eqs (some number needs to gathered, other removed)
% \item hilight benefit compared to mcmc: no prior evaluation, just sampling + no proposal, this is not done yet and needs to be done in the related work
% \end{itemize}

% MELODIE
% \begin{itemize}
%     \item Write related work
%     \item Write experiments
%     \item Critically read Appendix~\ref{app-proofs} and Section 4.4
% \end{itemize}

\section{Introduction} \label{sec:introduction}
% Score-based diffusion models provide a powerful mechanism to generate new samples from complex distributions through a two-step process~\citep{song2021scorebased}. First, they learn the score of the data distribution, defined as the gradient of the log-probability density function, at every time step $t$ of the noise process. Second, using this score, they iteratively refine noisy inputs to generate new samples from the data distribution.

\begin{figure*}[ht!]
\centering
\includegraphics[width=\textwidth]{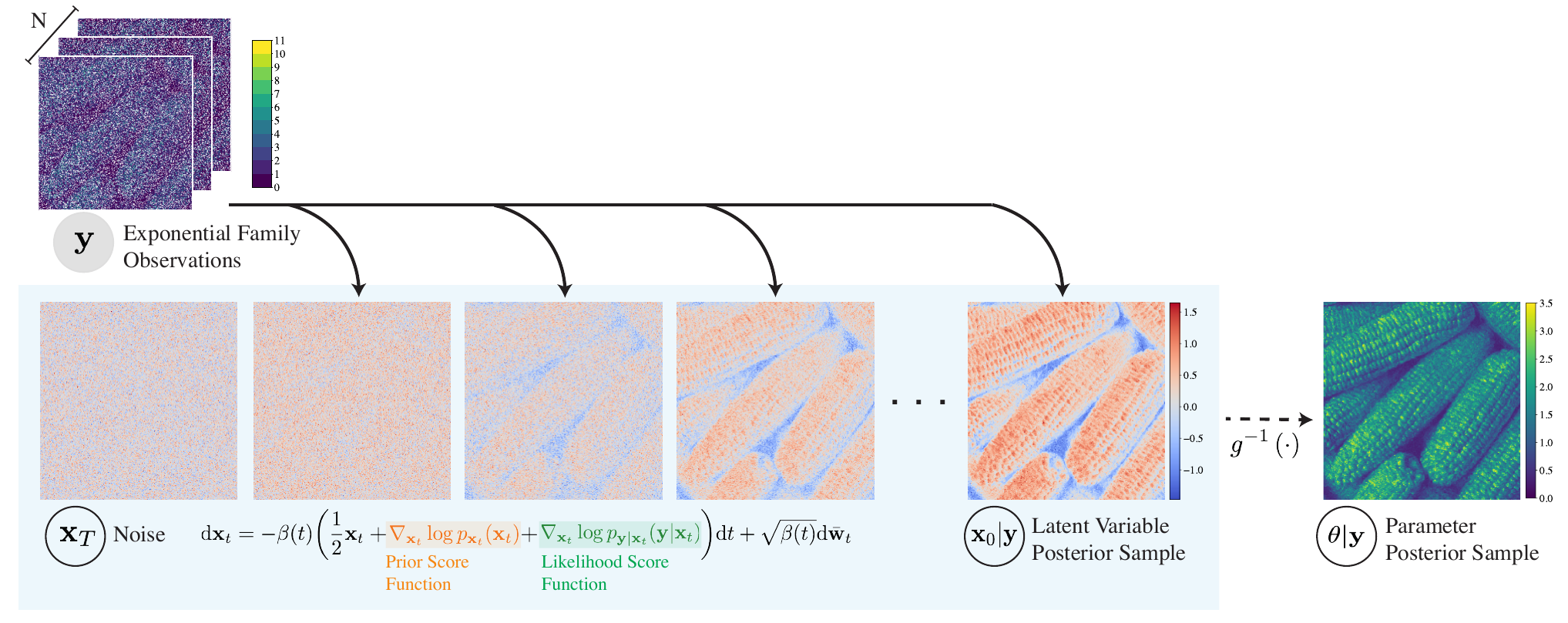}
\caption{\textbf{Illustration of the approach using Diffusion Models for Inverse Problems in the Exponential Family.} By leveraging the posterior score $\nabla_{\mathbf{x}_t} p_{\mathbf{x}_t|\mathbf{y}}(\mathbf{x}_t|\mathbf{y})$, a reverse stochastic differential equation (SDE) can be solved to generate posterior samples of the latent variable $\mathbf{x}_0$ from noise. Posterior samples of the parameter $\boldsymbol{\theta}$ are obtained by applying a deterministic inverse link function. The prior score function, $\nabla_{\mathbf{x}_t} p_{\mathbf{x}_t}(\mathbf{x}_t)$, is estimated using a neural network, following established approaches. A novel method is introduced to estimate the likelihood score function, $\nabla_{\mathbf{x}_t} p_{\mathbf{y}|\mathbf{x}_t}(\mathbf{y}|\mathbf{x}_t)$, leveraging the \textit{evidence trick} in combination with amortized variational inference.  The Figure illustrates the inference of a spatially inhomogeneous Poisson process where the intensity is as intricate as an ImageNet image.}
\label{fig-intro-summary-figure}
\end{figure*}

Score-based diffusion models offer a powerful framework for generating new samples from complex data distributions through a two-step process~\citep{song2021scorebased}. First, the score of the data distribution is estimated by learning to denoise corrupted samples. Second, leveraging this learned score, the noisy inputs are iteratively refined to produce new samples that align with the data distribution.

This capability is particularly advantageous in solving inverse problems. Given access to noisy observations $\mathbf{y} \in \mathbb{R}^{d_y}$, we are interested in inferring a latent signal $\mathbf{x}_0 \in \mathbb{R}^{d_x}$ that generated the observations by sampling from the posterior distribution $p_{\mathbf{x}_0|\mathbf{y}}(\mathbf{x}_0|\mathbf{y})$.
Diffusion models do not require the prior distribution $p_{\mathbf{x}_0}(\mathbf{x}_0)$ to be analytically specified or explicitly parameterized, they rely only on the ability to sample from it.
Therefore, they can be trained to learn the score of highly complex prior distributions, for instance where $\mathbf{x}_0$ are images from ImageNet. 
This distinguishes them from traditional methods like Markov Chain Monte Carlo (MCMC), which necessitates evaluating the prior density.

Despite these advantages, current diffusion-based methodologies are predominantly confined to Gaussian likelihoods (e.g.,~\citet{kadkhodaie2021, kawar2021, kawar2022, chung2023, song2023pseudoinverseguided, boys2024, rozet2024}), limiting their applicability to scenarios such as image deblurring or denoising. 
However, many scientific applications involve likelihoods that deviate significantly from Gaussian distributions. For instance, event data (e.g., COVID-19 case counts) is naturally modeled using a Poisson distribution, while proportion data (e.g., prevalence rates) aligns with a Binomial distribution. 
% These deviations present significant challenges, as existing diffusion-based methods are ill-equipped to handle such complexities.

One major obstacle to extending diffusion models to non-Gaussian likelihoods lies in the intractability of the posterior distribution score at diffusion time $t$, $ p_{\mathbf{x}_t | \mathbf{y}}(\mathbf{x}_t | \mathbf{y})$. Specifically, the posterior distribution score incorporates both the the prior score at time $t$, $ p_{\mathbf{x}_t}(\mathbf{x}_t)$, which can be trained, and the likelihood score at time $t$, $p_{\mathbf{y} | \mathbf{x}_t }(\mathbf{y}| \mathbf{x}_t )$. 
The latter is generally intractable because it depends on an integral over the reverse diffusion process density, which is itself difficult to compute. 
A common approach, when the observations follow a Gaussian distribution, is to approximate the reverse diffusion process and derive a closed-form expression for the likelihood score.
This approach is used by methods like Diffusion Posterior Sampling (DPS)~\citep{chung2023}, which proposes a delta function approximation centered at the Tweedie's posterior first moment, and Tweedie Moment Projected Diffusions~\citep{boys2024}, which employed a Multivariate anisotropic Gaussian distribution approximation.

These methods have significant limitations. They often struggle to accurately quantify uncertainty in the reverse diffusion process and they rely on Tweedie’s formula~\citep{Efron2011}, which exhibits high variance at high noise levels (see Section 1.2 of~\citet{target_score_matching}). 
Additionally, they cannot accommodate non-Gaussian observations. While~\citet{chung2023} proposed using Gaussian approximations for non-Gaussian distributions --- such as approximating a Poisson distribution with a Gaussian --- this approach is highly unstable for small values and entirely inapplicable to certain distributions.
These limitations underscore the need for robust methodologies that extend the applicability of diffusion models to non-Gaussian settings, ensuring both stability and accurate representation of diverse likelihoods.

To address these limitations, we introduce an approach that we term, the \textit{evidence trick}. By leveraging the properties of the exponential family and employing an amortized variational approach, we extend the applicability of diffusion models for inverse problems to any likelihood distribution within the one-parameter exponential family, which includes the Poisson and Binomial distributions. 
To approximate the likelihood score, $p_{\mathbf{y}|\mathbf{x}_t}(\mathbf{y}|\mathbf{x}_t)$, we use the conjugate prior distribution as a variational approximation for the reverse diffusion process. This reformulation makes the integral over the reverse process tractable and accounts for parameter uncertainty. In contrast to previous approaches, we derive an objective to optimize the variational distribution that is independent of the reverse process expectation, bypassing the need for Tweedie's formula. Figure~\ref{fig-intro-summary-figure} graphically summarises our methodology.

We leverage our methodology to introduce a \textit{``Score-Based Cox process''}, a discrete Cox process where the intensity is modeled using a score-based diffusion process. We demonstrate that our model can effectively capture rough and intricate intensity patterns, including those as complex as image samples from the ImageNet database. Furthermore, we show that our methodology can address real-world scientific challenges by performing competitively with the current state-of-the-art in predicting malaria prevalence estimates in Sub-Saharan Africa.

\section{Background} \label{sec:background}
Throughout this work, vectors and matrices will be represented using boldface notation, denoted as $\mathbf{x}$, while scalars will be expressed in standard font as $x$. Furthermore, we will use $\norm{\mathbf{x}}$ to denote the $\ell^{2}$-norm of a vector $\mathbf{x}$.
\subsection{Score-Based Diffusion Models} 
\label{sec-score-based-diffusions}
Score-based diffusion models aim to generate samples from a target distribution $ p_{\mathbf{x}_0}(\mathbf{x}_0) $ by progressively perturbing data with increasing noise levels and then learning to reverse this perturbation. This reversal defines a generative model capable of approximating the original data distribution. In this work, we adopt the framework introduced by \citet{song2021scorebased}, who define the forward noising process via an Itô stochastic differential equation (SDE). Specifically, we focus on the Variance-Preserving (VP) formulation of the SDE presented by~\citet{song2021scorebased}, which corresponds to the Denoising Diffusion Probabilistic Models (DDPM) introduced by~\citet{ho_denoising}.

We construct a forward diffusion process $ (\mathbf{x}_t)_{t \in [0,T]} $, with $ \mathbf{x}_t \in \mathbb{R}^{d_x} $, governed by the following equation:
\begin{equation*}
\mathrm{d}\mathbf{x}_t = - \frac{1}{2} \beta(t) \mathbf{x}_t \, \mathrm{d}t + \sqrt{\beta(t)} \, \mathrm{d}\mathbf{w}_t, \quad \mathbf{x}_0 \sim p_{\mathbf{x}_0},
\end{equation*}
where $ \mathbf{w}_t $ denotes a standard Wiener process, and $ \beta(t) : \mathbb{R} \to \mathbb{R}_+$ is a noise schedule. A commonly chosen parametrization for the noise schedule is the linear schedule $ \beta(t) = \beta_0 + t \, (\beta_1 - \beta_0) $, as discussed in \citet[Appendix C]{song2021scorebased}. The forward process is associated with the following transition kernel:
\begin{equation}
\label{eq-forward-transition-kernel}
p_{\mathbf{x}_t | \mathbf{x}_0}(\mathbf{x}_t | \mathbf{x}_0) = \mathcal{N}(\mathbf{x}_t; \sqrt{\alpha_t} \mathbf{x}_0, v_t \mathbf{I}_{d_x}),
\end{equation}
where $ \alpha_t := \exp\left( - \int_0^t \beta(s) \, \mathrm{d}s \right) $ and $ v_t := 1 - \alpha_t $. 

To recover the data-generating distribution, we reverse the noising process by solving the reverse SDE, derived from the forward process \cite{anderson_1982}:
\begin{multline*}
\mathrm{d}\mathbf{x}_t = - \beta(t) \left( \frac{1}{2} \mathbf{x}_t + \nabla_{\mathbf{x}_t} \log p_{\mathbf{x}_{t}}(\mathbf{x}_t) \right) \, \mathrm{d}t   \\ + \sqrt{\beta(t)} \, \mathrm{d}\bar{\mathbf{w}}_t, \quad \mathbf{x}_T \sim p_{\mathbf{x}_T},
\end{multline*}
where $\mathrm{d}t$ corresponds to time running backward, $\mathrm{d}\bar{\mathbf{w}}_t$ to the standard Wiener process running backward. Importantly, the term $ \nabla_{\mathbf{x}_t} \log p_{\mathbf{x}_{t}}(\mathbf{x}_t) $ is the score function, which guides the reverse process and it is typically approximated using a neural network $ \mathbf{s}_{\boldsymbol{\phi}}$, with learnable parameters $\boldsymbol{\phi}$, trained via Denoising Score Matching (DSM)~\cite{vincent2011} using the objective
\begin{multline}\label{eq:phi_star}
    \mathcal{J}_{\text{DSM}}(\boldsymbol{\phi}) =\\  \mathbb{E}_{t\sim U(\epsilon, 1)}\Big[ \lambda(t)  \mathbb{E}_{ \mathbf{x}_0\sim  p_{\mathbf{x}_0}, \mathbf{x}_t \sim p_{\mathbf{x}_t\vert\mathbf{x}_0}}\Big[ \mathcal{L}_{\text{DSM}}(\boldsymbol{\phi},  \mathbf{x}_0, \mathbf{x}_t, t)\Big]\Big], 
\end{multline}
with 
\begin{multline*}
\mathcal{L}_{\text{DSM}}(\boldsymbol{\phi}, \mathbf{x}_0, \mathbf{x}_t, t) = \\  \norm{\mathbf{s}_{\boldsymbol{\phi}}(\mathbf{x}_t, t) 
    -  \nabla_{\mathbf{x}_t} \log p_{\mathbf{x}_t\vert\mathbf{x}_0}(\mathbf{x}_t\vert\mathbf{x}_0)}^2,
\end{multline*}
and where $\epsilon \approx 0$ is a small positive constant, $\lambda(t) :[0,T]\to\mathbb{R}_+$ is a positive weighting function typically set to $\lambda(t) = 1/ \mathbb{E}\left[\left\vert\left\vert\nabla_{\mathbf{x}_t} \log p_{\mathbf{x}_t\vert\mathbf{x}_0}(\mathbf{x}_t|\mathbf{x}_0) \right\vert\right\vert_2^2\right]$ (see~\citet[Section 3.3]{song2021scorebased}). Once $\boldsymbol{\phi}^{*}$ is acquired by minimizing~\eqref{eq:phi_star}, one can use the approximation $\nabla_{\mathbf{x}_t} \log p_{\mathbf{x}_t}(\mathbf{x}_t) \simeq \mathbf{s}_{\boldsymbol{\phi}^*}(\mathbf{x}_t, t)$. 

\subsection{Inverse Problems with Diffusion Models}
\label{sec-diffusion-posterior-sampling}

Inverse problems across various scientific domains share a unified mathematical framework. The objective in these problems is to infer unknown parameters $\mathbf{x}_{0}$ given a set of measurements 
$\mathbf{y} \in \mathbb{R}^{d_{y}}$.
To solve such problems in a Bayesian framework, one adopts a prior distribution $p_{\mathbf{x}_0}(\mathbf{x}_0)$ and a likelihood distribution  $p_{\mathbf{y}|\mathbf{x}_0}(\mathbf{y}|\mathbf{x}_0)$, and seeks to sample from the posterior distribution $p_{\mathbf{x}_0|\mathbf{y}}(\mathbf{x}_0|\mathbf{y})$. 
Using Bayes’ rule, the posterior is given by:
\begin{equation*}
p_{\mathbf{x}_0|\mathbf{y}}(\mathbf{x}_0|\mathbf{y}) = \frac{ p_{\mathbf{y}|\mathbf{x}_0}(\mathbf{y}|\mathbf{x}_0)p_{\mathbf{x}_0}(\mathbf{x}_0)}{p_{\mathbf{y}}(\mathbf{y})},
\end{equation*}
where the \textit{evidence} is $p_{\mathbf{y}}(\mathbf{y}) = \int p_{\mathbf{y}|\mathbf{x}_0}(\mathbf{y}|\mathbf{x}_0)p_{\mathbf{x}_0}(\mathbf{x}_0) d\mathbf{x}_0$. The diffusion-based approaches of Section~\ref{sec-score-based-diffusions} can be adapted to sample from the posterior by adopting the following reverse process
\begin{multline}
\label{eq:reverse_SDE_posterior}
\mathrm{d}\mathbf{x}_t = - \beta(t) \left(\frac{1}{2}\mathbf{x}_t+ \nabla_{\mathbf{x}_t}\log p_{\mathbf{x}_{t}\vert \mathbf{y}}(\mathbf{x}_t|\mathbf{y})  
\right)  \mathrm{d}t \\ + \sqrt{\beta(t)} \mathrm{d}\bar{\mathbf{w}}_t, \quad \mathbf{x}_T \sim p_{\mathbf{x}_T|\mathbf{y}}. 
\end{multline}
It follows from Bayes' rule that the score of the posterior is 
\begin{multline} 
\label{eq:score_posterior}
    \nabla_{\mathbf{x}_t }\log p_{\mathbf{x}_t | \mathbf{y}}(\mathbf{x}_t | \mathbf{y}) = \\ \nabla_{\mathbf{x}_t } \log p_{\mathbf{x}_t}(\mathbf{x}_t) + \nabla_{\mathbf{x}_t } \log p_{\mathbf{y}|\mathbf{x}_t}(\mathbf{y}|\mathbf{x}_t).
\end{multline}
Hence, computing the score of the posterior distribution can be reduced to evaluating two terms: the prior score function, $\nabla_{\mathbf{x}_t }\log p_{\mathbf{x}_t}(\mathbf{x}_t)$, and the likelihood score function,$\nabla_{\mathbf{x}_t} \log p_{\mathbf{y}|\mathbf{x}_t}(\mathbf{y}|\mathbf{x}_t)$. The former can be directly obtained using the trained prior score function $\mathbf{s}_{\phi^*}(\mathbf{x}_t, t)$. However, computing the latter is challenging in closed form due to its dependence on time $t$, as there is only an explicit dependence between $\mathbf{y}$ and $\mathbf{x}_0$. To address this, \citet{chung2023} propose to factorize
$p_{\mathbf{y}|\mathbf{x}_t}(\mathbf{y}|\mathbf{x}_t )$ as:
\begin{equation}
\label{eq-likelihood-y-xt}
    p_{\mathbf{y}|\mathbf{x}_t}(\mathbf{y}|\mathbf{x}_t ) 
    = \int p_{\mathbf{y}|\mathbf{x}_0}(\mathbf{y}|\mathbf{x}_0) p_{\mathbf{x}_0|\mathbf{x}_t}(\mathbf{x}_0|\mathbf{x}_t) \mathrm{d}\mathbf{x}_0,
\end{equation} 
which follows from the fact that $\mathbf{y}$ and $\mathbf{x}_{t}$ are conditionally independent given $\mathbf{x}_{0}$. The density $p_{\mathbf{x}_0|\mathbf{x}_t}(\mathbf{x}_0|\mathbf{x}_t)$ is generally intractable, making the approximation of the integral in~\eqref{eq-likelihood-y-xt} a challenging task.

\subsection{Sampling for Linear Inverse Problems}
\label{sec-sampling-linear-inverse-problem}

The existing literature has predominantly focused on applications where observations follow a Gaussian likelihood:
\begin{equation*} 
%\label{eq:gaussian_gaussian_case}
    \mathbf{y} = \mathcal{H}( \mathbf{x}_0) + \mathbf{u}, \quad \text{where} \quad \mathbf{u} \sim \mathcal{N}(0, \sigma_y^2 \mathbf{I}_{d_y}),
\end{equation*}
where $\mathcal{H}:\mathbb{R}^{d_{x}}\to\mathbb{R}^{d_{y}}$ is the forward measurement operator and $\mathbf{u}$ is the measurement noise.
Practical applications relevant to this work often involve a potentially non-invertible linear setting, where $ \mathcal{H}(\mathbf{x}_0) = \mathbf{H}\mathbf{x}_0 $ for an $ d_{y} \times d_{x} $ real matrix $ \mathbf{H} $ with $ d_{x} \leq d_{y} $. In this context, existing studies~\cite{chung2023,song2023pseudoinverseguided, boys2024,rozet2024} approximate the integral in~\eqref{eq-likelihood-y-xt} by employing a Gaussian approximation, $ q_{\mathbf{x}_0|\mathbf{x}_t}(\mathbf{x}_0|\mathbf{x}_t) $, for the true posterior distribution $ p_{\mathbf{x}_0|\mathbf{x}_t}(\mathbf{x}_0|\mathbf{x}_t) $. This Gaussian approximation is defined as:  
\begin{equation*} %\label{eq:gaussian_variational_distribution}  
q_{\mathbf{x}_0|\mathbf{x}_t}(\mathbf{x}_0|\mathbf{x}_t) = \mathcal{N}_{d_x}(\mathbf{x}_0; \mathbf{m}_0(\mathbf{x}_t), C_0(\mathbf{x}_t)),  
\end{equation*}  
where $ \mathbf{m}_0(\mathbf{x}_t) $ and $ C_0(\mathbf{x}_t) $ are the mean and covariance of the approximation, respectively. This approach enables the computation of closed-form expressions for $ p_{\mathbf{y}|\mathbf{x}_t}(\mathbf{y}|\mathbf{x}_t) $, as the integral in Equation~\eqref{eq-likelihood-y-xt} becomes analytically tractable\footnote{We informally interpret the delta function approximation in \citet{chung2023} as a degenerate Gaussian distribution where the variance approaches zero.}. 

Relevant to our work is the approach adopted by \citet{boys2024} who proposed approximating $ p_{\mathbf{x}_0|\mathbf{x}_t}(\mathbf{x}_0|\mathbf{x}_t) $ by projecting it onto the closest Gaussian distribution $ q_{\mathbf{x}_0|\mathbf{x}_t}(\mathbf{x}_0|\mathbf{x}_t) $ in terms of the Kullback-Leibler (KL) divergence. The closest Gaussian in this sense is the one that matches the first two moments, $\mathbb{E}_{p_{\mathbf{x}_0|\mathbf{x}_t}}[\mathbf{x}_0]$ and $\mathbb{E}_{p_{\mathbf{x}_0|\mathbf{x}_t}}[\mathbf{x}_0 \mathbf{x}_0^\top]$,  of the true posterior distribution. They estimated these moments using Tweedie's formula~\cite{Efron2011}.

\section{Sampling for Diffusion Models with Conjugacy Structure} \label{sec:method}

The results presented in this section are derived using the theoretical framework and properties of exponential family distributions, which are thoroughly reviewed in Appendix~\ref{app-exponential-family}. 
% The relevant assumptions underlying this framework are also discussed in detail in the appendix.

\subsection{Setup}
The dataset $\mathbf{y} = \{ \boldsymbol{y}_i \}_{i=1}^N$ is assumed to consist of $N$ independent and identically distributed (i.i.d.) observations. Each observation $\boldsymbol{y}_i \in \mathcal{Y}^d \subseteq \mathbb{R}^d$ is derived from a parameter vector $\boldsymbol{\theta} \in \Theta^d \subseteq \mathbb{R}^d$ through the conditional distribution $p_{\boldsymbol{y} \vert \boldsymbol{\theta}}(\boldsymbol{y}_i \vert \boldsymbol{\theta})$ for $i = 1, \ldots, N$. 
The components of $\boldsymbol{y}_i$ and $\boldsymbol{\theta}$ are denoted by $\boldsymbol{y}_i = (y_{i,1}, y_{i,2}, \ldots, y_{i,d})$ and $\boldsymbol{\theta} = (\theta_1, \theta_2, \ldots, \theta_d)$, respectively. 
Henceforth we will work under the following assumptions. 
\begin{assumption}[Conditional Independence of Variables]
\label{ass-independence-y}
The variable $y_{i,j}|\boldsymbol{\theta}$ is independent of $y_{i,k}|\boldsymbol{\theta}$ for all $j \neq k$ and for all $i = 1, \ldots, N$. Furthermore, we assume that $y_{i,j}| \theta_j$ is independent of $\theta_{k}$ for all $j \neq k$.
\end{assumption}
\begin{assumption}[Exponential Family Distribution]
\label{ass-exponential-distribution}
The distribution $p_{{y} \vert {\theta}}(y_{i,j} \vert {\theta}_j)$ belongs to the univariate one-parameter exponential family with natural parameter $\eta(\theta_j)$, base measure $h_{y}(y_{i,j})$, sufficient statistics $T_{y}(y_{i,j})$ and log-partition function $A_y(\eta(\theta_j))$ for $j = 1, \ldots, d$ and $i = 1, \ldots, N$.
\end{assumption}
Given Assumptions~\ref{ass-independence-y} and~\ref{ass-exponential-distribution}, it follows that the distribution $p_{\boldsymbol{y} \vert \boldsymbol{\theta}}(\boldsymbol{y}_i \vert \boldsymbol{\theta})$ belongs to the multivariate exponential family with the form
\begin{multline*}
p_{\boldsymbol{y}\vert\boldsymbol{\theta}}(\boldsymbol{y}_i \vert \boldsymbol{\theta}) =\\  h_{\boldsymbol{y}}(\boldsymbol{y}_i)   \exp \left( \boldsymbol{\eta}(\boldsymbol{\theta})^{\top} \mathbf{T}_{\boldsymbol{y}}(\boldsymbol{y}_i) - \mathbf{1}_d^\top \mathbf{A}_{\boldsymbol{y}}(\boldsymbol{\eta}(\boldsymbol{\theta})) \right),
\end{multline*}
for $i = 1, \ldots, N$ and where $\mathbf{1}_d$ is a vector of ones of dimension $d$ and
\begin{equation*}
\begin{aligned}
&h_{\boldsymbol{y}}(\boldsymbol{y}_i) = \prod_{j = 1}^d h_{y}\left(y_{i,j}\right), \\ 
&\boldsymbol{\eta}(\boldsymbol{\theta}) = \left(\eta(\theta_1), \dots, \eta(\theta_d)\right), \\
&\mathbf{T}_{\boldsymbol{y}}(\boldsymbol{y}_i) = \left(T_y(y_{i,1}), \ldots, T_y(y_{i,d})\right), \\
&\mathbf{A}_{\boldsymbol{y}}\left(\boldsymbol{\eta}\left(\boldsymbol{\theta}\right)\right) = \left(A_y\left(\eta\left(\theta_1\right)\right), \ldots, A_y\left(\eta\left(\theta_d\right)\right)\right).
\end{aligned}
\end{equation*}
Furthermore, since 
$\mathbf{y}$ consists of $N$ i.i.d observations, then the distribution $p_{\mathbf{y}|\boldsymbol{\theta}}(\mathbf{y}|\boldsymbol{\theta})$ can be written as, 
\begin{multline}
    \label{eq:likelihood_y_phi}
    p_{\mathbf{y}|\boldsymbol{\theta}}(\mathbf{y}|\boldsymbol{\theta}) = \\
    h_{\mathbf{y}}(\mathbf{y}) \exp \left(\boldsymbol{\eta}(\boldsymbol{\theta})^{\top}\mathbf{T}_{\mathbf{y}}(\mathbf{y}) - N  \mathbf{1}_d^\top \mathbf{A}_{\boldsymbol{y}}(\boldsymbol{\eta}(\boldsymbol{\theta})) \right),
\end{multline}
where $h_{\mathbf{y}}(\mathbf{y}) = \prod_{i=1}^N h_{\boldsymbol{y} }(\boldsymbol{y}_i)$  and $\mathbf{T}_{\mathbf{y} }(\mathbf{y}) = \sum_{i = 1}^N \mathbf{T}_{\boldsymbol{y} }(\boldsymbol{y}_i)$.

\subsection{Sampling with a Link Function}
\label{sec-sampling-link-function}
We introduce a deterministic \textit{link function}, denoted as $g(\cdot)$. The following assumption is imposed on the link function:
\begin{assumption}
\label{ass-link-function}
    The link function $g: \Theta \to \mathbb{R}$ is assumed to be continuously differentiable, one-to-one and with ${\mathrm{d}g}/{\mathrm{d}\theta}\neq~0$ for all $\theta\in\Theta$.
\end{assumption}
These properties are standard assumptions and are consistent with those typically used in the context of the change-of-variable technique in probability and statistics (see Theorem 17.2 in \citet{billingsley_prob}).
We write $g(\boldsymbol{\theta})$ to denote the entry-wise application of $g(\cdot)$ to $\boldsymbol{\theta}$. 
The link function maps each parameter $\boldsymbol{\theta}\in \Theta^{d}$ to a transformed variable $\mathbf{x}_{0}= (x_{0,1}, x_{0,2}, \dots, x_{0,d}) \in \mathbb{R}^d$ satisfying the relation 
\begin{equation} \label{eq:transformation_parameter}
    \mathbf{x}_0   = g(\boldsymbol{\theta}).
\end{equation}
Our goal is to generate samples from the posterior distribution $p_{\boldsymbol{\theta} \vert \mathbf{y}}(\boldsymbol{\theta} \vert \mathbf{y})$, or a suitable approximation thereof. Instead of sampling directly from $p_{\boldsymbol{\theta} \vert \mathbf{y}}(\boldsymbol{\theta} \vert \mathbf{y})$, this can be achieved by sampling $\mathbf{x}_0$ from the transformed posterior $p_{\mathbf{x}_0 \vert \mathbf{y}}(\mathbf{x}_0 \vert \mathbf{y})$ using diffusion models as per the methodology described in Section~\ref{sec-diffusion-posterior-sampling} and applying the inverse link function 
\begin{equation*}
    \boldsymbol{\theta} = g^{-1}(\mathbf{x}_0).
\end{equation*}
Figure~\ref{fig-graphical-model} illustrates our approach as a hierarchical probabilistic model.
To streamline the presentation of our results, we defer the discussion in the presence of a linear measurement operator $\mathbf{H} \in \mathbb{R}^{d_y \times d_x}$ to Appendix~\ref{app-observation-operator-H}.
Proposed link functions for mapping the likelihood parameters $\boldsymbol{\theta}$ to the latent variable  $\mathbf{x}_0 $ are provided in Appendix~\ref{app-proposed_link_distributions}.
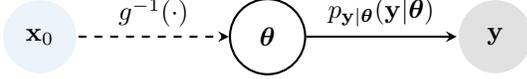
\begin{figure}[t!]
\centering
\begin{tikzpicture}[node distance=3cm and 2cm]

% Nodes
\node[latent] (x0) {$\mathbf{x}_{0}$};
\node[observed, right=of x0] (theta) {$\boldsymbol{\theta}$};
\node[data, right=of theta] (y) {$\mathbf{y}$};

% Edges
\draw[dashed_arrow] (x0) -- node[midway, above] {$g^{-1}(\cdot)$} (theta);
\draw[solid_arrow] (theta) -- node[midway, above] {$p_{\mathbf{y}\vert\boldsymbol{\theta}}(\mathbf{y} \vert \boldsymbol{\theta})$} (y);

\end{tikzpicture}
\caption{\textbf{Hierarchical Probabilistic Model.} The dotted arrow represents a deterministic relationship, while the solid arrow indicates a probabilistic relationship.}
\label{fig-graphical-model}
\end{figure}

\subsection{The Evidence Trick}
To approximate the likelihood $p_{\mathbf{y} \vert \mathbf{x}_t}(\mathbf{y} \vert \mathbf{x}_t)$ as defined in~\eqref{eq-likelihood-y-xt}, we propose a simple yet effective approach that we call the \textit{evidence trick}. Given the assumption that the likelihood $p_{\mathbf{y}|\boldsymbol{\theta}}(\mathbf{y}|\boldsymbol{\theta})$ belongs to the exponential family, there always exists a natural conjugate prior distribution $q_{\boldsymbol{\theta}|\boldsymbol{\zeta}}(\boldsymbol{\theta}|\boldsymbol{\zeta})$ with hyperparameters $\boldsymbol{\zeta}$ for which the integral
\begin{equation*}
\int p_{\mathbf{y}|\boldsymbol{\theta}}(\mathbf{y}|\boldsymbol{\theta}) q_{\boldsymbol{\theta}|\boldsymbol{\zeta}}(\boldsymbol{\theta}|\boldsymbol{\zeta}) \mathrm{d}\boldsymbol{\theta} 
\end{equation*}
can be computed in closed-form and corresponds to the \textit{evidence} of $p_{\mathbf{y}|\boldsymbol{\theta}}(\mathbf{y}|\boldsymbol{\theta})$.
As shown in Proposition~\ref{prop:expfam_form_independent_parameters}, the natural conjugate prior distribution also belongs to the exponential family and takes the form:
\begin{equation}
\label{eq-prior-q-theta}
q_{\boldsymbol{\theta}|\boldsymbol{\zeta}}(\boldsymbol{\theta}|\boldsymbol{\zeta}) = h_{\boldsymbol{\theta}}(\boldsymbol{\theta}) \exp \left(\boldsymbol{\zeta}^T \mathbf{T}_{\boldsymbol{\theta}}(\boldsymbol{\theta}) - A_{\boldsymbol{\theta}}(\boldsymbol{\nu},  \boldsymbol{\tau}) \right),
\end{equation}
with hyperparameters $\boldsymbol{\zeta} = \left(\boldsymbol{\nu} ,\boldsymbol{\tau}\right)$, $\boldsymbol{\nu}, \boldsymbol{\tau}  \in \mathbb{R}^d$, base measure $h_{\boldsymbol{\theta}}(\boldsymbol{\theta})$, sufficient statistics $\mathbf{T}_{\boldsymbol{\theta}}(\boldsymbol{\theta}) = (\boldsymbol{\eta}(\boldsymbol{\theta}), -\mathbf{A}_{\boldsymbol{y}}(\boldsymbol{\eta}(\boldsymbol{\theta})))$ and log-partition function $A_{\boldsymbol{\theta}}(\boldsymbol{\nu},  \boldsymbol{\tau})$. The specific form of the natural conjugate prior distribution's base measure and log-partition function is provided in Appendix~\ref{app-table_distributions}.

On this basis, we propose approximating $p_{\boldsymbol{\theta}|\mathbf{x}_t}(\boldsymbol{\theta}|\mathbf{x}_t)$ using the variational distribution $q_{\boldsymbol{\theta}|\boldsymbol{\zeta}(\mathbf{x}_t)}(\boldsymbol{\theta}|\boldsymbol{\zeta}(\mathbf{x}_t))$, as expressed below:
\begin{equation*}
%\label{eq-mean-field-variational-approx}
p_{\boldsymbol{\theta}|\mathbf{x}_t}(\boldsymbol{\theta}|\mathbf{x}_t) \approx q_{\boldsymbol{\theta}|\boldsymbol{\zeta}(\mathbf{x}_t)}(\boldsymbol{\theta}|\boldsymbol{\zeta}(\mathbf{x}_t)),
\end{equation*}
where the dependence of the hyperparameters $\boldsymbol{\zeta}(\mathbf{x}_t)  = \left(\boldsymbol{\nu}(\mathbf{x}_t) ,\boldsymbol{\tau}(\mathbf{x}_t)\right)$ on the input $\mathbf{x}_t$ is explicitly indicated.
This allows us to treat the density $p_{\mathbf{y}|\mathbf{x}_t}(\mathbf{y}|\mathbf{x}_{t})$ as the \textit{evidence} and approximate it with:
\begin{equation}
\label{eq:likelihood_t}
     p_{\mathbf{y}|\mathbf{x}_t}(\mathbf{y}|\mathbf{x}_{t})
     \approx\int p_{\mathbf{y}|\boldsymbol{\theta}}(\mathbf{y}|\boldsymbol{\theta}) q_{\boldsymbol{\theta}|\boldsymbol{\zeta}(\mathbf{x}_t)}(\boldsymbol{\theta}|\boldsymbol{\zeta}(\mathbf{x}_t)) \mathrm{d}\boldsymbol{\theta}.
\end{equation}
As shown in Proposition~\ref{prop:conjugacy}, the integral in \eqref{eq:likelihood_t} has a closed form expression which is given by 
\begin{multline}
\label{eq:likelihood_t-closed-form}
     p_{\mathbf{y}|\mathbf{x}_t}(\mathbf{y}|\mathbf{x}_{t}) \approx \\ h_{\mathbf{y}}(\mathbf{y}) \frac{\exp\left(-A_{\boldsymbol{\theta}}(\boldsymbol{\nu}(\mathbf{x}_t), \boldsymbol{\tau}(\mathbf{x}_t))\right)}{\exp\left(-A_{\boldsymbol{\theta}}(\mathbf{T}_{\mathbf{y}}(\mathbf{y}) + \boldsymbol{\nu}(\mathbf{x}_t), \boldsymbol{\tau}(\mathbf{x}_t) + N \mathbf{1}_d)\right)}.
\end{multline}

\subsection{Approximate Inference of  
$p_{\protect\boldsymbol{\theta}|\mathbf{x}_t}(\protect\boldsymbol{\theta}|\mathbf{x}_t)$}

In this section, we outline the process of finding the optimal approximation to $p_{\boldsymbol{\theta}|\mathbf{x}_t}(\boldsymbol{\theta}|\mathbf{x}_{t})$ by minimizing the KL divergence relative to $q_{\boldsymbol{\theta}\vert \boldsymbol{\zeta}(\mathbf{x}_t)}(\boldsymbol{\theta}\vert \boldsymbol{\zeta}(\mathbf{x}_t))$. For the next result, it is convenient to denote the log-partition function of the conjugate prior defined in~\eqref{eq-prior-q-theta} with $A_{\boldsymbol{\theta}}(\boldsymbol{\zeta}(\mathbf{x}_t)) := A_{\boldsymbol{\theta}}(\boldsymbol{\nu}(\mathbf{x}_t), \boldsymbol{\tau}(\mathbf{x}_t))$.
\begin{lemma}[KL Divergence of $p_{\boldsymbol{\theta}\vert \mathbf{x}_t}$ from $q_{\boldsymbol{\theta}\vert \boldsymbol{\zeta}(\mathbf{x}_t)}$]
\label{lemma-KL-divergence}
Let~$\boldsymbol{\theta} = g^{-1}(\mathbf{x}_0)$.
Furthermore, let
$q_{\boldsymbol{\theta}\vert \boldsymbol{\zeta}(\mathbf{x}_t)}(\boldsymbol{\theta}\vert \boldsymbol{\zeta}(\mathbf{x}_t))$ be defined as in~\eqref{eq-prior-q-theta} and be part of the exponential family with
hyperparameters $\boldsymbol{\zeta}(\mathbf{x}_t)$, base measure $h_{\boldsymbol{\theta}}(\boldsymbol{\theta})$, sufficient statistics $\mathbf{T}_{\boldsymbol{\theta}}(\boldsymbol{\theta})$ and
log-partition function $A_{\boldsymbol{\theta}}(\boldsymbol{\zeta}(\mathbf{x}_t))$.  
The KL divergence of $p_{\boldsymbol{\theta}\vert \mathbf{x}_t}$ from $q_{\boldsymbol{\theta}\vert \boldsymbol{\zeta}(\mathbf{x}_t)}$ is given by
\begin{equation}
\label{eq-kl-divergence-exp}
    D_{\text{KL}}(p_{\boldsymbol{\theta}\vert \mathbf{x}_t} \vert\vert q_{\boldsymbol{\theta}\vert \boldsymbol{\zeta}(\mathbf{x}_t)}) =  C(\mathbf{x}_t)  +\mathcal{L}_{\text{AVI}}(\boldsymbol{\zeta}, \mathbf{x}_t)
\end{equation}
where 
\begin{multline*}
\mathcal{L}_{\text{AVI}}\left(\boldsymbol{\zeta},\mathbf{x}_t\right) =\\  A_{\boldsymbol{\theta}}(\boldsymbol{\zeta}(\mathbf{x}_t)) -\boldsymbol{\zeta}(\mathbf{x}_t)^{\top}\mathbb{E}_{p_{\tilde{\mathbf{x}}_{0}\vert\mathbf{x}_t}}[\mathbf{T}_{\boldsymbol{\theta}}(g^{-1}(\tilde{\mathbf{x}}_{0}))]
\end{multline*}
and for a function $C(\mathbf{x}_t)$ that does not depend on $\boldsymbol{\zeta}$.
\end{lemma}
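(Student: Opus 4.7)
The plan is to expand the KL divergence by its definition, separate it into a cross-entropy piece plus a $\boldsymbol{\zeta}$-independent piece, and then exploit the fact that $q_{\boldsymbol{\theta}\vert\boldsymbol{\zeta}(\mathbf{x}_t)}$ belongs to the exponential family of the form given in~\eqref{eq-prior-q-theta}. Concretely, I would start from
\begin{equation*}
D_{\text{KL}}(p_{\boldsymbol{\theta}\vert\mathbf{x}_t}\vert\vert q_{\boldsymbol{\theta}\vert\boldsymbol{\zeta}(\mathbf{x}_t)})
= \mathbb{E}_{p_{\boldsymbol{\theta}\vert\mathbf{x}_t}}\bigl[\log p_{\boldsymbol{\theta}\vert\mathbf{x}_t}(\boldsymbol{\theta}\vert\mathbf{x}_t)\bigr]
- \mathbb{E}_{p_{\boldsymbol{\theta}\vert\mathbf{x}_t}}\bigl[\log q_{\boldsymbol{\theta}\vert\boldsymbol{\zeta}(\mathbf{x}_t)}(\boldsymbol{\theta}\vert\boldsymbol{\zeta}(\mathbf{x}_t))\bigr],
\end{equation*}
noting that the first term is the negative entropy of $p_{\boldsymbol{\theta}\vert\mathbf{x}_t}$ and therefore does not depend on $\boldsymbol{\zeta}$; it can be absorbed into $C(\mathbf{x}_t)$.

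Next, I would substitute the exponential family parametrization $\log q_{\boldsymbol{\theta}\vert\boldsymbol{\zeta}(\mathbf{x}_t)}(\boldsymbol{\theta}\vert\boldsymbol{\zeta}(\mathbf{x}_t)) = \log h_{\boldsymbol{\theta}}(\boldsymbol{\theta}) + \boldsymbol{\zeta}(\mathbf{x}_t)^{\top}\mathbf{T}_{\boldsymbol{\theta}}(\boldsymbol{\theta}) - A_{\boldsymbol{\theta}}(\boldsymbol{\zeta}(\mathbf{x}_t))$ into the cross-entropy term. By linearity of expectation this decomposes into three pieces: the contribution of $\log h_{\boldsymbol{\theta}}(\boldsymbol{\theta})$, which is $\boldsymbol{\zeta}$-free and hence absorbed into $C(\mathbf{x}_t)$; the linear-in-$\boldsymbol{\zeta}$ sufficient-statistics term $-\boldsymbol{\zeta}(\mathbf{x}_t)^{\top}\mathbb{E}_{p_{\boldsymbol{\theta}\vert\mathbf{x}_t}}[\mathbf{T}_{\boldsymbol{\theta}}(\boldsymbol{\theta})]$; and the log-partition term $A_{\boldsymbol{\theta}}(\boldsymbol{\zeta}(\mathbf{x}_t))$, which survives untouched.

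The only remaining step is to re-express the surviving expectation in terms of the diffused latent variable, for which I would invoke the deterministic change of variable $\boldsymbol{\theta} = g^{-1}(\mathbf{x}_0)$ guaranteed by Assumption~\ref{ass-link-function}. Since the relationship is deterministic, the law of the unconscious statistician gives
\begin{equation*}
\mathbb{E}_{p_{\boldsymbol{\theta}\vert\mathbf{x}_t}}\bigl[\mathbf{T}_{\boldsymbol{\theta}}(\boldsymbol{\theta})\bigr]
= \mathbb{E}_{p_{\tilde{\mathbf{x}}_0\vert\mathbf{x}_t}}\bigl[\mathbf{T}_{\boldsymbol{\theta}}(g^{-1}(\tilde{\mathbf{x}}_0))\bigr],
\end{equation*}
without having to track a Jacobian, because we are merely pushing an expectation through a bijective deterministic map. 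Combining the three collected contributions then yields exactly $\mathcal{L}_{\text{AVI}}(\boldsymbol{\zeta},\mathbf{x}_t) = A_{\boldsymbol{\theta}}(\boldsymbol{\zeta}(\mathbf{x}_t)) - \boldsymbol{\zeta}(\mathbf{x}_t)^{\top}\mathbb{E}_{p_{\tilde{\mathbf{x}}_0\vert\mathbf{x}_t}}[\mathbf{T}_{\boldsymbol{\theta}}(g^{-1}(\tilde{\mathbf{x}}_0))]$, and the $\boldsymbol{\zeta}$-independent residuals can be packaged as
\begin{equation*}
C(\mathbf{x}_t) = -H(p_{\boldsymbol{\theta}\vert\mathbf{x}_t}) - \mathbb{E}_{p_{\boldsymbol{\theta}\vert\mathbf{x}_t}}[\log h_{\boldsymbol{\theta}}(\boldsymbol{\theta})].
\end{equation*}

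There is no real obstacle here; the lemma is essentially a bookkeeping exercise once the exponential family form of $q$ is plugged in. The only subtle point is justifying the substitution $\boldsymbol{\theta}=g^{-1}(\mathbf{x}_0)$ inside the expectation, which is why Assumption~\ref{ass-link-function} (continuously differentiable, one-to-one, with non-vanishing derivative) is needed; beyond that, the proof is a direct manipulation of exponential family densities.
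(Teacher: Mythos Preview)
Your proposal is correct and follows essentially the same approach as the paper's proof: expand the KL divergence into an entropy term plus a cross-entropy term, plug in the exponential family form of $q$, absorb the $\boldsymbol{\zeta}$-independent pieces (negative entropy and $\mathbb{E}[\log h_{\boldsymbol{\theta}}]$) into $C(\mathbf{x}_t)$, and finally rewrite $\mathbb{E}_{p_{\boldsymbol{\theta}\vert\mathbf{x}_t}}[\mathbf{T}_{\boldsymbol{\theta}}(\boldsymbol{\theta})]$ as $\mathbb{E}_{p_{\tilde{\mathbf{x}}_0\vert\mathbf{x}_t}}[\mathbf{T}_{\boldsymbol{\theta}}(g^{-1}(\tilde{\mathbf{x}}_0))]$ via the deterministic link. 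Your definition of $C(\mathbf{x}_t)$ matches the paper's exactly.
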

The proof of Lemma \ref{lemma-KL-divergence} is postponed to Appendix \ref{sec:proof_lemma-KL-divergence}.
We define $\boldsymbol{\zeta}^{\star}(\mathbf{x}_t)$ as the set of hyperparameters that minimizes the KL divergence in~\eqref{eq-kl-divergence-exp} for a given input $\mathbf{x}_t$. Furthermore, let  $\boldsymbol{\zeta}^{\star}(\cdot)$ denote the function that minimizes the expected KL divergence, as specified by the objective
\begin{equation}
\label{eq-kl-expectation}
\mathcal{J}_{\text{AVI}}(\boldsymbol{\zeta}) = \mathbb{E}_{t\sim U(\epsilon, 1), \mathbf{x}_t \sim p_{\mathbf{x}_t}}\Big[ \mathcal{L}_{\text{AVI}}(\boldsymbol{\zeta},\mathbf{x}_t,t)\Big].
\end{equation}
where the function $C(\mathbf{x}_t)$ in~\eqref{eq-kl-divergence-exp} has been excluded from the optimization, as it does not depend on $\boldsymbol{\zeta}$. 
A significant challenge in optimizing~\eqref{eq-kl-expectation} arises from the term $\mathbb{E}_{p_{\tilde{\mathbf{x}}_{0}|\mathbf{x}_t}}[\mathbf{T}_{\boldsymbol{\theta}}(g^{-1}(\tilde{\mathbf{x}}_{0}))]$. This term requires computing expectations under the reverse process distribution $p_{\tilde{\mathbf{x}}_{0}|\mathbf{x}_t}$, which is generally intractable. To address this issue, our next result demonstrates that the objective in~\eqref{eq-kl-expectation} can be reformulated in a way that entirely avoids this explicit evaluation.
\begin{theorem}
\label{prop-new-objective}
Let $p_{\boldsymbol{\theta}}(\boldsymbol{\theta})$ be the marginal distribution of $\boldsymbol{\theta}$.  Moreover, assume that $\boldsymbol{\zeta}(\cdot)$ is a Lipschitz continuous function and that the following conditions hold:
\begin{equation*}
%\label{eq-condition-theorem}
\begin{aligned}
\mathbb{E}_{\boldsymbol{\theta}\sim  p_{\boldsymbol{\theta}}}[\norm{\mathbf{T}_{\boldsymbol{\theta}}(\boldsymbol{\theta}) }] &< \infty,\\
\mathbb{E}_{\boldsymbol{\theta}\sim  p_{\boldsymbol{\theta}}}[\norm{g(\boldsymbol{\theta})}\norm{\mathbf{T}_{\boldsymbol{\theta}}(\boldsymbol{\theta}) }] &< \infty 
\end{aligned}
\end{equation*}
Then, the objective in~\eqref{eq-kl-expectation} can be equivalently expressed as:
% \begin{multline}
%  \boldsymbol{\zeta}^{\star}(\cdot) = \argminA_{\boldsymbol{\zeta}(\cdot)} \mathbb{E}_{t\sim U(\epsilon, 1), \mathbf{x}_0\sim  p_{\mathbf{x}_0}, \mathbf{x}_t \sim p_{\mathbf{x}_t|\mathbf{x}_0}}\Big[ A_{\boldsymbol{\theta}}(\boldsymbol{\zeta}(\mathbf{x}_t)) \\ - \boldsymbol{\zeta}(\mathbf{x}_t)^{\top}\mathbf{T}_{\boldsymbol{\theta}}(g^{-1}(\mathbf{x}_{0}))\Big].
% \end{multline}
\begin{multline}
\label{eq-amortized-objective}
\mathcal{J}_{\text{AVI}}(\boldsymbol{\zeta}) = \\  \mathbb{E}_{t\sim U(\epsilon, 1), \mathbf{x}_0\sim  p_{\mathbf{x}_0}, \mathbf{x}_t \sim p_{\mathbf{x}_t|\mathbf{x}_0}}\Big[ \tilde{\mathcal{L}}_{\text{AVI}}(\boldsymbol{\zeta}, \mathbf{x}_0,\mathbf{x}_t)\Big]
\end{multline}
where
\begin{multline*}
\tilde{\mathcal{L}}_{\text{AVI}}(\boldsymbol{\zeta}, \mathbf{x}_0,\mathbf{x}_t) =  A_{\boldsymbol{\theta}}(\boldsymbol{\zeta}(\mathbf{x}_t))  - \boldsymbol{\zeta}(\mathbf{x}_t)^{\top}\mathbf{T}_{\boldsymbol{\theta}}(g^{-1}(\mathbf{x}_{0})).
\end{multline*}
\end{theorem}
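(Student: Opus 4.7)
The plan is to apply Fubini's theorem to interchange the order of integration in the problematic summand, which recasts the expectation over the intractable reverse-process posterior $p_{\tilde{\mathbf{x}}_0 \vert \mathbf{x}_t}$ as a forward-process expectation. The mechanism is simply Bayes' rule applied to the joint density of $(\mathbf{x}_0, \mathbf{x}_t)$: rewriting $p(\tilde{\mathbf{x}}_0 \vert \mathbf{x}_t) p(\mathbf{x}_t) = p_{\mathbf{x}_0}(\mathbf{x}_0) p_{\mathbf{x}_t \vert \mathbf{x}_0}(\mathbf{x}_t \vert \mathbf{x}_0)$ converts the sampling chain into one involving only the marginal prior and the tractable Gaussian forward kernel of~\eqref{eq-forward-transition-kernel}.

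I would split $\mathcal{L}_{\text{AVI}}(\boldsymbol{\zeta}, \mathbf{x}_t)$ into its two summands. The term $A_{\boldsymbol{\theta}}(\boldsymbol{\zeta}(\mathbf{x}_t))$ depends only on $\mathbf{x}_t$, so inserting the marginalization identity $p_{\mathbf{x}_t}(\mathbf{x}_t) = \int p_{\mathbf{x}_t \vert \mathbf{x}_0}(\mathbf{x}_t \vert \mathbf{x}_0) p_{\mathbf{x}_0}(\mathbf{x}_0) \mathrm{d}\mathbf{x}_0$ immediately turns $\mathbb{E}_{\mathbf{x}_t \sim p_{\mathbf{x}_t}}[A_{\boldsymbol{\theta}}(\boldsymbol{\zeta}(\mathbf{x}_t))]$ into $\mathbb{E}_{\mathbf{x}_0 \sim p_{\mathbf{x}_0}, \mathbf{x}_t \sim p_{\mathbf{x}_t \vert \mathbf{x}_0}}[A_{\boldsymbol{\theta}}(\boldsymbol{\zeta}(\mathbf{x}_t))]$. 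For the second summand, I would write it out as a double integral,
\begin{equation*}
\mathbb{E}_{\mathbf{x}_t \sim p_{\mathbf{x}_t}}\!\left[\boldsymbol{\zeta}(\mathbf{x}_t)^{\top}\mathbb{E}_{p_{\tilde{\mathbf{x}}_0 \vert \mathbf{x}_t}}[\mathbf{T}_{\boldsymbol{\theta}}(g^{-1}(\tilde{\mathbf{x}}_0))]\right] = \iint \boldsymbol{\zeta}(\mathbf{x}_t)^{\top}\mathbf{T}_{\boldsymbol{\theta}}(g^{-1}(\tilde{\mathbf{x}}_0))\, p_{\mathbf{x}_t \vert \mathbf{x}_0}(\mathbf{x}_t \vert \tilde{\mathbf{x}}_0) p_{\mathbf{x}_0}(\tilde{\mathbf{x}}_0) \mathrm{d}\tilde{\mathbf{x}}_0 \mathrm{d}\mathbf{x}_t,
\end{equation*}
and invoke Fubini to identify the right-hand side with $\mathbb{E}_{\mathbf{x}_0 \sim p_{\mathbf{x}_0}, \mathbf{x}_t \sim p_{\mathbf{x}_t \vert \mathbf{x}_0}}[\boldsymbol{\zeta}(\mathbf{x}_t)^{\top}\mathbf{T}_{\boldsymbol{\theta}}(g^{-1}(\mathbf{x}_0))]$. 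Combining the two pieces and reinserting the outer expectation over $t \sim U(\epsilon, 1)$ yields~\eqref{eq-amortized-objective}.

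The main obstacle is verifying the absolute integrability needed for Fubini, which is exactly where the stated hypotheses enter. Lipschitz continuity of $\boldsymbol{\zeta}(\cdot)$ gives $\norm{\boldsymbol{\zeta}(\mathbf{x}_t)} \leq L\norm{\mathbf{x}_t} + \norm{\boldsymbol{\zeta}(\mathbf{0})}$, so by Cauchy--Schwarz the absolute integrand is dominated by $(L\norm{\mathbf{x}_t} + \norm{\boldsymbol{\zeta}(\mathbf{0})})\norm{\mathbf{T}_{\boldsymbol{\theta}}(g^{-1}(\mathbf{x}_0))}$. Using the VP forward kernel~\eqref{eq-forward-transition-kernel} together with $\mathbf{x}_0 = g(\boldsymbol{\theta})$, the conditional expectation satisfies $\mathbb{E}[\norm{\mathbf{x}_t} \mid \boldsymbol{\theta}] \leq \sqrt{\alpha_t}\norm{g(\boldsymbol{\theta})} + \sqrt{v_t}\,\mathbb{E}\norm{\mathbf{z}}$ with $\mathbf{z}\sim\mathcal{N}(\mathbf{0},\mathbf{I}_{d_x})$. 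Integrating out $\boldsymbol{\theta}\sim p_{\boldsymbol{\theta}}$ and using the uniform boundedness of $\sqrt{\alpha_t}, \sqrt{v_t}$ on $t \in [\epsilon,1]$, the total integral is bounded by a linear combination of $\mathbb{E}_{p_{\boldsymbol{\theta}}}[\norm{g(\boldsymbol{\theta})}\norm{\mathbf{T}_{\boldsymbol{\theta}}(\boldsymbol{\theta})}]$ and $\mathbb{E}_{p_{\boldsymbol{\theta}}}[\norm{\mathbf{T}_{\boldsymbol{\theta}}(\boldsymbol{\theta})}]$, both finite by hypothesis. This legitimizes the interchange and closes the argument.
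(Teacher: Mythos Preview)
Your proposal is correct and follows essentially the same route as the paper's proof: rewrite the joint via Bayes' rule, apply Fubini to swap the $(\mathbf{x}_0,\mathbf{x}_t)$ integrals, and verify absolute integrability using Lipschitz continuity of $\boldsymbol{\zeta}$ together with the Gaussian forward kernel and the two moment hypotheses. Your integrability check is actually slightly more direct than the paper's---you bound $\norm{\boldsymbol{\zeta}(\mathbf{x}_t)}$ via the single anchor point $\mathbf{0}$, whereas the paper first centers at $\sqrt{\alpha_t}\mathbf{x}_0$ and then at an auxiliary point $\mathbf{c}$, producing three separate integrals to control; both routes land on the same pair of moment conditions.
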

The proof of Theorem~\ref{prop-new-objective} is deferred to Appendix~\ref{proof-prop-new-objective}. 
To approximate $\boldsymbol{\zeta}^{\star}(\cdot)$, we adopt the framework of \textit{amortized variational inference} (AVI).
We use a neural network, denoted by $\boldsymbol{\zeta}_{\boldsymbol{\rho}}(\mathbf{x}_t, t)$ where $\boldsymbol{\rho}$ represents the trainable parameters of the network. We train the neural network such that the parameters $\boldsymbol{\rho}^{*}$ are a minimizer of the following amortized objective:
\begin{multline*}
%\label{eq-amortized-objective}
\mathcal{J}_{\text{AVI}}(\boldsymbol{\rho}) = \\ \mathbb{E}_{t\sim U(\epsilon, 1), \mathbf{x}_0\sim  p_{\mathbf{x}_0},\mathbf{x}_t \sim p_{\mathbf{x}_t|\mathbf{x}_0}}\Big[ \tilde{\mathcal{L}}_{\text{AVI}}(\boldsymbol{\rho},\mathbf{x}_0,\mathbf{x}_t, t)\Big]
\end{multline*}
where
\begin{multline*}
\tilde{\mathcal{L}}_{\text{AVI}}(\boldsymbol{\rho},\mathbf{x}_0,\mathbf{x}_t, t) =  \\  A_{\boldsymbol{\theta}}(\boldsymbol{\zeta}_{\boldsymbol{\rho}}(\mathbf{x}_t, t))  -\boldsymbol{\zeta}_{\boldsymbol{\rho}}(\mathbf{x}_t, t)^{\top}\mathbf{T}_{\boldsymbol{\theta}}(g^{-1}(\mathbf{x}_{0})).
\end{multline*}
\begin{remark}[Inference Network]
The function $\boldsymbol{\zeta}_{\boldsymbol{\rho}}(\mathbf{x}_t,t)$ serves as an \textit{inference network} that infers a posterior distribution over the original (denoised) parameter vector $\boldsymbol{\theta}$, conditioned on its progressively noised counterpart 
$\mathbf{x}_t$ at diffusion time step $t$. Implemented via a neural network, $\boldsymbol{\zeta}_{\boldsymbol{\rho}}$  maps the noisy input $\mathbf{x}_t$ and timestep $t$ to the parameters of this posterior distribution, effectively approximating the inverse of the forward noising process.
\end{remark}

%% TODO UPDATE THIS REMARK
% \begin{remark}[Conditions for Gaussian prior distribution]
% Assuming $p_{\boldsymbol{\theta}}(\boldsymbol{\theta})$ is a multivariate Gaussian distribution and $g(
% \boldsymbol{\theta}) = \boldsymbol{\theta}$, the conditions in~\eqref{eq-condition-theorem} reduce to 
% \begin{equation}
% \begin{aligned}
% \mathbb{E}_{p_{\boldsymbol{\theta}}}[\vert\boldsymbol{\theta} \vert ] &< \infty \\
% \mathbb{E}_{p_{\boldsymbol{\theta}}}[||\boldsymbol{\theta}||^{2} ] &< \infty,
% \end{aligned}
% \end{equation}
% which are trivially satisfied for any multivariate Gaussian distribution.
% \end{remark}

\subsection{Computing the Score of $p_{\mathbf{y}|\mathbf{x}_t}(\mathbf{y}|\mathbf{x}_{t})$}
To sample from the posterior using diffusion models, it is necessary to approximate the likelihood score function, $\nabla_{\mathbf{x}_t} \log p_{\mathbf{y} \vert \mathbf{x}_t}(\mathbf{y} \vert \mathbf{x}_t)$, as defined in~\eqref{eq:score_posterior}. 
From~\eqref{eq:likelihood_t-closed-form}, the log-density $\log p_{\mathbf{y}|\mathbf{x}_t}(\mathbf{y}|\mathbf{x}_t)$ can be directly approximated with
\begin{multline*}
    \log p_{\mathbf{y}|\mathbf{x}_t}(\mathbf{y}|\mathbf{x}_{t})
     \approx \log h_{\mathbf{y}}(\mathbf{y}) - A_{\boldsymbol{\theta}}(\boldsymbol{\nu}(\mathbf{x}_t), \boldsymbol{\tau}(\mathbf{x}_t)) \\+ A_{\boldsymbol{\theta}}(\mathbf{T}_{\mathbf{y}}(\mathbf{y}) + \boldsymbol{\nu}(\mathbf{x}_t), \boldsymbol{\tau}(\mathbf{x}_t) + N \mathbf{1}_d).
\end{multline*}
The gradient of the log-density,~$\nabla_{\mathbf{x}_t} \log  p_{\mathbf{y}|\mathbf{x}_t}(\mathbf{y}|\mathbf{x}_{t})$ with respect to $\mathbf{x}_t$ can be efficiently computed using automatic differentiation.

\section{Experiments} \label{sec:experiment}
The technical details of each experiment discussed in this section can be found in Appendix~\ref{app-experiment-set-up}. The first experiment demonstrates the effectiveness of our method in approximating a posterior distribution that closely aligns with the ground truth obtained via MCMC, using a hierarchical model where the prior can be evaluated. The final two experiments address scenarios where an empirical prior is used, making MCMC infeasible.

\begin{figure*}[t!]
\centering
\includegraphics[width=\textwidth]{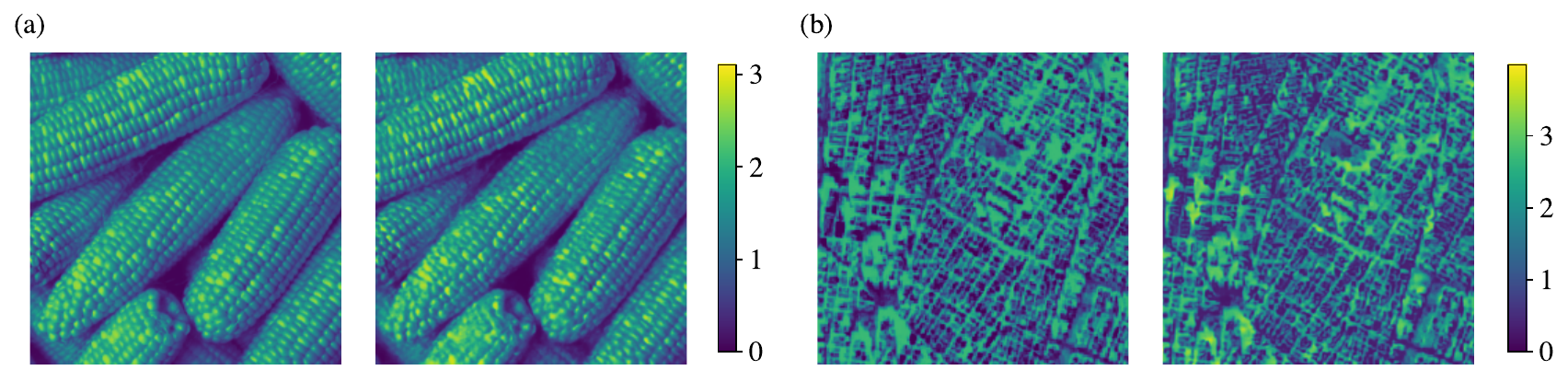}
\caption{\textbf{Score-Based Cox Process Results.} \textbf{(a)} (Left) True Cox Process intensity from the ImageNet validation set, transformed using an exponential link function. (Right) Median of the estimated Cox Process intensity posterior distribution using the Score-Based Cox Process method. \textbf{(b)} (Left) True Cox Process Intensity from Sentinel-2 Satellite Imagery of Manhattan, New York City (Right) Median of the estimated Cox Process intensity posterior distribution using the Score-Based Cox Process method.}
\label{fig:imagenet-cox-process}
\end{figure*} 

\subsection{One-dimensional Benchmark Analysis}
\label{sec-1d-synthetic-data}
We developed a simple one-dimensional experiment to illustrate the effectiveness of our method in approximating a posterior distribution that closely aligns with the ground truth obtained through MCMC.
This experiment also serves as a basis for comparing our posterior approximation with that generated by the DPS method. 
We consider the following hierarchical generative model:
\begin{equation*}
y_{i,j} \sim \text{Poisson}(\theta_j), \quad \boldsymbol{\theta} = \exp(\mathbf{x}_0), \quad \mathbf{x}_0 \sim \mathcal{GP}(0, \mathbf{K})
\end{equation*}
for $i = 1, \ldots, N$ and $j = 1, \dots d$, where $d = 30$ and $\mathbf{K}$ is the Gaussian Process (GP) covariance matrix defined by a radial basis function (RBF) kernel with variance 1 and length-scale 0.1. 
Using this model, we generated synthetic observations. We aim to solve the inverse problem of recovering the unknown Poisson intensity $\boldsymbol{\theta}$ from the generated synthetic observations. We used as a prior the true latent variable Gaussian Process distribution. 

We compared the posterior distribution of $\boldsymbol{\theta}$ estimated by our method against the ground-truth MCMC posterior as well as the DPS posterior approximation. 
The results of this comparison are provided in Appendix~\ref{app-results-synthetic-1d}.
Our approach demonstrates significantly better alignment with the ground-truth MCMC posterior. In contrast, DPS fails to accurately capture both the credible intervals and the point estimates.
To further assess robustness, we repeated this experiment using other distributions within the exponential family, for which DPS could not be used. Our method consistently aligned with the ground-truth MCMC posterior distribution. 
% The right panel contrasts the predictive distributions: our method better recovers the data-generating process, whereas DPS shows systematic bias in regions of low observation density.
% To assess robustness, we repeated the experiment with Pareto and Exponential likelihoods (Appendix~\ref{app-results-synthetic-1d}). Our method maintains consistent performance across all likelihood families.

\subsection{Score-Based Cox Process}
\label{sec-experiment-cox-process}
A Cox process, also called a doubly stochastic Poisson process, is a point process that generalizes the Poisson process by allowing its intensity function to be governed by a stochastic process, varying across the underlying mathematical space. The space over which the intensity function is defined is discretized to be a $256 \times 256$ grid. Each grid cell's observation is a Poisson random variable, parameterized by the corresponding intensity value.

To generate synthetic Cox Process observations, we explored multiple intensities including samples from the ImageNet validation dataset, a satellite image, and a map of buildings' heights in London. For each choice of intensity, we drew $N = 50$ event samples according to a Cox Process and allocated 80\% of the grid cells to the training set and the remaining 20\% to the test set.

To address the inverse problem, we employed the ImageNet prior. This prior assumes that $\mathbf{x}_0$ are samples from the ImageNet train dataset. We use the exponential inverse link function. The hierarchical generative model was:
\begin{equation*}
y_{i,j} \sim \text{Poisson}(\theta_j), \quad
\boldsymbol{\theta} = \exp(\mathbf{x}_0), \quad
\mathbf{x}_0 \sim \text{ImageNet}
\end{equation*}
for $i = 1, \ldots, N$, $j = 1, \ldots d$, and where $d = 256\times 256$. We refer to this method as the \textit{``Score-Based Cox Process"}. It should be noted that MCMC inference cannot be used due to the intractability of the prior density.
Figure~\ref{fig:imagenet-cox-process} shows the results of the \textit{``Score-Based Cox Process"} on recovering the true intensity surface. Further experimental results given different values of $N$ and different intensities are provided in Appendix~\ref{app-further-experiment-cox-process}.

\subsection{Prevalence of Malaria Prevalence in Sub-Saharan Africa}
\label{sec-experiment-malaria}
The \emph{Plasmodium falciparum} parasite rate (PfPR) quantifies the proportion of individuals who have the malaria parasite. The data used to estimate the PfPR consist of the number of positive cases in location $j$, denoted as $y_j$ (detected using rapid diagnostic tests or PCR), out of the total number of individuals examined in the same location, $n_j$. Spatio-temporal mapping of PfPR is typically conducted using GPs~\cite{Bhatt2015-uk}. However, the growing volume of data has rendered full-rank Bayesian inference with GPs computationally impractical. Furthermore, the simple covariance functions commonly used in GPs may be inadequate, necessitating increasingly complex models to accurately predict PfPR across spatial and temporal dimensions \cite{Bhatt2017-tk}. 
Here, we reanalyzed a real-world dataset on PfPR from the Malaria Atlas Project, previously used to monitor malaria trends in Sub-Saharan Africa~\citep{Bhatt2015-uk,Pfeffer2018-cm, Weiss2019-au} --- the continent bearing the highest burden of the disease. 
We ignored temporal aspects, and only aimed to interpolate spatial data across all of Sub-Saharan Africa. We used a grid resolution of $256 \times 256$, equivalent to a $\sim 111 \text{ km}^2$ resolution, and aggregated positive cases and individuals examined to this resolution. Out of the grid, $7,048$ ($10.75$\%) entries had non-missing observations, which were then split into training and test sets in an 80/20 ratio. The hierarchical generative model was:
\begin{equation*}
y_{j} \sim \text{Binomial}(n_j,\theta_j), \; \boldsymbol{\theta} = \sigma(s\,\mathbf{x}_0), \;
\mathbf{x}_0 \sim \text{ImageNet}
\end{equation*}
for $j = 1, \ldots d$, $s = 5$ and where $d = 256\times 256$, and where $\sigma(\cdot)$ is the sigmoid (inverse logit) function.
Figure~\ref{fig:malaria-results} presents the PfPR posterior median and credible interval estimated using our approach. A benchmark analysis comparing our method to the Gaussian Markov Random Field (GMRF) --- considered the state-of-the-art for disease mapping~\citep{Rue2009-ty, Lindgren2011-fv, Heaton2017-vl} --- is provided in Appendix~\ref{app-further-experiment-malaria}. Our results show that our approach performs competitively with the GMRF model.

\begin{figure*}[ht!]
\centering
\includegraphics[width=\textwidth]{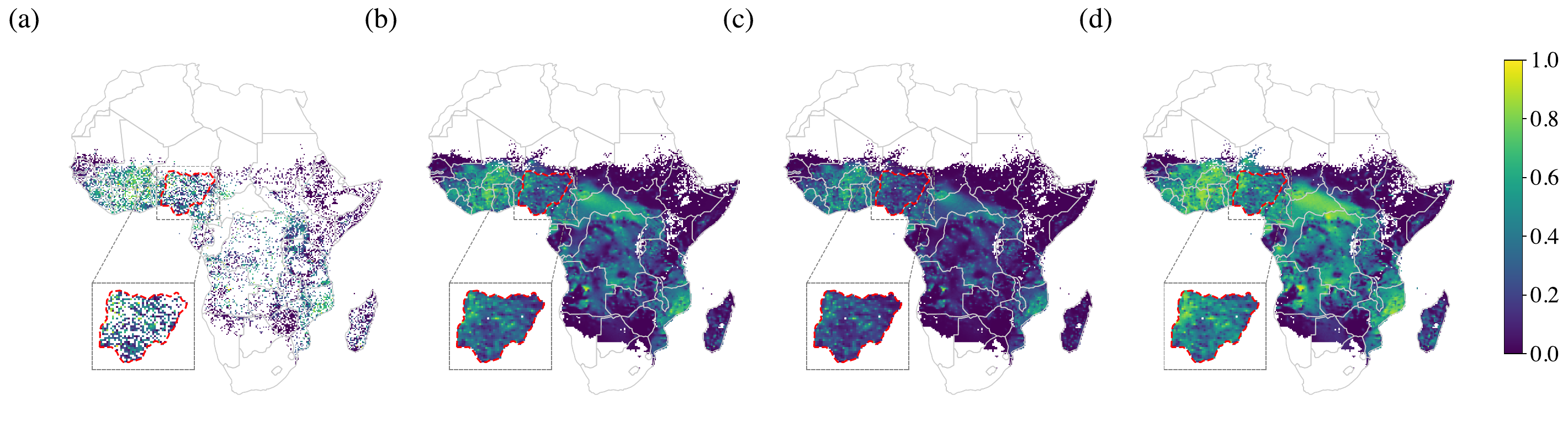}
\caption{\textbf{Prevalence of Malaria in Sub-Saharan Africa Results.} \textbf{(a)} Empirical PfPR. \textbf{(b)} Median of the estimated PfPR posterior distribution. \textbf{(c)} $25$\% quantile of the estimated PfPR posterior distribution. \textbf{(d)} $75$\% quantile of the estimated PfPR posterior distribution. 
The inset plots highlight Nigeria, one of the countries with the highest malaria burden worldwide.
The empty entries either correspond to locations outside Sub-Saharan Africa or the stable spatial limits of \emph{P. falciparum} transmission~\cite{Bhatt2015-uk} }
\label{fig:malaria-results}
\end{figure*}

\section{Related Work}\label{sec:related_work}
Since our work focuses on addressing inverse problems with non-Gaussian observations, we review several approaches that have also attempted to solve this problem.

\paragraph{Markov Chain Monte Carlo.} MCMC is the most commonly used posterior sampling method for performing Bayesian inference on inverse problems.  A key characteristic of MCMC methods is the need to evaluate the prior to compute the acceptance rate for candidate samples generated by the proposal distribution. This requirement presents a significant limitation compared to diffusion-based approaches, which only require the ability to sample from the prior distribution. Therefore, diffusion-based methods accommodate a much broader range of prior distributions.

\paragraph{Gaussian Process and Gaussian Markov Random Field.} GPs are widely used for modeling latent functions in tasks where capturing uncertainty is crucial but are computationally expensive due to the worst-case cubic complexity of inverting large covariance matrices \citep{Rasmussen_GP, Adams2009}. For non-Gaussian observations, Bayesian inference can only be performed via MCMC sampling, which suffers from autocorrelation and slow mixing, making it impractical for large-scale experiments. Given our dataset size and parameter dimensionality, MCMC-based GP inference was infeasible.

As an alternative to MCMC, the most popular approximate method is the integrated nested Laplace approximation (INLA) \cite{Rue2005}, combined with GMRFs. INLA enables sparse computations and avoids MCMC’s mixing issues through an optimization-based approach. However, INLA has limitations: it restricts covariance functions to stationary ones, struggles with high spectral frequencies \cite{Stein2014-hc}, lacks posterior accuracy guarantees, and makes obtaining posterior samples challenging.

% Finally, conditional simulation and sampling can also be challenging~\cite{Rasmussen_GP}.

\paragraph{Diffusion Posterior Sampling.} Among existing diffusion models-based methodologies, we mention DPS, the work of~\citet{chung2023}, who proposed to approximate $p_{\mathbf{x}_{0}\vert\mathbf{x}_t}(\mathbf{x}_0 \vert \mathbf{x}_t)$ as a Dirac delta distribution centered at the posterior mean $\mathbb{E}_{\mathbf{x}_{0}\sim p_{\mathbf{x}_{0}\vert\mathbf{x}_t}}[\mathbf{x}_0]$.
The latter is determined using Tweedie’s formula. Although their method was originally designed for linear inverse problems with Gaussian likelihoods, the authors also extended it to address inverse problems involving Poisson-distributed observations. 
This approach relies on the assumption that Gaussian distributions can effectively approximate Poisson-distributed data when the rate is sufficiently high. However, as noted in \citep[Appendix C.4]{chung2023} and illustrated in the experiment presented in Section~\ref{sec-1d-synthetic-data}, the method faces numerical instabilities and produces poor approximations when the observations come from a low-rate Poisson distribution. Furthermore, the method depends on Tweedie’s formula, which is known to exhibit high variance at high noise levels during the reverse diffusion process (see Section 1.2 of~\citet{target_score_matching}).

\paragraph{Simulation-Based Inference (SBI).} 
SBI avoids the need for a tractable likelihood by relying on simulated observations. Diffusion models enable SBI by approximating the likelihood score function through a Conditional Denoising Estimator (CDE) in the form of a neural network. The CDE directly estimates the likelihood score function by conditioning on three inputs: the observations $\mathbf{y}$, the noise-corrupted latent variable $\mathbf{x}_t$ and the diffusion timestep $t$~\citep{batzolis2021,simons2023}.

Existing approaches face three critical limitations. First, using observations 
$\mathbf{y}$ as network input requires retraining for each new dataset, incurring high computational costs.
Second, in a multiple samples regime ($N>1$), the likelihood score function depends on both the prior and the likelihood score networks~\citep{geffner23a}, causing errors from the prior network to propagate into the likelihood approximation. Third, these methods struggle to accommodate heterogeneous missing data patterns across observations. This limitation stems from their reliance on a fixed input structure for $\mathbf{y}$: missing observations can only be processed if they conform to the network's predefined input format, restricting their applicability to real-world datasets with variable or unanticipated missingness. In contrast, our approach trains a single network given a choice of likelihood, decoupling it from the specific missing-data pattern and the observations themselves. 
%This ensures robustness to heterogeneous missingness and eliminates computational overhead from repeated training, achieving both efficiency and real-world applicability.

\section{Conclusion} \label{sec:conclusion}
In this work, we introduced a novel approach for solving inverse problems using diffusion models when observations follow distributions from the exponential family. 
Our posterior approximation closely aligns with MCMC methods while scaling to larger observational datasets and accommodating empirical priors. We demonstrate strong performance in image denoising under Poisson noise and further highlight the method’s effectiveness in real-world problems. Notably, our results suggest that an ImageNet prior can be a powerful tool for spatial statistics, enabling the recovery of latent patterns that extend beyond those present in ImageNet itself.

\section*{Impact Statement}
Most real-world phenomena, such as disease case counts, the time between occurrences of natural climate events, and the proportion of individuals with certain health conditions, exhibit non-Gaussian characteristics. A key challenge in modeling such data lies in solving inverse problems to estimate latent functions of scientific interest. Diffusion models have consistently demonstrated exceptional performance in addressing inverse problems and form the foundation of many recent breakthroughs in artificial intelligence research. By introducing an approach that leverages diffusion models within the framework of the exponential family, we significantly expand the scope of applications where these models can be effectively leveraged.

\bibliography{ref}
\bibliographystyle{style/icml2025}

\appendix
\onecolumn
\renewcommand{\thefigure}{A\arabic{figure}}  % Append "A" before figure numbers
\renewcommand{\thetable}{A\arabic{table}}    % Append "A" before table numbers

\newpage
\clearpage
\section{Exponential Family}
\label{app-exponential-family}
In this appendix, we introduce the notation for exponential families and provide a summary of key results, drawing inspiration from~\citet[Appendix B]{Johnson2016}. Throughout this manuscript we take all densities to be absolutely continuous with respect to the appropriate Lebesgue measure (when the underlying set $\mathcal{Y}$ is Euclidean space) or counting measure (when $\mathcal{Y}$ is discrete), and denote the Borel $\sigma$-algebra of a set $\mathcal{Y}$ as $\mathcal{B}(\mathcal{Y})$ (generated by Euclidean and discrete topologies, respectively). We assume measurability of all functions as necessary.

Given a statistic function $\mathbf{T}_{\mathbf{y}} : \mathcal{Y} \to \mathbb{R}^k$ and a base measure $h_{\mathbf{y}}(\mathbf{y})$, we can define an exponential family of probability densities on $\mathcal{Y}$ and indexed by natural parameter $\boldsymbol{\eta} \in \mathbb{R}^k$ by
\begin{equation*}
p_{\mathbf{y}\vert \boldsymbol{\eta}}(\mathbf{y} \vert \boldsymbol{\eta}) \propto  h_{\mathbf{y}}(\mathbf{y}) \exp\left( \boldsymbol{\eta}^{\top} \mathbf{T}_{\mathbf{y}}(\mathbf{y})  \right), \quad \forall \boldsymbol{\eta} \in \mathbb{R}^k.
\end{equation*}
 We define the partition function  as
\begin{equation*}
Z_{\mathbf{y}}(\boldsymbol{\eta}) := \int h_{\mathbf{y}}(\mathbf{y}) \exp\left( \boldsymbol{\eta}^{\top} \mathbf{T}_{\mathbf{y}}(\mathbf{y})  \right) \mathrm{d}\mathbf{y}
\end{equation*}
and the log-partition function as
\begin{equation*}
A_{\mathbf{y}}(\boldsymbol{\eta}) := \log Z_{\mathbf{y}}(\boldsymbol{\eta}).
\end{equation*}
Lastly, we define $\mathcal{H} \subseteq \mathbb{R}^k$ to be the set of all normalizable natural parameters,
\begin{equation*}
\mathcal{H} := \left\{ \boldsymbol{\eta} \in \mathbb{R}^k : A_{\mathbf{y}}(\boldsymbol{\eta}) < \infty \right\}.
\end{equation*}
We can write the normalized probability density as
\begin{equation*}
p_{\mathbf{y}\vert \boldsymbol{\eta}}(\mathbf{y} \vert \boldsymbol{\eta}) =h_{\mathbf{y}}(\mathbf{y}) \exp \left(\boldsymbol{\eta}^{\top}\mathbf{T}_{\mathbf{y}}(\mathbf{y}) -A_{\mathbf{y}}(\boldsymbol{\eta}) \right).
\end{equation*}
We say that an exponential family is \emph{regular} if $\mathcal{H}$ is open, and \emph{minimal} if there is no $\boldsymbol{\eta} \in \mathbb{R}^k \setminus \{ 0 \}$ such that $\boldsymbol{\eta}^{\top} \mathbf{T}_{\mathbf{y}}(\mathbf{y}) = 0$. We assume all families are regular and minimal.

We parameterize the family by the parameters $\boldsymbol{\theta}$ instead of the natural parameters. We write the natural parameter as a continuous function of the parameters, $\boldsymbol{\eta}(\boldsymbol{\theta})$ and take $\Theta = \boldsymbol{\eta}^{-1}(\mathcal{H})$ to be the open set of parameters that correspond to normalizable densities. We summarize this notation in the following definition.

\begin{definition}[Exponential family of densities]
\label{def-exponential-family}
Given a measure space $(\mathcal{Y}, \mathcal{B}(\mathcal{Y}))$, a statistic function $\mathbf{T}_{\mathbf{y}} : \mathcal{Y} \to \mathbb{R}^k$, and a natural parameter function $\boldsymbol{\eta} : \boldsymbol{\theta} \to \mathbb{R}^k$, the corresponding exponential family of densities is
\begin{equation*}
p_{\mathbf{y}\vert\boldsymbol{\theta}}(\mathbf{y} \vert \boldsymbol{\theta}) = h_{\mathbf{y}}(\mathbf{y})   \exp \left( \boldsymbol{\eta}(\boldsymbol{\theta})^{\top} \mathbf{T}_{\mathbf{y}}(\mathbf{y}) - A_{\mathbf{y}}(\boldsymbol{\eta}(\boldsymbol{\theta})) \right).
\end{equation*}
\end{definition}
When we write exponential families of densities for different random variables, we change the subscripts on the statistic function, natural parameter function, and log partition function to correspond to the symbol used for the random variable. 

The next proposition shows that the log partition function of an exponential family generates cumulants of the statistic.

\begin{proposition}[Gradients of log-partition function and expected sufficient statistics]\label{prop:gradient_A_suff_stats}
The gradient of the log partition function $A_{\mathbf{y}}$ of an exponential family distributed random variable $\mathbf{y}$ equals its expected sufficient statistic:
\begin{equation*}
\nabla_{\boldsymbol{\eta}} A_{\mathbf{y}}(\boldsymbol{\eta}) = \mathbb{E}_{p_{\mathbf{y} \vert \boldsymbol{\eta}}} \left[ \mathbf{T}_{\mathbf{y}}(\mathbf{y}) \right],
\end{equation*}
where the expectation is taken with respect to the density $p_{\mathbf{y} \vert \boldsymbol{\eta}}(\mathbf{y} \vert \boldsymbol{\eta})$. 
\end{proposition}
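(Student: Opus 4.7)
The plan is to differentiate the defining integral for $A_{\mathbf{y}}(\boldsymbol{\eta})$ under the integral sign and then recognize the result as a normalized expectation. Concretely, starting from $A_{\mathbf{y}}(\boldsymbol{\eta}) = \log Z_{\mathbf{y}}(\boldsymbol{\eta})$ with $Z_{\mathbf{y}}(\boldsymbol{\eta}) = \int h_{\mathbf{y}}(\mathbf{y}) \exp(\boldsymbol{\eta}^\top \mathbf{T}_{\mathbf{y}}(\mathbf{y})) \, \mathrm{d}\mathbf{y}$, the chain rule gives $\nabla_{\boldsymbol{\eta}} A_{\mathbf{y}}(\boldsymbol{\eta}) = Z_{\mathbf{y}}(\boldsymbol{\eta})^{-1} \nabla_{\boldsymbol{\eta}} Z_{\mathbf{y}}(\boldsymbol{\eta})$. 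The integrand of $Z_{\mathbf{y}}$ is differentiable in $\boldsymbol{\eta}$ with $\nabla_{\boldsymbol{\eta}} \exp(\boldsymbol{\eta}^\top \mathbf{T}_{\mathbf{y}}(\mathbf{y})) = \mathbf{T}_{\mathbf{y}}(\mathbf{y}) \exp(\boldsymbol{\eta}^\top \mathbf{T}_{\mathbf{y}}(\mathbf{y}))$, so swapping gradient and integral yields
\begin{equation*}
\nabla_{\boldsymbol{\eta}} A_{\mathbf{y}}(\boldsymbol{\eta}) = \frac{1}{Z_{\mathbf{y}}(\boldsymbol{\eta})} \int \mathbf{T}_{\mathbf{y}}(\mathbf{y}) \, h_{\mathbf{y}}(\mathbf{y}) \exp(\boldsymbol{\eta}^\top \mathbf{T}_{\mathbf{y}}(\mathbf{y})) \, \mathrm{d}\mathbf{y} = \int \mathbf{T}_{\mathbf{y}}(\mathbf{y}) \, p_{\mathbf{y} \vert \boldsymbol{\eta}}(\mathbf{y} \vert \boldsymbol{\eta}) \, \mathrm{d}\mathbf{y},
\end{equation*}
which is exactly $\mathbb{E}_{p_{\mathbf{y} \vert \boldsymbol{\eta}}}[\mathbf{T}_{\mathbf{y}}(\mathbf{y})]$.

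The main obstacle, as always in such classical arguments, is rigorously justifying the interchange of the gradient and the integral. The approach I would take is to invoke the standing regularity assumption made in the excerpt that $\mathcal{H}$ is open, and to work locally at an arbitrary fixed $\boldsymbol{\eta}_0 \in \mathcal{H}$. The plan is to choose a small closed ball $\overline{B_r(\boldsymbol{\eta}_0)} \subset \mathcal{H}$ (which exists by openness) and produce an integrable dominating function for the difference quotients, or equivalently for each partial derivative $T_{\mathbf{y},i}(\mathbf{y}) \exp(\boldsymbol{\eta}^\top \mathbf{T}_{\mathbf{y}}(\mathbf{y}))$ uniformly in $\boldsymbol{\eta}$ over $\overline{B_r(\boldsymbol{\eta}_0)}$. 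The standard trick is to bound $|T_{\mathbf{y},i}(\mathbf{y})| \exp(\boldsymbol{\eta}^\top \mathbf{T}_{\mathbf{y}}(\mathbf{y}))$ above by $C \cdot \exp(\boldsymbol{\eta}'^\top \mathbf{T}_{\mathbf{y}}(\mathbf{y}))$ for a slightly enlarged parameter $\boldsymbol{\eta}' \in \mathcal{H}$ (still inside $\mathcal{H}$ by openness), using that polynomial growth is dominated by a small exponential shift and then absorbing the factor into the exponential. Since $\boldsymbol{\eta}' \in \mathcal{H}$, the dominating function is integrable against $h_{\mathbf{y}}$, so the Leibniz rule (dominated convergence on difference quotients) applies and the interchange is valid.

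Once the interchange is justified, the remaining manipulations are purely algebraic and reduce to dividing by the normalizing constant $Z_{\mathbf{y}}(\boldsymbol{\eta})$ to recognize the normalized density $p_{\mathbf{y} \vert \boldsymbol{\eta}}$ inside the integrand. Since $\boldsymbol{\eta}_0 \in \mathcal{H}$ was arbitrary, the identity holds throughout the open set $\mathcal{H}$, completing the proof. I do not anticipate further subtleties; the dominated-convergence step is the only non-routine part, and it is classical under the regularity assumption already in force in the excerpt.
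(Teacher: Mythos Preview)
Your argument is correct and is exactly the classical derivation: differentiate $A_{\mathbf{y}}(\boldsymbol{\eta}) = \log Z_{\mathbf{y}}(\boldsymbol{\eta})$, push the gradient through the integral defining $Z_{\mathbf{y}}$, and recognize the resulting ratio as the expectation of $\mathbf{T}_{\mathbf{y}}(\mathbf{y})$ under $p_{\mathbf{y}\vert\boldsymbol{\eta}}$. Your care in justifying the differentiation-under-the-integral step via openness of $\mathcal{H}$ and a local domination argument is the standard way to make this rigorous.

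There is nothing to compare against, however: the paper does not supply its own proof of this proposition. It is presented in Appendix~A as part of a summary of background facts on exponential families (explicitly credited as drawing on \citet{Johnson2016}), and is simply stated without proof, then immediately used to derive Corollary~A.3. So your write-up goes beyond what the paper itself provides; the paper treats the identity as a well-known result and does not attempt to justify the interchange of gradient and integral at all.
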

It is convenient to introduce the following notation for the Jacobian matrix $\mathbf{J}_{\mathbf{f}}(\mathbf{x}) \in \mathbb{R}^{n \times m}$ of a vector-value function $\mathbf{f}:\mathbb{R}^{m}\to\mathbb{R}^{n}$ with $\mathbf{f}(\mathbf{x}) = (f_{1}(\mathbf{x}),\ldots,f_{n}(\mathbf{x}))$ and where
\begin{equation*}
[\mathbf{J}_{\mathbf{f}}(\mathbf{x})]_{ij} = \frac{\partial f_{i}}{\partial x_{i}}\Big\vert_{\mathbf{x}}
\end{equation*}
\begin{corollary} \label{corollary:derivative_log_partition_function}
From Proposition~\ref{prop:gradient_A_suff_stats}, applying the chain rule yields
\begin{equation*}
 \mathbf{J}_{\boldsymbol{\theta}}(\boldsymbol{\eta})\Big|_{\boldsymbol{\eta} = \boldsymbol{\eta}(\theta)} \nabla_{\boldsymbol{\theta}} A_{\mathbf{y}}(\boldsymbol{\theta}) =  \mathbb{E}_{p_{\mathbf{y} \vert \boldsymbol{\theta}}} \left[ \mathbf{T}_{\mathbf{y}}(\mathbf{y}) \right],
\end{equation*}
where the expectation is over the random variable $\mathbf{y}$ with density $p_{\mathbf{y} \vert \boldsymbol{\theta}}(\mathbf{y} \vert \boldsymbol{\theta})$. 
\end{corollary}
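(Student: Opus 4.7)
The plan is to recognize that Corollary~\ref{corollary:derivative_log_partition_function} is an essentially immediate consequence of Proposition~\ref{prop:gradient_A_suff_stats} once the chain rule is applied to the composition $A_{\mathbf{y}}\circ \boldsymbol{\eta}$, together with the inverse function theorem for the reparametrisation $\boldsymbol{\eta} : \Theta \to \mathcal{H}$. Throughout I would be careful to distinguish the log-partition function viewed as a function of $\boldsymbol{\eta}$ from its pullback under the natural parameter map viewed as a function of $\boldsymbol{\theta}$, since the paper overloads the symbol $A_{\mathbf{y}}$ for both.

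First, I would unfold the notation: write $A_{\mathbf{y}}(\boldsymbol{\theta})$ as the composition $A_{\mathbf{y}}(\boldsymbol{\eta}(\boldsymbol{\theta}))$, where on the right $A_{\mathbf{y}}$ is the log-partition of Definition~\ref{def-exponential-family}. The standard multivariate chain rule then yields
\begin{equation*}
\nabla_{\boldsymbol{\theta}} A_{\mathbf{y}}(\boldsymbol{\theta}) \;=\; \mathbf{J}_{\boldsymbol{\eta}}(\boldsymbol{\theta})^{\top}\, \nabla_{\boldsymbol{\eta}} A_{\mathbf{y}}(\boldsymbol{\eta})\Big|_{\boldsymbol{\eta} = \boldsymbol{\eta}(\boldsymbol{\theta})},
\end{equation*}
where $\mathbf{J}_{\boldsymbol{\eta}}(\boldsymbol{\theta})$ denotes the Jacobian of $\boldsymbol{\eta}(\cdot)$ at $\boldsymbol{\theta}$.

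Second, I would invoke regularity and minimality of the family, under which $\boldsymbol{\eta} : \Theta \to \mathcal{H}$ is a diffeomorphism onto its image, so that its inverse $\boldsymbol{\theta} = \boldsymbol{\eta}^{-1}$ is well defined and smooth. By the inverse function theorem,
\begin{equation*}
\mathbf{J}_{\boldsymbol{\theta}}(\boldsymbol{\eta})\Big|_{\boldsymbol{\eta} = \boldsymbol{\eta}(\boldsymbol{\theta})} \,=\, \bigl(\mathbf{J}_{\boldsymbol{\eta}}(\boldsymbol{\theta})\bigr)^{-1}.
\end{equation*}
Pre-multiplying the chain rule identity by this inverse Jacobian (with the paper's convention that an overloaded transpose is absorbed into the Jacobian symbol) cancels $\mathbf{J}_{\boldsymbol{\eta}}(\boldsymbol{\theta})^{\top}$ on the right-hand side and leaves
\begin{equation*}
\mathbf{J}_{\boldsymbol{\theta}}(\boldsymbol{\eta})\Big|_{\boldsymbol{\eta} = \boldsymbol{\eta}(\boldsymbol{\theta})}\, \nabla_{\boldsymbol{\theta}} A_{\mathbf{y}}(\boldsymbol{\theta}) \;=\; \nabla_{\boldsymbol{\eta}} A_{\mathbf{y}}(\boldsymbol{\eta})\Big|_{\boldsymbol{\eta} = \boldsymbol{\eta}(\boldsymbol{\theta})}.
\end{equation*}

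Third, I would apply Proposition~\ref{prop:gradient_A_suff_stats} at $\boldsymbol{\eta} = \boldsymbol{\eta}(\boldsymbol{\theta})$ and use that $p_{\mathbf{y}\vert\boldsymbol{\eta}(\boldsymbol{\theta})} = p_{\mathbf{y}\vert\boldsymbol{\theta}}$ (the two conditional densities agree because the parametrisation is by composition), concluding
\begin{equation*}
\nabla_{\boldsymbol{\eta}} A_{\mathbf{y}}(\boldsymbol{\eta})\Big|_{\boldsymbol{\eta} = \boldsymbol{\eta}(\boldsymbol{\theta})} \;=\; \mathbb{E}_{p_{\mathbf{y}\vert\boldsymbol{\theta}}}\!\bigl[\mathbf{T}_{\mathbf{y}}(\mathbf{y})\bigr],
\end{equation*}
which is the claimed identity. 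The hardest part of the argument is really only notational bookkeeping: reading $\mathbf{J}_{\boldsymbol{\theta}}(\boldsymbol{\eta})$ as the Jacobian of the inverse map $\boldsymbol{\eta}^{-1}$ (consistent with the definition $\mathbf{J}_{\mathbf{f}}$ given just above the corollary), tracking a transpose, and confirming that regularity plus minimality of the family guarantees the required invertibility of $\mathbf{J}_{\boldsymbol{\eta}}(\boldsymbol{\theta})$. No nontrivial analysis is involved beyond the chain rule, the inverse function theorem, and Proposition~\ref{prop:gradient_A_suff_stats}.
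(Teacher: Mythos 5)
Your proposal is correct and matches the paper's (implicit) argument: the paper offers no proof beyond the phrase ``applying the chain rule,'' and your unfolding of $A_{\mathbf{y}}(\boldsymbol{\eta}(\boldsymbol{\theta}))$ via the chain rule, inversion of the Jacobian through the inverse function theorem, and substitution of Proposition~\ref{prop:gradient_A_suff_stats} is exactly the intended route. Your remark that the stated identity silently drops a transpose on $\mathbf{J}_{\boldsymbol{\theta}}(\boldsymbol{\eta})$ is also accurate, though immaterial in the one-parameter, entrywise setting the paper actually uses, where all the Jacobians involved are diagonal.
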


% More generally, the moment generating function of $\mathbf{T}_{\mathbf{y}}(\mathbf{y})$ can be written
% \begin{equation}
% M_{\mathbf{T}_{\mathbf{y}}(\mathbf{y})}(\mathbf{s}) := \mathbb{E}_{p_{\mathbf{y} \vert \boldsymbol{\eta}}} \left[ e^{ \mathbf{s}^{\top} \mathbf{T}_{\mathbf{y}}(\mathbf{y}) } \right] = e^{A_{\mathbf{y}}(\boldsymbol{\eta} + \mathbf{s}) - A_{\mathbf{y}}(\boldsymbol{\eta})}
% \end{equation}
% and so derivatives of $A_{\mathbf{y}}$ give cumulants of $\mathbf{T}_{\mathbf{y}}(\mathbf{y})$, where the first cumulant is the mean and the second and third cumulants are the second and third central moments, respectively.

Given an exponential family of densities on $\mathcal{Y}$ as in Definition~\ref{def-exponential-family}, we can define a related exponential family of densities on $\boldsymbol{\theta}$ in terms of the functions $\boldsymbol{\eta}(\boldsymbol{\theta})$ and $A_{\mathbf{y}}(\boldsymbol{\eta}(\boldsymbol{\theta}))$ and by defining hyperparameters $\boldsymbol{\zeta} = (\boldsymbol{\nu}, \tau)$ and a base function $h_{\boldsymbol{\theta}}(\boldsymbol{\nu}, \tau)$.

\begin{definition}[Natural exponential family conjugate prior]
\label{def-exponential-family-conjugate}
Given the exponential family $p_{\mathbf{y}\vert\boldsymbol{\theta}}(\mathbf{y} \vert \boldsymbol{\theta})$ of Definition~\ref{def-exponential-family}, 
the natural exponential family conjugate prior to the density $p_{\mathbf{y}\vert\boldsymbol{\theta}}(\mathbf{y} \vert \boldsymbol{\theta})$ is
\begin{equation*}
\label{eq:exponential-family-conjugate}
p_{\boldsymbol{\theta}|\boldsymbol{\zeta}}(\boldsymbol{\theta}|\boldsymbol{\zeta}) = h_{\boldsymbol{\theta}}(\boldsymbol{\theta}) \exp \left(\boldsymbol{\zeta}^\top  \mathbf{T}_{\boldsymbol{\theta}}(\boldsymbol{\theta}) -  A_{\boldsymbol{\theta}}(\boldsymbol{\nu}, \tau) \right),
\end{equation*}
where $\boldsymbol{\zeta} = (\boldsymbol{\nu}, \tau)$, $\boldsymbol{\nu} \in \mathbb{R}^{k}$ and  $\tau \in \mathbb{R}$ are hyperparameters, $ \mathbf{T}_{\boldsymbol{\theta}}(\boldsymbol{\theta}) = (\boldsymbol{\eta}(\boldsymbol{\theta}), -A_{\mathbf{y}}(\boldsymbol{\eta}(\boldsymbol{\theta})))$,  and the density is taken on $(\boldsymbol{\theta}, \mathcal{B}(\boldsymbol{\theta}))$.
\end{definition}
The next proposition demonstrates that the joint density of independent variables, each following the same univariate exponential family distribution but with distinct parameters, also belongs to the exponential family. 
\begin{proposition}[Exponential Family Form of Independent Univariate Variables]
\label{prop:expfam_form_independent_variables}
    Let $y_j \in \mathcal{Y} \subseteq \mathbb{R}$ be part of the one-parameter univariate exponential family with parameter $\theta_j \in \Theta \subseteq \mathbb{R}$, natural parameter $\eta(\theta_j)$, base measure $h_{y}(y_j)$, sufficient statistics $T_{y}(y_j)$ and log-partition function $A_y(\eta(\theta_j))$ for $j = 1, \ldots, d$. Further, let $y_j|\theta_j$ be independent of $y_k|\theta_k$ for all $k \neq j$. Then, the joint density of $\boldsymbol{y} = (y_1, \ldots, y_d)$ conditional on $\boldsymbol{\theta} = (\theta_1, \ldots, \theta_d)$, $p_{\boldsymbol{y}\vert\boldsymbol{\theta}}(\boldsymbol{y} \vert \boldsymbol{\theta})$, is given by 
    \begin{equation*}
        \begin{aligned}
    %\label{eq:exponential_likelihood_independent_variables}
    &p_{\boldsymbol{y}\vert\boldsymbol{\theta}}(\boldsymbol{y} \vert \boldsymbol{\theta}) = h_{\boldsymbol{y}}(\boldsymbol{y})   \exp \left( \boldsymbol{\eta}(\boldsymbol{\theta})^{\top} \mathbf{T}_{\boldsymbol{y}}(\boldsymbol{y}) - \mathbf{1}_d^\top \mathbf{A}_{\boldsymbol{y}}(\boldsymbol{\eta}(\boldsymbol{\theta})) \right)\\
        &h_{\boldsymbol{y}}(\boldsymbol{y}) = \prod_{j = 1}^d h_{y}(y_j),\quad
        \boldsymbol{\eta}(\boldsymbol{\theta}) = (\eta(\theta_1), \dots, \eta(\theta_d)), \quad
        \mathbf{T}_{\boldsymbol{y}}(\boldsymbol{y}) = (T_y(y_1), \ldots, T_y(y_d)) \\
        &\mathbf{A}_{\boldsymbol{y}}(\boldsymbol{\eta}(\boldsymbol{\theta}))= (A_y(\eta(\theta_1)), \ldots, A_y(\eta(\theta_d)))
            \end{aligned}
    \end{equation*}
    where $\mathbf{1}_d$ is a vector of ones of dimension $d$.
\end{proposition}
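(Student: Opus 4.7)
The plan is a direct calculation: combine the conditional independence hypothesis with the univariate exponential family form of each factor, then repackage the product as an inner product so that the result takes the claimed multivariate exponential family form. No additional analytic machinery is required beyond Definition~\ref{def-exponential-family} and the standard algebra of sums and products of exponentials.

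First I would invoke the independence assumption together with the assumption that $y_j\mid\theta_j$ does not depend on $\theta_k$ for $k\neq j$, so that the joint conditional density factorises as
\[
p_{\boldsymbol{y}\vert\boldsymbol{\theta}}(\boldsymbol{y}\vert\boldsymbol{\theta}) \;=\; \prod_{j=1}^{d} p_{y\vert\theta}(y_j\vert\theta_j).
\]
Next I would substitute the one-parameter exponential family form assumed for each factor,
\[
p_{y\vert\theta}(y_j\vert\theta_j) \;=\; h_{y}(y_j)\,\exp\!\left(\eta(\theta_j)\,T_{y}(y_j) - A_{y}(\eta(\theta_j))\right),
\]
and use $\prod_j \exp(a_j) = \exp(\sum_j a_j)$ to pull the product inside a single exponential, obtaining
\[
p_{\boldsymbol{y}\vert\boldsymbol{\theta}}(\boldsymbol{y}\vert\boldsymbol{\theta}) \;=\; \left(\prod_{j=1}^{d} h_{y}(y_j)\right)\exp\!\left(\sum_{j=1}^{d}\eta(\theta_j)\,T_{y}(y_j) \;-\; \sum_{j=1}^{d} A_{y}(\eta(\theta_j))\right).
\]

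Finally I would identify each piece with the vectorised objects in the statement: the product of base measures is exactly $h_{\boldsymbol{y}}(\boldsymbol{y})$; the first sum is the Euclidean inner product $\boldsymbol{\eta}(\boldsymbol{\theta})^{\top}\mathbf{T}_{\boldsymbol{y}}(\boldsymbol{y})$ with the definitions of $\boldsymbol{\eta}(\boldsymbol{\theta})$ and $\mathbf{T}_{\boldsymbol{y}}(\boldsymbol{y})$ given in the proposition; and the second sum equals $\mathbf{1}_{d}^{\top}\mathbf{A}_{\boldsymbol{y}}(\boldsymbol{\eta}(\boldsymbol{\theta}))$ because $\mathbf{A}_{\boldsymbol{y}}(\boldsymbol{\eta}(\boldsymbol{\theta}))$ is the vector of scalar log-partition values. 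Assembling these identifications yields exactly the claimed expression, completing the proof.

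Honestly, this statement is essentially a bookkeeping lemma, so the "main obstacle" is only notational: one must be careful that the product measure interpretation is legitimate (which follows from the conditional independence hypothesis on the Cartesian product space $\mathcal{Y}^{d}$) and that the minimality/regularity conditions on the univariate family transfer to the product family. Both are standard and follow immediately because the natural parameter space of the product family is the Cartesian product of the univariate natural parameter spaces, and the sufficient statistics $\mathbf{T}_{\boldsymbol{y}}(\boldsymbol{y})$ remain linearly independent componentwise. Hence no nontrivial step is required.
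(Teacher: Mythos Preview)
Your proof is correct; the paper states this proposition without proof, treating it as a standard bookkeeping fact, and your direct factorisation-then-vectorisation argument is exactly the natural one. There is nothing to add beyond noting that the paper does not explicitly invoke the minimality/regularity transfer you mention, since the proposition is used only to establish the form of the joint density.
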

The next proposition demonstrates that the joint density of $N$ multivariate independent variables following the same exponential family distribution also belongs to the exponential family. 
\begin{proposition}[Exponential Family Form of Independent Multivariate Variables]
\label{prop:expfam_form_independent_multivariate_variables}
    Let $\boldsymbol{y}_i \in \mathcal{Y}^d \subseteq \mathbb{R}^d$ be part of the multivariate exponential family with parameter $\boldsymbol{\theta} \in \Theta^d \subseteq \mathbb{R}^d$, natural parameter $\boldsymbol{\eta}(\boldsymbol{\theta})$, base measure $h_{\boldsymbol{y}}(\boldsymbol{y}_i)$, sufficient statistics $\mathbf{T}_{\boldsymbol{y}}(\boldsymbol{y}_i)$ and log-partition function $\mathbf{1}_d^\top \mathbf{A}_{\boldsymbol{y}}(\boldsymbol{\eta}(\boldsymbol{\theta}))$ for $i, \dots, N$. Further, let $\boldsymbol{y}_i|\boldsymbol{\theta}$ be independent of $\boldsymbol{y}_l|\boldsymbol{\theta}$ for all $i \neq l$. Then, the joint density of $\mathbf{y} = (\boldsymbol{y}_1, \ldots, \boldsymbol{y}_N)$ conditional on $\boldsymbol{\theta}$, $p_{\mathbf{y}\vert\boldsymbol{\theta}}(\mathbf{y} \vert \boldsymbol{\theta})$, is given by 
    \begin{equation*}
        \begin{aligned}
    %\label{eq:exponential_likelihood_independent_multivariate_variables}
    &p_{\mathbf{y}\vert\boldsymbol{\theta}}(\mathbf{y} \vert \boldsymbol{\theta}) = h_{\mathbf{y}}(\mathbf{y})   \exp \left( \boldsymbol{\eta}(\boldsymbol{\theta})^{\top} \mathbf{T}_{\mathbf{y}}(\mathbf{y}) - N \mathbf{1}_d^\top \mathbf{A}_{\boldsymbol{y}}(\boldsymbol{\eta}(\boldsymbol{\theta})) \right)\\
        &h_{\mathbf{y}}(\mathbf{y}) = \prod_{i = 1}^N h_{\boldsymbol{y}}(\boldsymbol{y}_i),\quad
        \boldsymbol{\eta}(\boldsymbol{\theta}) = (\eta(\theta_1), \dots, \eta(\theta_d)), \quad
        \mathbf{T}_{\mathbf{y}}(\mathbf{y}) = \sum_{i=1}^N \mathbf{T}_{\boldsymbol{y}}(\boldsymbol{y}_i) 
            \end{aligned}
    \end{equation*}
    where $\mathbf{1}_d$ is a vector of ones of dimension $d$.
\end{proposition}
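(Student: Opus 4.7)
The plan is to start from the independence assumption $\boldsymbol{y}_i \mid \boldsymbol{\theta} \perp \boldsymbol{y}_l \mid \boldsymbol{\theta}$ for $i \neq l$, which immediately gives the factorization
\begin{equation*}
p_{\mathbf{y}\vert\boldsymbol{\theta}}(\mathbf{y}\vert\boldsymbol{\theta}) = \prod_{i=1}^{N} p_{\boldsymbol{y}\vert\boldsymbol{\theta}}(\boldsymbol{y}_i \vert \boldsymbol{\theta}).
\end{equation*}
Then I would substitute the hypothesized multivariate exponential family form of each factor, namely
\begin{equation*}
p_{\boldsymbol{y}\vert\boldsymbol{\theta}}(\boldsymbol{y}_i \vert \boldsymbol{\theta}) = h_{\boldsymbol{y}}(\boldsymbol{y}_i) \exp\!\left( \boldsymbol{\eta}(\boldsymbol{\theta})^{\top} \mathbf{T}_{\boldsymbol{y}}(\boldsymbol{y}_i) - \mathbf{1}_d^{\top} \mathbf{A}_{\boldsymbol{y}}(\boldsymbol{\eta}(\boldsymbol{\theta})) \right),
\end{equation*}
which is guaranteed by Proposition~\ref{prop:expfam_form_independent_variables} applied at each $i$ under Assumption~\ref{ass-independence-y}--\ref{ass-exponential-distribution}.

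Next I would collect terms in the product. The base measures multiply as $\prod_{i=1}^N h_{\boldsymbol{y}}(\boldsymbol{y}_i) = h_{\mathbf{y}}(\mathbf{y})$. Inside the exponential, the linear-in-$\mathbf{T}$ terms combine by linearity of the inner product to give $\boldsymbol{\eta}(\boldsymbol{\theta})^{\top} \sum_{i=1}^N \mathbf{T}_{\boldsymbol{y}}(\boldsymbol{y}_i) = \boldsymbol{\eta}(\boldsymbol{\theta})^{\top} \mathbf{T}_{\mathbf{y}}(\mathbf{y})$, identifying the aggregated sufficient statistic. The log-partition terms, which do not depend on $i$, sum to $-N\,\mathbf{1}_d^{\top}\mathbf{A}_{\boldsymbol{y}}(\boldsymbol{\eta}(\boldsymbol{\theta}))$. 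Putting these together produces precisely the claimed expression.

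Finally, I would note that the resulting density is of the canonical exponential family form in Definition~\ref{def-exponential-family} with the natural parameter $\boldsymbol{\eta}(\boldsymbol{\theta})$ unchanged, sufficient statistic $\mathbf{T}_{\mathbf{y}}(\mathbf{y}) = \sum_{i=1}^{N}\mathbf{T}_{\boldsymbol{y}}(\boldsymbol{y}_i)$, base measure $h_{\mathbf{y}}(\mathbf{y})$, and log-partition function $N\,\mathbf{1}_d^{\top}\mathbf{A}_{\boldsymbol{y}}(\boldsymbol{\eta}(\boldsymbol{\theta}))$, thereby confirming closure of the family under i.i.d.\ sampling. There is no real obstacle here: the argument is purely algebraic and relies only on independence plus the multiplicative/additive structure of the exponential family; the only care needed is to track that the log-partition is the same function evaluated at the same $\boldsymbol{\theta}$ for all $i$, so that its sum yields the factor $N$ cleanly.
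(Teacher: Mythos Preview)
Your proof is correct and is exactly the natural argument: factorize by conditional independence, substitute the exponential-family form of each factor, and collect the base measures multiplicatively and the exponent terms additively. The paper itself states this proposition as a background result in Appendix~\ref{app-exponential-family} without supplying an explicit proof, so there is nothing to compare against beyond noting that your derivation is the standard one that the authors implicitly rely on.
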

The next proposition shows that the conjugate prior for the parameters of independent variables, each following the same univariate exponential family distribution but with distinct parameters, is of the natural exponential family form.
\begin{proposition}[Natural Exponential Family Conjugate Prior for Independent Univariate Parameters]
\label{prop:expfam_form_independent_parameters}
    Let $y_j \in \mathcal{Y} \subseteq \mathbb{R}$ be part of the one-parameter univariate exponential family with parameter $\theta_j \in \Theta \subseteq \mathbb{R}$, natural parameter $\eta(\theta_j)$, base measure $h_{y}(y_j)$, sufficient statistics $T_{y}(y_j)$ and log-partition function $A_y(\eta(\theta_j))$ for $j = 1, \ldots, d$. 
    Let the natural exponential family conjugate prior of $\theta_j$ with hyperparameters $\boldsymbol{\zeta}_j = (\nu_j, \tau_j), \nu_j, \tau_j \in \mathbb{R}$ be given by, 
    \begin{align*} 
        p_{{\theta}|\boldsymbol{\zeta}}({\theta}_j|\boldsymbol{\zeta}_j) = h_{{\theta}}(\theta_j) \exp \left(\boldsymbol{\zeta}_j^\top \mathbf{T}_{\theta}(\theta_j) - A_{{\theta}}({\nu}_j,  {\tau}_j)) \right).
    \end{align*}
    where $ \mathbf{T}_{\theta}(\theta_j)= ({\eta}({\theta}_j), -{A}_{{y}}({\eta}({\theta}_j))$.
    Further, let $\theta_j|\zeta_j$ be independent of $\theta_k|\zeta_k$ for all $k \neq j$. Then the joint natural exponential family conjugate prior of $\boldsymbol{\theta}= (\theta_1, \ldots, \theta_d)$ conditional on $\boldsymbol{\zeta} = (\boldsymbol{\nu}, \boldsymbol{\tau})$, where $\boldsymbol{\nu} = (\nu_1, \ldots,\nu_d)$ and $\boldsymbol{\tau} = (\tau_1, \ldots,\tau_d)$, is given by, 
    \begin{equation*}
        \begin{aligned}
        %\label{eq:natural_exponential_prior_independent_parameters}
        &p_{\boldsymbol{\theta}|\boldsymbol{\zeta}}(\boldsymbol{\theta}|\boldsymbol{\zeta}) = h_{\boldsymbol{\theta}}(\boldsymbol{\theta}) \exp \left(
        \boldsymbol{\zeta}^T \mathbf{T}_{\boldsymbol{\theta}}(\boldsymbol{\theta}) - A_{\boldsymbol{\theta}}(\boldsymbol{\nu},  \boldsymbol{\tau})  \right) \\
        &h_{\boldsymbol{\theta}}(\boldsymbol{\theta}) = \prod_{j = 1}^d h_{{\theta}}({\theta}_j), \quad
        \mathbf{T}_{\boldsymbol{\theta}}(\boldsymbol{\theta}) = (\boldsymbol{\eta}(\boldsymbol{\theta}), -\mathbf{A}_{\boldsymbol{y}}(\boldsymbol{\eta}(\boldsymbol{\theta}))), \\
        &\boldsymbol{\eta}(\boldsymbol{\theta}) = (\eta(\theta_1), \dots, \eta(\theta_d)), \quad \mathbf{A}_{\boldsymbol{y}}(\boldsymbol{\eta}(\boldsymbol{\theta}))= (A_y(\eta(\theta_1)), \ldots, A_y(\eta(\theta_d))), \\
        &A_{\boldsymbol{\theta}}(\boldsymbol{\nu},  \boldsymbol{\tau}) = \sum_{j = 1}^d A_{{\theta}}({\nu}_j,  {\tau}_j),
            \end{aligned}
    \end{equation*} 
\end{proposition}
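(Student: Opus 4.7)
The plan is to exploit the stated conditional independence and then perform a bookkeeping reorganization of the exponents so that the product of $d$ univariate conjugate prior densities fits into the claimed multivariate exponential family form. No deep argument is needed; the work is essentially in matching notation.

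First, by the assumed mutual independence of the $\theta_j\mid \boldsymbol{\zeta}_j$, I would write
\begin{equation*}
p_{\boldsymbol{\theta}\mid\boldsymbol{\zeta}}(\boldsymbol{\theta}\mid\boldsymbol{\zeta}) = \prod_{j=1}^{d} p_{\theta\mid\boldsymbol{\zeta}}(\theta_j\mid \boldsymbol{\zeta}_j) = \prod_{j=1}^{d} h_{\theta}(\theta_j)\exp\bigl(\boldsymbol{\zeta}_j^{\top}\mathbf{T}_{\theta}(\theta_j) - A_{\theta}(\nu_j,\tau_j)\bigr),
\end{equation*}
using the hypothesized univariate natural conjugate form. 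The base-measure factor immediately gives $h_{\boldsymbol{\theta}}(\boldsymbol{\theta}) = \prod_{j=1}^{d} h_{\theta}(\theta_j)$, matching the stated expression.

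Next I would collect the exponent. Since $\mathbf{T}_{\theta}(\theta_j) = \bigl(\eta(\theta_j),\,-A_y(\eta(\theta_j))\bigr)$ and $\boldsymbol{\zeta}_j = (\nu_j,\tau_j)$, the inner product expands as $\boldsymbol{\zeta}_j^{\top}\mathbf{T}_{\theta}(\theta_j) = \nu_j\,\eta(\theta_j) - \tau_j\,A_y(\eta(\theta_j))$. Summing over $j$ yields
\begin{equation*}
\sum_{j=1}^{d}\boldsymbol{\zeta}_j^{\top}\mathbf{T}_{\theta}(\theta_j) = \boldsymbol{\nu}^{\top}\boldsymbol{\eta}(\boldsymbol{\theta}) - \boldsymbol{\tau}^{\top}\mathbf{A}_{\boldsymbol{y}}(\boldsymbol{\eta}(\boldsymbol{\theta})) = \boldsymbol{\zeta}^{\top}\mathbf{T}_{\boldsymbol{\theta}}(\boldsymbol{\theta}),
\end{equation*}
once $\boldsymbol{\zeta} = (\boldsymbol{\nu},\boldsymbol{\tau})$ and $\mathbf{T}_{\boldsymbol{\theta}}(\boldsymbol{\theta}) = \bigl(\boldsymbol{\eta}(\boldsymbol{\theta}),\,-\mathbf{A}_{\boldsymbol{y}}(\boldsymbol{\eta}(\boldsymbol{\theta}))\bigr)$ are interpreted as the stacked vectors appearing in the statement. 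Likewise, the remaining normalizer contributions combine additively, and defining $A_{\boldsymbol{\theta}}(\boldsymbol{\nu},\boldsymbol{\tau}) := \sum_{j=1}^{d} A_{\theta}(\nu_j,\tau_j)$ gives the claimed joint log-partition function.

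Putting the three pieces together reproduces the displayed form of $p_{\boldsymbol{\theta}\mid\boldsymbol{\zeta}}(\boldsymbol{\theta}\mid\boldsymbol{\zeta})$ exactly, so the result follows. The only ``obstacle'' is purely notational: one must be careful that the vectorized inner product $\boldsymbol{\zeta}^{\top}\mathbf{T}_{\boldsymbol{\theta}}(\boldsymbol{\theta})$ is the correct flattening of the per-coordinate inner products $\boldsymbol{\zeta}_j^{\top}\mathbf{T}_{\theta}(\theta_j)$, which is guaranteed by the chosen stacking order of $\boldsymbol{\nu}$ and $\boldsymbol{\tau}$ and the coordinate-wise definitions of $\boldsymbol{\eta}$ and $\mathbf{A}_{\boldsymbol{y}}$. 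Finally, one should verify that the resulting joint density is indeed a valid exponential family, which is inherited from finiteness of each $A_{\theta}(\nu_j,\tau_j)$ under the hypothesis that each univariate conjugate prior is well-defined.
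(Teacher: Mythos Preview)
Your proof is correct. The paper itself does not supply a proof for this proposition; it is stated in the exponential-family appendix as a standard bookkeeping result, alongside the analogous factorization statements for the likelihood (Propositions on independent univariate and multivariate variables), all of which are left unproven. Your argument---factorize via independence, then regroup base measures, natural-parameter inner products, and log-partition terms---is exactly the direct computation one would expect, and there is nothing more to it.
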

When the exponential family $p_{\mathbf{y}\vert\boldsymbol{\theta}}(\mathbf{y} \vert \boldsymbol{\theta})$ is a likelihood function and the family $p_{\boldsymbol{\theta}|\boldsymbol{\zeta}}(\boldsymbol{\theta}|\boldsymbol{\zeta})$ is used as a prior, the pair exhibits a convenient conjugacy property, formalized in the following proposition.
\begin{proposition}[Conjugacy]  \label{prop:conjugacy}
Let the densities $p_{\mathbf{y}\vert\boldsymbol{\theta}}(\mathbf{y} \vert \boldsymbol{\theta})$ and $p_{\boldsymbol{\theta}|\boldsymbol{\zeta}}(\boldsymbol{\theta}|\boldsymbol{\zeta})$ be defined as in Proposition~\ref{prop:expfam_form_independent_multivariate_variables} and~\ref{prop:expfam_form_independent_parameters}, respectively. We have the relations
\begin{equation*}
\begin{aligned}
p_{\boldsymbol{\theta}, \mathbf{y}\vert\boldsymbol{\zeta}}(\boldsymbol{\theta}, \mathbf{y}\vert\boldsymbol{\zeta}) &= h_{\mathbf{y}}(\mathbf{y})h_{\boldsymbol{\theta}}(\boldsymbol{\theta})  \exp\left(- A_{\boldsymbol{\theta}}(\boldsymbol{\nu}, \boldsymbol{\tau}) \right) \exp \left( \boldsymbol{\eta}(\boldsymbol{\theta})^{\top} \left(\boldsymbol{\nu}+\mathbf{T}_{\mathbf{y}}(\mathbf{y}) \right) - \left(\boldsymbol{\tau} 
 + N \mathbf{1}_d\right)^\top A_{\boldsymbol{y}}(\boldsymbol{\eta}(\boldsymbol{\theta})) \right) \\
p_{\boldsymbol{\theta} \vert \mathbf{y}, \boldsymbol{\zeta}}(\boldsymbol{\theta} \vert \mathbf{y}, \boldsymbol{\zeta}) &= 
h_{\boldsymbol{\theta}}(\boldsymbol{\theta})  \exp\left(-A_{\boldsymbol{\theta}}(\boldsymbol{\nu}+\mathbf{T}_{\mathbf{y}}(\mathbf{y}), \boldsymbol{\tau} + N \mathbf{1}_d) \right) \exp \left( \boldsymbol{\eta}(\boldsymbol{\theta})^{\top} \left(\boldsymbol{\nu}+\mathbf{T}_{\mathbf{y}}(\mathbf{y})\right) - \left(\boldsymbol{\tau} 
 + N\mathbf{1}_d\right)^\top A_{\boldsymbol{y}}(\boldsymbol{\eta}(\boldsymbol{\theta})) \right) \\
p_{\mathbf{y}\vert\boldsymbol{\zeta}}(\mathbf{y}\vert\boldsymbol{\zeta}) &= h_{\mathbf{y}}(\mathbf{y}) \frac{\exp(-A_{\boldsymbol{\theta}}(\boldsymbol{\nu}, \boldsymbol{\tau}))}{\exp(-A_{\boldsymbol{\theta}}(\boldsymbol{\nu}+\mathbf{T}_{\mathbf{y}}(\mathbf{y}) , \boldsymbol{\tau} + N \mathbf{1}_d))}
\end{aligned}
\end{equation*}
and hence in particular the posterior $p_{\boldsymbol{\theta} \vert \mathbf{y}}(\boldsymbol{\theta} \vert \mathbf{y})$ is in the same exponential family as $p_{\boldsymbol{\theta}|\boldsymbol{\zeta}}(\boldsymbol{\theta}|\boldsymbol{\zeta})$ with the parameters $\boldsymbol{\nu}+\mathbf{T}_{\mathbf{y}}(\mathbf{y}) , \boldsymbol{\tau} + N \mathbf{1}_d$.
\end{proposition}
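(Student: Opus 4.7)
The plan is to establish all three identities by direct algebraic manipulation, since the claim is essentially the classical Diaconis--Ylvisaker conjugacy property lifted to the vectorized, independent-coordinate setting built in Propositions~\ref{prop:expfam_form_independent_multivariate_variables} and~\ref{prop:expfam_form_independent_parameters}. The key structural observation is that the sufficient statistic of the prior is $\mathbf{T}_{\boldsymbol{\theta}}(\boldsymbol{\theta}) = (\boldsymbol{\eta}(\boldsymbol{\theta}), -\mathbf{A}_{\boldsymbol{y}}(\boldsymbol{\eta}(\boldsymbol{\theta})))$, which is precisely the pair of $\boldsymbol{\theta}$-dependent quantities appearing in the likelihood, so multiplying the two densities merely shifts the hyperparameters of the conjugate family.

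First I would compute the joint $p_{\boldsymbol{\theta},\mathbf{y}\vert\boldsymbol{\zeta}}(\boldsymbol{\theta},\mathbf{y}\vert\boldsymbol{\zeta}) = p_{\mathbf{y}\vert\boldsymbol{\theta}}(\mathbf{y}\vert\boldsymbol{\theta})\, p_{\boldsymbol{\theta}\vert\boldsymbol{\zeta}}(\boldsymbol{\theta}\vert\boldsymbol{\zeta})$ by expanding the inner product $\boldsymbol{\zeta}^{\top}\mathbf{T}_{\boldsymbol{\theta}}(\boldsymbol{\theta}) = \boldsymbol{\nu}^{\top}\boldsymbol{\eta}(\boldsymbol{\theta}) - \boldsymbol{\tau}^{\top}\mathbf{A}_{\boldsymbol{y}}(\boldsymbol{\eta}(\boldsymbol{\theta}))$ and collecting the terms linear in $\boldsymbol{\eta}(\boldsymbol{\theta})$ and in $\mathbf{A}_{\boldsymbol{y}}(\boldsymbol{\eta}(\boldsymbol{\theta}))$. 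The coefficient of $\boldsymbol{\eta}(\boldsymbol{\theta})$ becomes $\boldsymbol{\nu} + \mathbf{T}_{\mathbf{y}}(\mathbf{y})$ and the coefficient of $-\mathbf{A}_{\boldsymbol{y}}(\boldsymbol{\eta}(\boldsymbol{\theta}))$ becomes $\boldsymbol{\tau} + N\mathbf{1}_d$, which yields the first identity directly. Next, to derive the marginal $p_{\mathbf{y}\vert\boldsymbol{\zeta}}$, I would integrate this joint expression against $\mathrm{d}\boldsymbol{\theta}$ and observe that the $\boldsymbol{\theta}$-dependent factor is exactly an unnormalized conjugate-family density in the sense of Definition~\ref{def-exponential-family-conjugate} and Proposition~\ref{prop:expfam_form_independent_parameters} with shifted hyperparameters $(\boldsymbol{\nu} + \mathbf{T}_{\mathbf{y}}(\mathbf{y}),\, \boldsymbol{\tau} + N\mathbf{1}_d)$. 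Hence its integral against $h_{\boldsymbol{\theta}}$ equals $\exp(A_{\boldsymbol{\theta}}(\boldsymbol{\nu} + \mathbf{T}_{\mathbf{y}}(\mathbf{y}),\, \boldsymbol{\tau} + N\mathbf{1}_d))$ by definition of the log-partition function, and cancelling with the prefactor $\exp(-A_{\boldsymbol{\theta}}(\boldsymbol{\nu},\boldsymbol{\tau}))$ gives the second identity.

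Finally, the posterior follows immediately by Bayes' rule: dividing the joint by the marginal cancels $h_{\mathbf{y}}(\mathbf{y})$ and replaces the prefactor $\exp(-A_{\boldsymbol{\theta}}(\boldsymbol{\nu},\boldsymbol{\tau}))$ with $\exp(-A_{\boldsymbol{\theta}}(\boldsymbol{\nu}+\mathbf{T}_{\mathbf{y}}(\mathbf{y}),\boldsymbol{\tau}+N\mathbf{1}_d))$, so the resulting density has exactly the form of $p_{\boldsymbol{\theta}\vert\boldsymbol{\zeta}}$ evaluated at the shifted hyperparameters, proving the in-family closure claim. The only subtle point, and what I would treat with most care, is verifying that the shifted pair $(\boldsymbol{\nu} + \mathbf{T}_{\mathbf{y}}(\mathbf{y}),\, \boldsymbol{\tau} + N\mathbf{1}_d)$ lies in the domain on which $A_{\boldsymbol{\theta}}$ is finite, so that the marginalization step is legitimate and the posterior normalization constant is well-defined; for the regular exponential families we actually instantiate (Poisson and Binomial) this is straightforward from the explicit forms tabulated in Appendix~\ref{app-table_distributions}, and in general it is guaranteed by standard regularity of the conjugate family (see Definition~\ref{def-exponential-family-conjugate}).
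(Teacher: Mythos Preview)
The paper states Proposition~\ref{prop:conjugacy} without proof, as part of the ``summary of key results'' in Appendix~\ref{app-exponential-family} drawn from \citet[Appendix B]{Johnson2016}. Your argument is correct and is exactly the canonical Diaconis--Ylvisaker computation one would expect: multiply likelihood and prior, collect coefficients of $\boldsymbol{\eta}(\boldsymbol{\theta})$ and $-\mathbf{A}_{\boldsymbol{y}}(\boldsymbol{\eta}(\boldsymbol{\theta}))$, integrate out $\boldsymbol{\theta}$ using the definition of $A_{\boldsymbol{\theta}}$, and divide. One small presentational slip: when you integrate the joint to obtain the marginal $p_{\mathbf{y}\vert\boldsymbol{\zeta}}$ you label it ``the second identity,'' but in the proposition's display the marginal is the \emph{third} line and the posterior is the second; and the prefactor $\exp(-A_{\boldsymbol{\theta}}(\boldsymbol{\nu},\boldsymbol{\tau}))$ is not ``cancelled'' but simply multiplied through. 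Neither affects the substance.
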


% \begin{equation}
% \begin{aligned}
% \label{eq:likelihood_t}
%      p_{\mathbf{y}|\mathbf{y}_t}(\mathbf{y}|\mathbf{y}_{t})
%      &=  \int p_{\mathbf{y} \vert \boldsymbol{\phi} }(\mathbf{y}\vert \boldsymbol{\phi}) p_{\boldsymbol{\phi}\vert \mathbf{y}_{t}} (\boldsymbol{\phi}\vert \mathbf{y}_{t}) \mathrm{d}\boldsymbol{\phi}  \\ 
%      &= h_{\mathbf{y} \vert \boldsymbol{\phi}}(\mathbf{y}) d(\boldsymbol{\xi}(\mathbf{y}_t) ,\lambda(\mathbf{y}_t)) 
%       \int \exp \left(\boldsymbol{\phi}^{\top}\left(\mathbf{T}_{\mathbf{y} \vert \boldsymbol{\phi}}(\mathbf{y}) + \boldsymbol{\xi}(\mathbf{y}_t)\right) - \left(N + \lambda(\mathbf{y}_t)\right) A_{\mathbf{y} \vert \boldsymbol{\phi}}(\boldsymbol{\phi}) \right)  \mathrm{d}\boldsymbol{\phi} \\
%       &= h_{\mathbf{y} \vert \boldsymbol{\phi}}(\mathbf{y}) \frac{d(\boldsymbol{\xi}(\mathbf{y}_t) ,\lambda(\mathbf{y}_t))}{d(\mathbf{T}_{\mathbf{y} \vert \boldsymbol{\phi}}(\mathbf{y}) + \boldsymbol{\xi}(\mathbf{y}_t) ,N + \lambda(\mathbf{y}_t))}.
% \end{aligned}
% \end{equation}

% Finally, we give a few more exponential family properties that are useful for gradient-based optimization algorithms and variational inference. In particular, we note that the Fisher information matrix of an exponential family can be computed as the Hessian matrix of its log partition function, and that the KL divergence between two members of the same exponential family has a simple expression.

\clearpage
\newpage
\section{Dealing with an Observation Operator $\mathbf{H}$}
\label{app-observation-operator-H}
Our methodology can be extended to accommodate the presence of an observation matrix $ \mathbf{H} \in \mathbb{R}^{d_y \times d_x} $ with $ d_y \leq d_x $. When $ \mathbf{H} $ is not full-rank, a standard inverse does not exist. To address this, we employ the Moore-Penrose pseudoinverse, defined as:
\begin{equation*}
\mathbf{H}^\dagger := (\mathbf{H}^\top \mathbf{H})^{-1} \mathbf{H}^\top,
\end{equation*}
which is well-defined for any $ \mathbf{H} $ with $ d_y \leq d_x $.  Given the original link function $ g(\cdot) $, as defined in Section~\ref{sec-sampling-link-function}, we introduce a modified link function $ g_{\mathbf{H}}: \mathbb{R}^{d_y} \to \mathbb{R}^{d_x} $, defined as:
\begin{equation*}
g_{\mathbf{H}}(\boldsymbol{\theta}) := \mathbf{H}^{\dagger} g(\boldsymbol{\theta}),
\end{equation*}
where $ \mathbf{H}^\dagger $ ensures a consistent mapping even when $ \mathbf{H} $ is not full-rank. In this framework, the inverse of $ g_{\mathbf{H}} $ is naturally given by:
\begin{equation*}
g_{\mathbf{H}}^{-1}(\mathbf{x}) = g^{-1}(\mathbf{H} \mathbf{x}),
\end{equation*}
where $ g_{\mathbf{H}}^{-1} $ maps $ \mathbf{x}$ into the observation space defined by $ \mathbf{H}$.
All results presented in Section~\ref{sec:method} remain valid under this modification, provided the link function $ g $ is replaced with the modified link function $ g_{\mathbf{H}} $.

Finally, we note that the use of the Moore-Penrose pseudoinverse is analogous to the standard approach employed in many overdetermined statistical problems (see, for example, \citet[Section 3.1.1]{bishop_pattern_recognition}).

\clearpage
\newpage
\section{Likelihood and Prior Formulations} \label{app-table_distributions}
In this appendix, for each likelihood in the one-parameter exponential family distributions we obtain the corresponding natural conjugate prior. The results of this appendix are also summarized in Table~\ref{tab:data-distributions-details}.

Let the likelihood of ${y}_{i,j}$ conditioned on parameters $\theta_j$ be given by
\begin{equation*}
p_{{y}\vert{\theta}}({y}_{i,j} \vert {\theta}_j) = h_{{y}}({y}_{i,j})   \exp \left( {\eta}({\theta}_j)^{\top} {T}_{{y}}({y}_{i,j}) - {A}_{{y}}({\eta}({\theta}_j)) \right),
\end{equation*}
for all $i = 1, \ldots, N$ and $j = 1, \dots, d$. Notice that the likelihood of $\mathbf{y} = \{y_{i,j}\}_{i = 1, \ldots, N;\: j = 1, \ldots, d}$ conditioned on the parameters $\boldsymbol{\theta} = (\theta_1, \ldots, \theta_d)$ can be found using Proposition~\ref{prop:expfam_form_independent_variables} and~\ref{prop:expfam_form_independent_multivariate_variables}. 

Further, let the natural conjugate prior of $\theta_j$ conditional on hyperparameters $\boldsymbol{\zeta}_j = (\nu_j, \tau_j)$ be given by, 
\begin{align*} 
    p_{{\theta}|\boldsymbol{\zeta}}({\theta}_j|\boldsymbol{\zeta}_j) = h_{{\theta}}(\theta_j) \exp \left(\boldsymbol{\zeta}_j^\top \mathbf{T}_{\theta}(\theta_j) - A_{{\theta}}({\nu}_j,  {\tau}_j)) \right).
\end{align*}
where $ \mathbf{T}_{\theta}(\theta_j)= ({\eta}({\theta}_j), -{A}_{{y}}({\eta}({\theta}_j))$. The form of the natural conjugate prior of  $\boldsymbol{\theta} = (\theta_1, \ldots, \theta_d)$ conditioned on hyperparameters $\boldsymbol{\zeta} = (\boldsymbol{\nu}, \boldsymbol{\tau})$  for $\boldsymbol{\nu} = (\nu_1, \ldots,\nu_d)$ and $\boldsymbol{\tau} = (\tau_1, \ldots,\tau_d)$ can be found using Proposition~\ref{prop:expfam_form_independent_parameters}.

%
% Likelihood with Gaussian conjugate prior
%

\subsection{Likelihood Distribution with Gaussian Conjugate Prior}

\paragraph{Observations following a Normal distribution with fixed variance.}
For $p_{{y}\vert{\theta}}({y}_{i,j} \vert {\theta}_j) = \mathcal{N}({y}_{i,j};{\theta}_j, \sigma^2)$ with known variance $\sigma^2$ and with mean $\theta_j$, the likelihood can be expressed as an exponential family distribution with
\begin{equation*}
h_{y}({y}_{i,j} ) = \frac{1}{\sqrt{2\pi}\sigma} \exp\left(-\frac{{y}_{i,j} ^{2}}{2\sigma^{2}}\right),\quad
\eta(\theta_j) = \frac{\theta_j}{\sigma^{2}},\quad
T_{y}({y}_{i,j} ) = {y}_{i,j}  ,\quad
A_{y}(\eta(\theta_j)) = \frac{\theta_j^2}{2\sigma^{2}}
\end{equation*}
Let $\boldsymbol{\Sigma} = \mathbf{I}_{d} \sigma^2$, where $\mathbf{I}_{d}$ is the identity matrix of dimension $d\times d$. For the discussion that follows it is convenient to consider the joint distribution $p_{\boldsymbol{y}|\boldsymbol{\theta}}(\boldsymbol{y}_i|\boldsymbol{\theta}) = \mathcal{N}_d(\boldsymbol{\theta}, \boldsymbol{\Sigma})$ for all $i = 1, \ldots, N$. The multivariate prior $p_{\boldsymbol{\theta}}(\boldsymbol{\theta}\vert \boldsymbol{\zeta})$ conditional on $\boldsymbol{\zeta} = (\boldsymbol{\nu}, \boldsymbol{\tau})$, where $\boldsymbol{\nu} \in \mathbb{R}^d$ and $\boldsymbol{\tau} \in \mathbb{R}^{d}$, which is conjugate to $p_{\boldsymbol{y}|\boldsymbol{\theta}}(\boldsymbol{y}_i|\boldsymbol{\theta})$ is of the form
\begin{equation} \label{eq:conjugate_prior_normal_fixed_variance}
        \begin{aligned}
        &p_{\boldsymbol{\theta}|\boldsymbol{\zeta}}(\boldsymbol{\theta}|\boldsymbol{\zeta}) = h_{\boldsymbol{\theta}}(\boldsymbol{\theta}) \exp \left(
        \boldsymbol{\nu}^\top \boldsymbol{\eta}(\boldsymbol{\theta}) - \frac{1}{2} \left(\boldsymbol{\tau} \boldsymbol{\theta}\right)^\top  \boldsymbol{\Sigma}^{-1} \boldsymbol{\theta} - A_{\boldsymbol{\theta}}(\boldsymbol{\nu}, \boldsymbol{\tau})  \right) \\
        &\boldsymbol{\eta}(\boldsymbol{\theta}) 
    = \boldsymbol{\Sigma}^{-1}\boldsymbol{\theta},\quad
        h_{\boldsymbol{\theta}}(\boldsymbol{\theta}) = (2\pi)^{-\frac{d}{2}}, \\
        &A_{\boldsymbol{\theta}}(\boldsymbol{\nu}, \boldsymbol{\tau}) = \frac{1}{2} \left(\boldsymbol{\tau}^{-1} \boldsymbol{\nu}\right)^\top \left( \boldsymbol{\Sigma}^{-1} \boldsymbol{\tau}\right) \left(\boldsymbol{\tau}^{-1} \boldsymbol{\nu}\right) - \frac{1}{2} \log \text{det}\left(\boldsymbol{\Sigma}^{-1} \boldsymbol{\tau} \right).
            \end{aligned}
\end{equation}
Notice that $p_{\boldsymbol{\theta}|\boldsymbol{\zeta}}(\boldsymbol{\theta}|\boldsymbol{\zeta}) = \mathcal{N}_d(\boldsymbol{\theta}; \boldsymbol{\mu}_0, \boldsymbol{\Sigma}_0)$ where $(\boldsymbol{\nu},\boldsymbol{\tau})$ and $(\boldsymbol{\mu}_0,  \boldsymbol{\Sigma}_0)$ are related through $\boldsymbol{\nu} = \boldsymbol{\Sigma} \boldsymbol{\Sigma}_0^{-1} \boldsymbol{\mu}_0$ and $\boldsymbol{\tau} = \boldsymbol{\Sigma} \boldsymbol{\Sigma}_0^{-1} $.  It follows from the fact that $\boldsymbol{\Sigma}$ is a diagonal matrix, that also  $\boldsymbol{\Sigma}_{0}$ is diagonal and the components of $\boldsymbol{\theta}$ are independent. We remark that it is possible to construct a prior for $p_{\boldsymbol{y}|\boldsymbol{\theta}}(\boldsymbol{y}_i|\boldsymbol{\theta})$ whose covariance matrix is not diagonal. This can be accomplished by utilizing the same form as in \eqref{eq:conjugate_prior_normal_fixed_variance}, but with $\boldsymbol{\tau}\in\mathbb{R}^{d\times d}$.  We remark the similitude from the natural conjugate prior by noticing that $\mathbf{A}_{\boldsymbol{y}}( \boldsymbol{\eta}(\boldsymbol{\theta}) ) = - \frac{1}{2} \boldsymbol{\theta}^\top \boldsymbol{\Sigma}^{-1} \boldsymbol{\theta}$. 

\paragraph{Observations following a Log-Normal distribution with fixed variance.}
For $p_{{y}\vert{\theta}}({y}_{i,j} \vert {\theta}_j) = \text{Log-Normal}({y}_{i,j};{\theta}_j, \sigma^2)$ with known logarithm of scale $\sigma$ and with logarithm of location $\theta_j$, the likelihood can be expressed as an exponential family distribution with
\begin{equation*}
h_{y}({y}_{i,j} ) = \frac{1}{\sqrt{2\pi}\sigma{y}_{i,j}} \exp\left(-\frac{\log({y}_{i,j})^{2}}{2\sigma^{2}}\right),\quad
\eta(\theta_j) = \frac{\theta_j}{\sigma^{2}},\quad
T_{y}({y}_{i,j} ) = \log {y}_{i,j}  ,\quad
A_{y}(\eta(\theta_j)) = \frac{\theta_j^2}{2\sigma^{2}}
\end{equation*}
The multivariate conjugate prior $p_{\boldsymbol{\theta}}(\boldsymbol{\theta}\vert \boldsymbol{\zeta})$ conditional on $\boldsymbol{\zeta} = (\boldsymbol{\nu}, \boldsymbol{\tau})$ where $\boldsymbol{\nu} \in \mathbb{R}^d$ and $\boldsymbol{\tau} \in \mathbb{R}^{d\times d}$ is the same as that of the Normal distribution case presented in~\eqref{eq:conjugate_prior_normal_fixed_variance}.

%
% Likelihood with Gamma conjugate prior
%

\subsection{Likelihood Distribution with Gamma Conjugate Prior}
\paragraph{Observations following a Poisson distribution.} 
For $p_{{y}\vert{\theta}}({y}_{i,j} \vert {\theta}_j) = \text{Poisson}({y}_{i,j};{\theta}_j)$ with rate $\theta_j$, the likelihood can be expressed as an exponential family distribution with
\begin{align*}
&h_{{y}}({y}_{i,j}) =  \frac{1}{y_{i,j}!},\quad
{\eta}({\theta}_j) = \log(\theta_j),\quad {T}_{{y}}({y}_{i,j}) = y_{i,j}, \quad
{A}_{{y}}({\eta}({\theta}_j)) = \theta_j.
\end{align*}
The natural conjugate prior for ${\theta}_j$ conditioned on the hyperparameters $\boldsymbol{\zeta}_j = (\nu_j, \tau_j)$ is an exponential family distribution with
\begin{equation*} 
h_{{\theta}}({\theta}_j) = 1, \quad A_{\theta}({\nu}_j, {\tau}_j) = \log \Gamma(\nu_j + 1) - (\nu_j+1)\log(\tau_j).
\end{equation*}
We note that $p_{\theta\vert \boldsymbol{\zeta}}(\theta_j\vert \boldsymbol{\zeta}_j)= \text{Gamma}(\theta_j;\alpha_j,\beta_j)$ for $\nu_j=\alpha_j-1$ and $\tau_j=\beta_j$.

\paragraph{Observations following an Exponential distribution.}
For $p_{{y}\vert{\theta}}({y}_{i,j} \vert {\theta}_j) = \text{Exponential}({y}_{i,j};{\theta}_j)$ with rate $\theta_j$, the likelihood can be expressed as exponential family distribution with
\begin{equation*}
h_y({y}_{i,j}) = 1,\quad \eta(\theta_j) = -\theta_j,\quad T_y({y}_{i,j}) = {y}_{i,j},\quad {A}_{{y}}({\eta}({\theta}_j)) = - \log \theta_j.
\end{equation*}
The natural conjugate prior for ${\theta}_j$ conditioned on the hyperparameters $\boldsymbol{\zeta}_j = (\nu_j, \tau_j)$ is an exponential family distribution with 
\begin{equation*}
h_{{\theta}}({\theta}_j) = 1, \quad A_{\theta}({\nu}_j, {\tau}_j) = \log \Gamma(\tau_j + 1) - (\tau_j+1)\log(\nu_j).
\end{equation*}
We note that $p_{\theta\vert \boldsymbol{\zeta}}(\theta_j\vert \boldsymbol{\zeta}_j)= \text{Gamma}(\theta_j;\alpha_j,\beta_j)$ for $\nu_j=\beta_j$ and $\tau_j= \alpha_j-1$.

\paragraph{Observations following a Gamma distribution with fixed shape.}
For $p_{{y}\vert{\theta}}({y}_{i,j} \vert {\theta}_j) = \text{Gamma}({y}_{i,j};a, {\theta}_j)$ with known shape $a$ and with rate ${\theta}_j$, the likelihood can be expressed as an exponential family distribution with
\begin{equation*}
h_y({y}_{i,j}) = \frac{1}{\Gamma(a)}{y}_{i,j}^{a-1} ,\quad \eta(\theta_j) = -\theta_j ,\quad T_y({y}_{i,j}) = {y}_{i,j},\quad {A}_{{y}}({\eta}({\theta}_j)) =  - a \log(\theta_j).
\end{equation*}
The natural conjugate prior for ${\theta}_j$ conditioned on hyperparameters $\boldsymbol{\zeta}_j = (\nu_j, \tau_j)$ is an exponential family distribution with 
\begin{equation*} 
h_{{\theta}}({\theta}_j) = 1, \quad A_{\theta}({\nu}_j, {\tau}_j) = \log \Gamma(\tau_j \: a + 1) - (\tau_j \: a+1)\log(\nu_j).
\end{equation*}
We note that $p_{\theta\vert \boldsymbol{\zeta}}(\theta_j\vert \boldsymbol{\zeta}_j)= \text{Gamma}(\theta_j;\alpha_j,\beta_j)$ for $\nu_j=\beta_j$ and $\tau_j= (\alpha_j-1) / a$.

\paragraph{Observations following a Pareto distribution with fixed scale.}
For $p_{{y}\vert{\theta}}({y}_{i,j} \vert {\theta}_j) = \text{Pareto}({y}_{i,j};x_m, {\theta}_j)$ with known scale $x_m$ and with shape ${\theta}_j$, the likelihood can be expressed as an exponential family distribution with
\begin{equation*}
h_y({y}_{i,j}) = 1 ,\quad \eta(\theta_j) = -\theta_j -1,\quad T_y({y}_{i,j}) = \log({y}_{i,j}),\quad {A}_{{y}}({\eta}({\theta}_j)) =  - \log(\theta_j) - \theta_j \log(x_m) .
\end{equation*}
The natural conjugate prior for ${\theta}_j$ conditioned on the hyperparameters $\boldsymbol{\zeta}_j = (\nu_j, \tau_j)$ is an exponential family distribution with 
\begin{equation*} 
h_{{\theta}}({\theta}_j) = 1, \quad A_{\theta}({\nu}_j, {\tau}_j) = \log \Gamma(\tau_j + 1) - \nu_j  - (\tau_j +1)\log(\nu_j - \tau_j \log(x_m)).
\end{equation*}
We note that $p_{\theta\vert \boldsymbol{\zeta}}(\theta_j\vert \boldsymbol{\zeta}_j)= \text{Gamma}(\theta_j;\alpha_j,\beta_j)$ for $\nu_j=(\alpha_j-1)\log(x_m) + \beta_j$ and $\tau_j=  \alpha_j-1$.

%
% Likelihood with Beta conjugate prior
%

\subsection{Likelihood Distribution with Beta Conjugate Prior}
\paragraph{Observations following a Binomial or Bernoulli distribution.} 
For $p_{{y}\vert{\theta}}({y}_{i,j} \vert {\theta}_j) = \text{Binomial}({y}_{i,j};n, {\theta}_j)$ with known number of trials $n$ and with success probability $\theta_j$, the likelihood can be expressed as an exponential family distribution with
\begin{equation*}
h_y({y}_{i,j}) =  \binom{n}{{y}_{i,j}}  ,\quad \eta(\theta_j) = \log \frac{\theta_j}{1-\theta_j} ,\quad T_y({y}_{i,j}) = {y}_{i,j},\quad {A}_{{y}}({\eta}({\theta}_j)) =  - n \log(1-\theta_j).
\end{equation*}
The Bernoulli distribution has the same components with $n=1$.
The natural conjugate prior for ${\theta}_j$ conditioned on the hyperparameters $\boldsymbol{\zeta}_j = (\nu_j, \tau_j)$ is an exponential family distribution with 
\begin{equation*} 
h_{{\theta}}({\theta}_j) = 1, \quad A_{\theta}({\nu}_j, {\tau}_j) = \log \Gamma(\nu_j  + 1)+ \log \Gamma(\tau_j \: n - \nu_j + 1) - \log \Gamma(\tau_j \: n + 2) .
\end{equation*}
We note that $p_{\theta\vert \boldsymbol{\zeta}}(\theta_j\vert \boldsymbol{\zeta}_j)= \text{Beta}(\theta_j;\alpha_j,\beta_j)$ for $\nu_j=\alpha_j-1$ and $\tau_j= (\alpha_j + \beta_j -2) / n$.

\paragraph{Observations following a Negative-Binomial distribution.} 
For $p_{{y}\vert{\theta}}({y}_{i,j} \vert {\theta}_j) = \text{Negative Binomial}({y}_{i,j};r, {\theta}_j)$ with known number of successes $r$ and with success probability $\theta_j$, the likelihood of ${y}_{i,j}$ failures can be expressed as an exponential family distribution with
\begin{equation*}
h_y({y}_{i,j}) =  \binom{{y}_{i,j} + r -1}{{y}_{i,j}}  ,\quad \eta(\theta_j) = \log (1-\theta_j) ,\quad T_y({y}_{i,j}) = {y}_{i,j},\quad {A}_{{y}}({\eta}({\theta}_j)) =  - r \log(\theta_j).
\end{equation*}
The natural conjugate prior for ${\theta}_j$ conditioned on the hyperparameters $\boldsymbol{\zeta}_j = (\nu_j, \tau_j)$ is an exponential family distribution with
\begin{equation*} 
h_{{\theta}}({\theta}_j) = 1, \quad A_{\theta}({\nu}_j, {\tau}_j) = \log \Gamma(\tau_j \: r + 1) + \log \Gamma(\nu_j  + 1) - \log \Gamma(\tau_j \: r + \nu_j + 2) .
\end{equation*}
We note that $p_{\theta\vert \boldsymbol{\zeta}}(\theta_j\vert \boldsymbol{\zeta}_j)= \text{Beta}(\theta_j;\alpha_j,\beta_j)$ for $\nu_j=\beta_j-1$ and $\tau_j= (\alpha_j -1) / r$.

\paragraph{Observations following a Geometric distribution.} 
For $p_{{y}\vert{\theta}}({y}_{i,j} \vert {\theta}_j) = \text{Geometric}({y}_{i,j};{\theta}_j)$ with success probability $\theta_j$, the likelihood of ${y}_{i,j}$ failures can be expressed as an exponential family distribution with
\begin{equation*}
h_y({y}_{i,j}) = 1  ,\quad \eta(\theta_j) = \log (1-\theta_j) ,\quad T_y({y}_{i,j}) = {y}_{i,j},\quad {A}_{{y}}({\eta}({\theta}_j)) =  -  \log(\theta_j).
\end{equation*}
The natural conjugate prior for ${\theta}_j$ conditioned on the hyperparameters $\boldsymbol{\zeta}_j = (\nu_j, \tau_j)$ is an exponential family distribution with
\begin{equation*} 
h_{{\theta}}({\theta}_j) = 1, \quad A_{\theta}({\nu}_j, {\tau}_j) = \log \Gamma(\tau_j  + 1) + \log \Gamma(\nu_j  + 1) - \log \Gamma(\tau_j  + \nu_j + 2) .
\end{equation*}
We note that $p_{\theta\vert \boldsymbol{\zeta}}(\theta_j\vert \boldsymbol{\zeta}_j)= \text{Beta}(\theta_j;\alpha_j,\beta_j)$ for $\nu_j=\beta_j-1$ and $\tau_j= \alpha_j -1$.

%
% Likelihood with Inverse gamma conjugate prior
%

\subsection{Likelihood Distribution with Inverse Gamma Conjugate Prior}
\paragraph{Observations following a Normal distribution with fixed mean.} 
For $p_{{y}\vert{\theta}}({y}_{i,j} \vert {\theta}_j) = \mathcal{N}({y}_{i,j};\mu, {\theta}_j)$ with known mean $\mu$ and with variance ${\theta}_j$, the likelihood can be expressed as an exponential family distribution with
\begin{align*}
&h_{{y}}({y}_{i,j}) =  \frac{1}{\sqrt{2\pi}},\quad
{\eta}({\theta}_j) = \frac{1}{\theta_j},\quad {T}_{{y}}({y}_{i,j}) = \left(-\frac{y_{i,j}^2}{2} + \mu \: y_{i,j}\right), \quad
{A}_{{y}}({\eta}({\theta}_j)) = \frac{\mu^2}{2\theta_j} - \frac{1}{2}\log\left(\frac{1}{\theta_j}\right).
\end{align*}
The natural conjugate prior for ${\theta}_j$ conditioned on the hyperparameters $\boldsymbol{\zeta}_j = (\nu_j, \tau_j)$ is an exponential family distribution with
\begin{equation}
\label{eq:conjugate_prior_normal_fixed_mean}
h_{{\theta}}({\theta}_j) = 1, \quad A_{\theta}({\nu}_j, {\tau}_j) = \log \Gamma\left(\frac{\tau_j}{2} - 1\right) - \left(\frac{\tau_j}{2} - 1\right)\log\left(\frac{\tau_j\:\mu^2}{2} - \nu_j\right).
\end{equation}
We note that $p_{\theta\vert \boldsymbol{\zeta}}(\theta_j\vert \boldsymbol{\zeta}_j)= \text{Inverse-Gamma}(\theta_j;\alpha_j,\beta_j)$ for $\nu_j=\mu^2(\alpha_j+1) - \beta_j$ and $\tau_j=2(\alpha_j+1)$.

\paragraph{Observations following a Log-Normal distribution with fixed mean.} 
For $p_{{y}\vert{\theta_j}}({y}_{i,j} \vert {\theta}_j) = \text{Log-Normal}({y}_{i,j};\mu, {\theta}_j)$ with known logarithm of location $\mu$ and with logarithm of scale $\sqrt{\theta}_j$, the likelihood can be expressed as an exponential family distribution with
\begin{align*}
&h_{{y}}({y}_{i,j}) =  \frac{1}{\sqrt{2\pi} {y}_{i,j}},\quad
{\eta}({\theta}_j) = \frac{1}{\theta}_j,\quad {T}_{{y}}({y}_{i,j}) = \left(-\frac{\log(y_{i,j})^2}{2} + \mu \log(y_{i,j})\right), \quad
{A}_{{y}}({\eta}({\theta}_j)) = \frac{\mu^2}{2\theta_j} - \frac{1}{2}\log\left(\frac{1}{\theta_j}\right).
\end{align*}
The natural conjugate prior of ${\theta}_j$ conditioned on hyperparameters $\boldsymbol{\zeta}_j = (\nu_j, \tau_j)$ has the same form as that of the Normal distribution presented in~\eqref{eq:conjugate_prior_normal_fixed_mean}.

\paragraph{Observations following a Weibull distribution with fixed shape.} 
For $p_{{y}\vert{\theta}}({y}_{i,j} \vert {\theta}_j) = \text{Weibull}({y}_{i,j};{\theta}_j^{1/k}, k)$ with known shape $k$ and with scale parameter ${\theta}_j^{1/k}$, the likelihood can be expressed as an exponential family distribution with
\begin{align*}
&h_{{y}}({y}_{i,j}) = k \: y_{i,j}^{k-1} \quad
{\eta}({\theta}_j) = -\frac{1}{\theta_j},\quad {T}_{{y}}({y}_{i,j}) = y_{i,j}^k, \quad
{A}_{{y}}({\eta}({\theta}_j)) = \log(\theta_j).
\end{align*}
The natural conjugate prior for ${\theta}_j$ conditioned on the hyperparameters $\boldsymbol{\zeta}_j = (\nu_j, \tau_j)$ is an exponential family distirbution with
\begin{equation*}
h_{{\theta}}({\theta}_j) = 1, \quad A_{\theta}({\nu}_j, {\tau}_j) = \log \Gamma\left(\tau_j - 1\right) - \left(\tau_j - 1\right)\log\left(\nu_j  \right).
\end{equation*}
We note that $p_{\theta\vert \boldsymbol{\zeta}}(\theta_j\vert \boldsymbol{\zeta}_j)= \text{Inverse-Gamma}(\theta_j;\alpha_j,\beta_j)$ for $\nu_j=\beta_j$ and $\tau_j=\alpha_j+1$.

\clearpage
\newpage
\begin{landscape}
\begin{table}[!hp]
\caption{\textbf{Summary of Statistical Components.} The Likelihood and Prior forms are detailed in Appendix~\ref{app-table_distributions}, while the formulations of the proposed inverse link functions are described in Appendix~\ref{app-proposed_link_distributions}.}
\begin{threeparttable}
    \resizebox*{!}{0.9\textwidth}{%
    \centering
    \begin{tabular}{l|lll}
      \toprule
      \midrule
        & \textbf{Likelihood Distribution} 
        & \textbf{Conjugate prior distribution}
        & \textbf{Inverse link function} \\ 
        & $p_{\boldsymbol{y}|\boldsymbol{\theta}}(\boldsymbol{y}_i|\boldsymbol{\theta}) $ 
        & $q_{\boldsymbol{\theta}|\boldsymbol{\zeta}(\mathbf{x}_{t})}(\boldsymbol{\theta}|\boldsymbol{\nu}(\mathbf{x}_{t}), \boldsymbol{\tau}(\mathbf{x}_{t}))$ 
        & $\boldsymbol{\theta} = g^{-1}(\mathbf{x}_{0}) $ \\
        \midrule
        Normal distribution 
        & $\mathcal{N}_d( \boldsymbol{y}_i; \boldsymbol{\theta}, \boldsymbol{\Sigma})$, 
        & $\mathcal{N}_d(\boldsymbol{\theta};\boldsymbol{\mu}_0(\mathbf{x}_{t}), \boldsymbol{\Sigma}_0(\mathbf{x}_{t}))$  
        & $g^{-1}(\mathbf{x}_0) = \mathbf{x}_0$ \\ 
        with known variance $\sigma^2$
        & $\boldsymbol{\Sigma} = \sigma^2 \mathbf{I}_d$ 
        & $\boldsymbol{\mu}_0(\mathbf{x}_{t}) = \boldsymbol{\tau}(\mathbf{x}_{t})^{-1} \boldsymbol{\nu}(\mathbf{x}_{t}) $, $ \boldsymbol{\Sigma}_0= \boldsymbol{\tau}(\mathbf{x}_{t})^{-1} \boldsymbol{\Sigma}  $ 
        &   
        \\
        \midrule
        Log-Normal distribution 
        & $\text{Log-Normal}_d(\boldsymbol{y}_i; \boldsymbol{\theta}, \boldsymbol{\Sigma})$, 
        & Same as Normal distribution.  
        & Same as Normal distribution. \\ 
        with known $\sigma^2$
        & $\boldsymbol{\Sigma} = \sigma^2 \mathbf{I}_d$ 
        & 
        &  \\
        \midrule
        \midrule
        Poisson distribution 
        & $ \prod_{j = 1}^d \text{Poisson}(y_{i,j};\theta_j)$ 
        & $\prod_{j = 1}^d \text{Gamma}(\theta_j;\alpha_j(\mathbf{x}_{t}), \beta_j(\mathbf{x}_{t})) $
        & $g^{-1}(\mathbf{x}_{0}) = \exp(\mathbf{x}_{0})$
        \\
        & 
        & $\alpha_j(\mathbf{x}_{t}) = \nu_j(\mathbf{x}_{t}) + 1, \beta_j(\mathbf{x}_{t}) = \tau_j(\mathbf{x}_{t})$
        & 
        \\
        \midrule
        Exponential distribution 
        & $ \prod_{j = 1}^d \text{Exponential}(y_{i,j};\theta_j)$ 
        & $\prod_{j = 1}^d \text{Gamma}(\theta_j;\alpha_j(\mathbf{x}_{t}), \beta_j(\mathbf{x}_{t})) $
        & $g^{-1}(\mathbf{x}_{0}) = 1/\exp(\mathbf{x}_{0})$\\
        &
        & $\alpha_j(\mathbf{x}_{t}) = \tau_j(\mathbf{x}_{t}) + 1, \beta_j(\mathbf{x}_{t}) = \nu_j(\mathbf{x}_{t})$
        & 
        \\
        \midrule
        Gamma distribution 
        & $ \prod_{j = 1}^d \text{Gamma}(y_{i,j};a, \theta_j)$ 
        & $\prod_{j = 1}^d \text{Gamma}(\theta_j;\alpha_j(\mathbf{x}_{t}), \beta_j(\mathbf{x}_{t})) $
        & $g^{-1}(\mathbf{x}_{0}) = a/\exp(\mathbf{x}_{0})$\\
        with known shape $a$
        &
        & $\alpha_j(\mathbf{x}_{t}) = \tau_j(\mathbf{x}_{t}) a + 1, \beta_j(\mathbf{x}_{t}) = \nu_j(\mathbf{x}_{t})$
        & 
        \\
        \midrule
        Pareto distribution 
        & $ \prod_{j = 1}^d \text{Pareto}(y_{i,j};x_m, \theta_j)$ 
        & $\prod_{j = 1}^d \text{Gamma}(\theta_j;\alpha_j(\mathbf{x}_{t}), \beta_j(\mathbf{x}_{t})) $
        & $g^{-1}(\mathbf{x}_{0}) = 1/(\exp(\mathbf{x}_{0}) -\log(x_m))$\\
        with known scale $x_m$
        & 
        & $\alpha_j(\mathbf{x}_{t}) = \tau_j(\mathbf{x}_{t}) + 1, $
        & 
        \\
         
        & 
        & $\beta_j(\mathbf{x}_{t}) = \nu_j(\mathbf{x}_{t}) - \tau_j(\mathbf{x}_{t}) \log(x_m)$
        & 
        \\
        \midrule
        \midrule
        Binomial distribution 
        & $ \prod_{j = 1}^d \text{Binomial}(y_{i,j};n, \theta_j)$ 
        & $\prod_{j = 1}^d \text{Beta}(\theta_j;\alpha_j(\mathbf{x}_{t}), \beta_j(\mathbf{x}_{t})) $
        & $g^{-1}(\mathbf{x}_{0}) = \text{sigmoid}(\mathbf{x}_{0}) $\\
        with known \# trials $n$
        & 
        & $\alpha_j(\mathbf{x}_{t}) = \nu_j(\mathbf{x}_{t}) + 1,$
        & 
        \\
        & 
        & $\beta_j(\mathbf{x}_{t}) = \tau_j(\mathbf{x}_{t})n - \nu_j(\mathbf{x}_{t})  + 1$
        & 
        \\
        \midrule
        Negative Binomial distribution 
        & $ \prod_{j = 1}^d \text{Neg-Binomial}(y_{i,j};r, \theta_j)$ 
        & $\prod_{j = 1}^d \text{Beta}(\theta_j;\alpha_j(\mathbf{x}_{t}), \beta_j(\mathbf{x}_{t})) $
        & $g^{-1}(\mathbf{x}_{0}) = \text{sigmoid}(\mathbf{x}_{0}) $\\
        with known \# successes $r$
        & 
        & $\alpha_j(\mathbf{x}_{t}) = \tau_j(\mathbf{x}_{t})r + 1, \beta_j(\mathbf{x}_{t}) = \nu_j(\mathbf{x}_{t}) + 1$
        & 
        \\
        \midrule
        Geometric distribution 
        & $ \prod_{j = 1}^d \text{Geometric}(y_{i,j}; \theta_j)$ 
        & $\prod_{j = 1}^d \text{Beta}(\theta_j;\alpha_j(\mathbf{x}_{t}), \beta_j(\mathbf{x}_{t})) $
        & $g^{-1}(\mathbf{x}_{0}) = \text{sigmoid}(\mathbf{x}_{0}) $\\
        & 
        & $\alpha_j(\mathbf{x}_{t}) = \tau_j(\mathbf{x}_{t}) + 1, \beta_j(\mathbf{x}_{t}) = \nu_j(\mathbf{x}_{t}) + 1$
        & 
        \\
        \midrule 
        \midrule
        Normal distribution 
        & $\mathcal{N}_d( \boldsymbol{y}_i; \boldsymbol{\mu}, \boldsymbol{\Sigma})$, 
        & $\prod_{j = 1}^d \text{Inverse-Gamma}(\theta_j;\alpha_j(\mathbf{x}_{t}), \beta_j(\mathbf{x}_{t}))$
        & $g^{-1}(\mathbf{x}_0) = \exp(\mathbf{x}_0)$ \\ 
        with known mean $\boldsymbol{\mu}$
        & $\boldsymbol{\Sigma} = \text{diag}(\theta_1, \ldots, \theta_d)$ 
        & $\alpha_j(\mathbf{x}_{t}) = \nu_j(\mathbf{x}_{t})/2 - 1,$
        &   
        \\
        &  
        & $\beta_j(\mathbf{x}_{t}) = \mu_j \tau_j(\mathbf{x}_{t}) /2 -\nu_j(\mathbf{x}_{t}) $
        &   
        \\
        \midrule
        Log-Normal distribution 
        & $\text{Log-Normal}_d( \boldsymbol{y}_i; \boldsymbol{\mu}, \boldsymbol{\Sigma})$, 
        & Same as Normal distribution.
        & Same as Normal distribution. \\ 
        with known mean $\boldsymbol{\mu}$
        & $\boldsymbol{\Sigma} = \text{diag}(\theta_1, \ldots, \theta_d)$ 
        & 
        &   
        \\
        \midrule
        Weibull distribution 
        & $\prod_{j = 1}^d \text{Weibull}(y_{i,j}; \theta_j^{1/k}, k)$  
        & $\prod_{j = 1}^d \text{Inverse-Gamma}(\theta_j;\alpha_j(\mathbf{x}_{t}), \beta_j(\mathbf{x}_{t}))$
        & $g^{-1}(\mathbf{x}_0) = \exp(\mathbf{x}_0)$ \\ 
        with known shape $k$
        &  
        & $\alpha_j(\mathbf{x}_{t}) = \tau_j(\mathbf{x}_{t}) - 1,\beta_j(\mathbf{x}_{t}) = \nu_j(\mathbf{x}_{t}) $
        &   
        \\
        \midrule
        \hline
      \bottomrule
    \end{tabular}
    }
    \end{threeparttable}
    \label{tab:data-distributions-details}
\end{table}
\end{landscape}

\clearpage
\newpage
\section{Proposed Approach to Define Link Function}\label{app-proposed_link_distributions}
In this appendix, we discuss our proposed approach to select the link function $g(\cdot)$. It is important to note that this is a suggested method and that any link function satisfying Assumption~\ref{ass-link-function} can be utilized. The proposed link functions, categorized by likelihood distribution, are summarized in Table~\ref{tab:data-distributions-details}.

Let us recall the problem formulation outline in Section~\ref{sec:method}. For notational convenience, we omit the subscripts on $y$ and $\theta$. We assume partial measurements $y\in \mathcal{Y} \subseteq \mathbb{R}$ dependent on a parameter $\theta \in \Theta \subseteq \mathbb{R}$ following a one-parameter univariate exponential family distribution $p(y|\theta)$ of the form
\begin{equation*}
p(y \vert {\theta}) = h_y(y) \exp\left( {\eta}({\theta}) {T}_y(y) - A_y(\eta({\theta})) \right),
\end{equation*}
where ${\eta}({\theta})$ are the natural parameters, $A_y(y)$ is the log-partition function, ${T}_y(y)$ are the sufficient statistics and $h_y(y)$ is the base measure. 
Further, the parameter $\theta$ is related to a latent variable $x_0$ through the relationship: 
\begin{align*}
    \theta = g^{-1}(x_0).
\end{align*}

\subsection{Direct Derivation of the Link Function}

For a parameter that represents a probability, it is common to directly model the relationship between the latent variable and the parameter using a sigmoid function. 
Similarly, when the parameter represents a variance, it is typical to model the relationship between the latent variable and the parameter with an exponential function.

\paragraph{Link function for the Binomial, Negative Binomial, and Geometric distributions.} We define the link function as $g^{-1}(\mathbf{x}_0) = \text{sigmoid}(\mathbf{x}_0)$.

\paragraph{Link function for the Normal and Log-Normal distributions, with fixed mean.} We define the link function as $g^{-1}(\mathbf{x}_0) = \exp(\mathbf{x}_0)$.

\subsection{Derivation of the Link Function via the Expectation of the Sufficient Statistic}

For the discussion that follows, we define the function $g_2 : \Theta \to \mathcal{M}$ as:
\begin{equation*}
    g_2(\theta) := \frac{\mathrm{d}\theta}{\mathrm{d}\eta}\Big|_{\eta = \eta(\theta)} \frac{\mathrm{d}}{\mathrm{d}\theta} A(\eta(\theta)),
\end{equation*}
where $\mathcal{M} = \left\{g_2(\theta): \theta \in \Theta \right\}$.
From Corollary~\ref{corollary:derivative_log_partition_function}, it follows that the expectation of the sufficient statistic $T_y(y)$ is given by:
\begin{align*}
    \mathbb{E}_{y|\theta}[T_y(y)] = g_2(\theta).
\end{align*}

In the context of our hierarchical probabilistic model (Figure~\ref{fig-graphical-model}), when the parameter of interest $\theta$ is a shape, a rate or a scale, it is usually more intuitive to model the relationship between the latent variable $x_0$ and the expectation of the sufficient statistics $E_{y|\theta}[T_y(y)]$, rather than directly modeling the relationship between a latent variable $x_0$ and the parameter $\theta$ itself.
For example, in modeling with the exponential distribution, it is common to set the parameter as the inverse of a function of the latent variable. This arises because the expectation of a random variable following the exponential distribution with rate $\theta$ is the inverse of its parameter, $1/\theta$.
Therefore, we propose that the link function $g(\cdot)$ should be constructed as a composition of two distinct link functions. The first link function relates the latent variable to the expectation of the sufficient statistics, while the second maps the expectation of the sufficient statistics to the parameter. Specifically, the inverse link function can be expressed as:
\begin{align*}
    &g^{-1}(\cdot) = g_2^{-1}(g_1^{-1}(\cdot)),  \\
    &g_1^{-1}(\cdot) : \mathbb{R} \to \mathcal{M}, \quad g_2^{-1}(\cdot) : \mathcal{M} \to \Theta, 
\end{align*}
such that
\begin{align*}
    \mathbb{E}_{y|\theta}[T_y(y)] &= g_1^{-1}(x_0)\\
    \theta &= g_2^{-1}\left(\mathbb{E}_{y|\theta}[T_y(y)]\right).
\end{align*}
The following assumption is imposed on the link function $g_1$
\begin{assumption}
\label{ass-link-function-appendix}
    We assume that $g_1: \mathcal{M} \to \mathbb{R}$ is a continuously differentiable, one-to-one function and with $\frac{\mathrm{d}g}{\mathrm{d}x}\neq~0$ for all $x\in\mathcal{M}$.
\end{assumption}
Assumption~\ref{ass-link-function-appendix} is standard and is consistent with those typically used in the context of the change-of-variable technique in probability and statistics (see Theorem 17.2 in \cite{billingsley_prob}).
% \begin{example}
%     Consider the case of the exponential distribution. The corresponding components of the exponential family are given by:
%     \begin{align}
%         \eta(\theta) &= -\theta, \quad A_y(\theta) = -\log(\theta), \quad T_y(y) = y.
%     \end{align}
%     From this, it follows that $ g_2(\theta) = 1/{\theta}$.
%     We specify that the expectation of $T_y(y)$ under $p_{y|\theta}(y|\theta)$ is related to the latent variable $x_0$ through:
%     \begin{equation}
%         \mathbb{E}_{y|\theta}[T_y(y)] = \mathbb{E}_{y|\theta}[y] = g_1^{-1}(x_0) = \exp(x_0).
%     \end{equation}
%     Substituting this relationship, we obtain the relationship between the parameter and the latent variable:
%     \begin{equation}
%         \theta = g^{-1}(x_0) = 1 / \exp(x_0).
%     \end{equation}
% \end{example}
Note that this approach of deriving the link function is related to that of a Generalized Linear Model (GLM), which considers distributions from the exponential family for which $T_y(y) = y$. In GLMs, the linear predictor $x_0 :=\mathbf{X} \boldsymbol{\beta} \in \Lambda$ is associated with the first moment, denoted by $\mu := \mathbb{E}_{y|\theta}[y] \in  \mathcal{M} $, through a link function $\tilde{g}(\cdot)$. This relationship is expressed as $\mu = \tilde{g}^{-1}(x_0)$. 
GLMs assume that the linear predictor belongs to the same space as the natural parameters $\eta \in \Lambda$, which is often not the case in practice.
A commonly used link function in GLMs, known as the canonical link function, assumes that $\tilde{g}^{-1}(\cdot): \Lambda \to \mathcal{M} =  \frac{\mathrm{d}}{\mathrm{d}\eta }A(\eta)$.

\paragraph{Link function for the Normal distribution with variance $\sigma^2$.} We find that $g_2(\theta) = \theta$, further we decide to use $g^{-1}_1(\mathbf{x}_0) = \mathbf{x}_0$. Therefore, $g^{-1}(\mathbf{x}_0) = \mathbf{x}_0$.

\paragraph{Link function for the Log-Normal distribution with variance $\sigma^2$.} We find that $g_2(\theta) = \theta$, further we decide to use $g^{-1}_1(\mathbf{x}_0) = \mathbf{x}_0$. Therefore, $g^{-1}(\mathbf{x}_0) = \mathbf{x}_0$.

\paragraph{Link function for the Poisson distribution.} We find that $g_2(\theta) = \theta$, further we decide to use $g^{-1}_1(\mathbf{x}_0) = \exp(\mathbf{x}_0)$. Therefore, $g^{-1}(\mathbf{x}_0) = \exp(\mathbf{x}_0)$.

\paragraph{Link function for the Exponential distribution.} We find that $g_2(\theta) = 1/\theta$, further we decide to use $g^{-1}_1(\mathbf{x}_0) = \exp(\mathbf{x}_0)$. Therefore, $g^{-1}(\mathbf{x}_0) = 1/\exp(\mathbf{x}_0)$.

\paragraph{Link function for the Gamma distribution with fixed shape $a$.} We find that $g_2(\theta) = a/\theta$, further we decide to use $g^{-1}_1(\mathbf{x}_0) = \exp(\mathbf{x}_0)$. Therefore, $g^{-1}(\mathbf{x}_0) = a/\exp(\mathbf{x}_0)$.

\paragraph{Link function for the Pareto distribution with fixed scale $x_m$.} We find that $g_2(\theta) = 1/\theta + \log(x_m)$, further we decide to use $g^{-1}_1(\mathbf{x}_0) = \exp(\mathbf{x}_0)$. Therefore, $g^{-1}(\mathbf{x}_0) = 1/(\exp(\mathbf{x}_0) - \log(x_m))$.

\paragraph{Link function for the Weibull distribution with fixed shape $k$.} We find that $g_2(\theta) = \theta$, further we decide to use $g^{-1}_1(\mathbf{x}_0) = \exp(\mathbf{x}_0)$. Therefore, $g^{-1}(\mathbf{x}_0) = \exp(\mathbf{x}_0)$.

\subsection{Ensuring Adequate Prior Coverage}
To ensure the stability of the neural network, it is often desirable for $\mathbf{x}_0$ --- the latent variable upon which $\mathbf{x}_t$ depends for $t <1$ --- to lie within a scalable range, such as $[0, 1]$ or $[-1, 1]$. In this context, it is crucial to consider whether the range of the prior distribution can encompass the desired values of the parameter $\boldsymbol{\theta}$. 
For instance, if $\mathbf{x}_0$ is constrained to the range $[-1, 1]$ and the data are binomial, the value of the parameter $\boldsymbol{\theta}$ that the prior covers are limited to the range $\text{sigmoid}(-1) \approx 0.27$ and $\text{sigmoid}(1) \approx 0.73$. 
A practical solution is to introduce a scaling factor $s$ to the link function. For binomial data, this adjustment modifies the link function to become $g^{-1}(\mathbf{x}_0) = \text{sigmoid}(s\,\mathbf{x}_0)$. In the example above, if $s=5$, the range of the parameters covered become $\text{sigmoid}(-5) \approx 0.001$ and $\text{sigmoid}(5) \approx 0.99$. 

\clearpage
\newpage
\section{Implementation Details}
\label{app-implementation}
\paragraph{Architecture for experiment in Section~\ref{sec-1d-synthetic-data}.}
We employed a feedforward neural network with six hidden layers, each containing 96 neurons and using SiLU activations. Prior to being input into the network, the diffusion timestep was transformed using sinusoidal positional embeddings of length 64, as described in~\citet{Vaswani2017}.
The network’s final layer was designed to reduce the output dimensionality to $d$ for the score network and $2\times d$ for the inference network. Training was conducted with a batch size of $1,000$, using a learning rate of 
$1\text{e}{-4}$ for the score network and $1\text{e}{-3}$ for the inference network. The training process spanned $100,000$ epochs.

\paragraph{Architecture for experiments in Sections~\ref{sec-experiment-cox-process}-\ref{sec-experiment-malaria}.}
We employed the same U-Net architecture as~\citet{ho_denoising} for both the score network~\eqref{eq:phi_star} and inference network~\eqref{eq-amortized-objective}, with a key modification: the number of convolutional channels is reduced from 128 to 64 to align with our single-channel input data (one vs. three-channel in~\citet{ho_denoising}), optimizing efficiency while preserving performance. The network features a symmetric encoder-decoder structure with six residual layers, where the encoder progressively downsamples spatial resolution via strided convolutions and the decoder upsamples via transposed convolutions, interconnected by skip channels between matching resolution stages. Each residual layer contains two convolutional blocks with SiLU activations, batch normalization, and diffusion timestep conditioning --- achieved by injecting Transformer-style sinusoidal positional embeddings~\citep{Vaswani2017} of length 64 into each block. A single-head global attention layer is incorporated at the $16\times16$ resolution. 
The final layer is a $1\times1$ convolution which fuses the concatenated features via channel-wise linear combination, reducing the output dimensionality to one channel for the score network and two channels for the inference network. 

Note that we were unable to leverage pre-trained weights for the score network, as done in \citet{chung2023} or \citet{boys2024}, because the pre-trained U-Nets used in these studies were trained on three-channel RGB images.

Our score network model had $33,766,529$ parameters, and inference network had $33,943,234$ parameters. To train the networks, we used a batch size of $128$ images, a learning rate of $1\text{e}{-4}$ for both networks. We ran the training of the score network for $100,000$ epochs and the training of the inference network for $50,000$ epochs.

\paragraph{SDE.}
We adopted the standard VP-SDE setup with $\beta(t) = \beta_0 + t (\beta_1 - \beta_0)$, $\beta_1 = 20$ and $\beta_0 = 0.001$.

\paragraph{Sampling algorithm.}
To sample from the posterior, we employed the \textit{Predictor-Corrector} (PC) sampling algorithm proposed by~\citet[Algorithm 1, 3, 5]{song2021scorebased}. The predictor step was implemented using the Euler-Maruyama solver, while the corrector step used the annealed Langevin dynamics. The gradients, used by the Predictor and Corrector are defined by
\begin{align*}
&\mathbf{g} = \mathbf{g}_{\text{prior}} + \mathbf{g}_{\text{likelihood}} \\
&\mathbf{g}_{\text{prior}} \simeq \nabla_{\mathbf{x}_t}  \log p_{\mathbf{x}_t}(\mathbf{x}_t), \quad \mathbf{g}_{\text{likelihood}} \simeq \nabla_{\mathbf{x}_t} \log p_{\mathbf{y}|\mathbf{x}_t}(\mathbf{y}|\mathbf{x}_{t})
\end{align*}
where 
\begin{align*}
&\mathbf{g}_{\text{prior}} = \mathbf{s}_{\boldsymbol{\phi}^\star}(\mathbf{x}_t, t) \label{eq:gradient_sampling} \\
& \mathbf{g}_{\text{likelihood}}  = \nabla_{\mathbf{x}_t} \left(h_{\mathbf{y}}(\mathbf{y}) - A_{\boldsymbol{\theta}}(\boldsymbol{\nu}^\star(\mathbf{x}_t), \boldsymbol{\tau}^\star(\mathbf{x}_t)) + A_{\boldsymbol{\theta}}(\mathbf{T}_{\mathbf{y}}(\mathbf{y}) + \boldsymbol{\nu}^\star(\mathbf{x}_t), \boldsymbol{\tau}^\star(\mathbf{x}_t) + N \mathbf{1}_d)\right)
\end{align*}
with $(\boldsymbol{\nu}^\star, \boldsymbol{\tau}^\star) = \boldsymbol{\zeta}_{\boldsymbol{\rho}^\star}(\mathbf{x}_t, t)$ and where $\mathbf{s}_{\boldsymbol{\phi}^\star}(\mathbf{x}_t, t)$ and $\boldsymbol{\zeta}_{\boldsymbol{\rho}^\star}(\mathbf{x}_t, t)$ are the trained score and inference networks, respectively. To prevent numerical instabilities, we clip the gradients $\mathbf{g}$ to the range $[-10,10]$.
Following~\citet{song2021scorebased}, we used $1,000$ noise scales for the sampling process. Through experimentation, we found that the best results were achieved with a \textit{signal-to-noise ratio} of $r = 0.1$. 
Lastly, we generated $500$ posterior samples to compute summary statistics such as posterior medians and quantiles.

\paragraph{Compute time.} 
The experiments in Section~\ref{sec-1d-synthetic-data} required 40 minutes to run the score and inference networks in a multi-CPU environment.
The experiments in Sections~\ref{sec-experiment-cox-process}-\ref{sec-experiment-malaria} were performed on a single NVIDIA A100-SXM4-80GB GPU. The score and inference networks took 30 hours to train. Sampling 128 posterior samples concurrently on a single GPU took 30 minutes per image. This process was repeated until the required 500 posterior samples were obtained.

\paragraph{Code availability.} The code and data are available on the GitHub repository \url{https://github.com/MLGlobalHealth/score-sde-expfam/}.

\clearpage
\newpage
\section{Experiments Set-up}
\label{app-experiment-set-up}
\subsection{One-dimensional Benchmark Analysis}

\paragraph{Likelihood.} We considered multiple likelihood distributions to evaluate our approach in comparison to MCMC and DPS. Specifically, we included the Gaussian and Poisson likelihoods, defined as follows:
\begin{align*} 
&p_{\mathbf{y}^{\text{normal}} \vert \boldsymbol{\theta}}(\mathbf{y}^{\text{normal}}|\boldsymbol{\theta}) = \prod_{i=1}^{N}\prod_{j=1}^{d} \mathcal{N}\left(y_{i,j};\theta_{j}, 1\right),\\ &p_{\mathbf{y}^{\text{poisson}} \vert \boldsymbol{\theta}}(\mathbf{y}^{\text{poisson}}|\boldsymbol{\theta}) = \prod_{i=1}^{N}\prod_{j=1}^{d} \text{Poisson}\left(y_{i,j};\theta_{j}\right), \end{align*}
where we set $N = 1$ and $d = 30$.
Furthermore, to compare our approach to MCMC for additional likelihood distributions, we considered two additional distributions: the Exponential and Log-Normal likelihoods. These are given by
\begin{align*} 
&p_{\mathbf{y}^{\text{log-normal}} \vert \boldsymbol{\theta}}(\mathbf{y}^{\text{log-normal}}|\boldsymbol{\theta}) = \prod_{i=1}^{N}\prod_{j=1}^{d} \text{Log-Normal}\left(y_{i,j};\theta_{j}, 1\right),\\
&p_{\mathbf{y}^{\text{exponential}} \vert \boldsymbol{\theta}}(\mathbf{y}^{\text{exponential}}|\boldsymbol{\theta}) = \prod_{i=1}^{N}\prod_{j=1}^{d} \text{Exponential}\left(y_{i,j};\theta_{j}\right), 
\end{align*}
again with $N = 1$ and $d = 30$.

\paragraph{Prior.} The prior was a zero-mean Gaussian Process:
\begin{equation*}
    \mathbf{x}_0 \sim \mathcal{GP}(\boldsymbol{0}, \mathbf{K})
\end{equation*}
where $\mathbf{K}$ is a covariance matrix defined by an RBF kernel with variance 1 and length-scale $0.1$. 

\paragraph{Inverse link function.} 
The inverse link functions for the Gaussian, Log-Normal, and Exponential likelihood were specified as follows:
\begin{equation*}
\begin{aligned}
     \boldsymbol{\theta} = g^{-1}(\mathbf{x}_0) &= \mathbf{x}_0, &&  (\text{Normal likelihood})\\
    \boldsymbol{\theta} = g^{-1}(\mathbf{x}_0) &= \mathbf{x}_0, &&  (\text{Log-Normal likelihood})\\
    \boldsymbol{\theta} = g^{-1}(\mathbf{x}_0) &= 1 / \exp(\mathbf{x}_0)  && (\text{Exponential likelihood})
\end{aligned}
\end{equation*}
For the Poisson likelihood, we used two different link functions to explore both high-rate and low-rate regimes,
\begin{equation*}
\begin{aligned}
    \boldsymbol{\theta} = g^{-1}(\mathbf{x}_0) &= \exp(\mathbf{x}_0)  && (\text{Poisson likelihood with low rate})\\
    \boldsymbol{\theta} = g^{-1}(\mathbf{x}_0) &= \exp(5 +\mathbf{x}_0) && (\text{Poisson likelihood with high rate})
\end{aligned}
\end{equation*}

\paragraph{True Intensity.} The true intensity, denoted as $\boldsymbol{\theta}^{\text{true}}$, was generated by sampling from the prior distribution.

\paragraph{Observations.} For each likelihood (Normal, Poisson, Log-Normal, Exponential), we generated a single sample ($N = 1$) with dimension $d = 30$, using the corresponding true intensity. All the observations were allocated to the training set.

\subsection{Score-Based Cox Process.}
\label{app-score-based-cox-process-experiment}
\paragraph{Cox Process definition.}
We consider the inhomogeneous Poisson process on a domain $\mathcal{S}\subset\mathbb{R}^{D}$. This process is characterized by a stochastic intensity function $\lambda(\boldsymbol{s}):\mathcal{S}\to\mathbb{R}_{+}$, which specifies the rate at which events occur at each location $\boldsymbol{s}\in\mathcal{S}$. For any subregion $\mathcal{T}\subset\mathcal{S}$, the number of events $N(\mathcal{T})$ within $\mathcal{T}$ is Poisson-distributed with parameter $\lambda_{\mathcal{T}} = \int_{\mathcal{T}} \lambda(\boldsymbol{s})d\boldsymbol{s}$. Additionally, for any disjoint subsets $\mathcal{S}_{i}$ of $\mathcal{S}$, the event counts $N(\mathcal{S}_{i})$ are independent. This process is frequently referred to as a Cox process, as it was introduced by \citet{cox_process}.

Let $\mathbf{s} =(\boldsymbol{s}_{1},\ldots,\boldsymbol{s}_{M})$ with $\boldsymbol{s}_{m}\in \mathcal{S}$ for $m=1,\ldots,M$ be  a set of $M$ observed points. The probability density of $\mathbf{s}$ conditioned on the intensity function $\lambda$ is 
\begin{equation*}
    p_{\mathbf{s}\vert \lambda}(\mathbf{s}| \lambda) = \exp\left(-\int_{\mathcal{S}} \lambda(\boldsymbol{s})\mathrm{d}\boldsymbol{s}\right)\prod_{m=1}^{M}\lambda(\boldsymbol{s}_{m})
\end{equation*}
To make inference on $\lambda$, a prior distribution over the intensity function, $p_{\lambda}(\lambda)$, is introduced. Applying Bayes’ rule, the posterior distribution of $\lambda$ conditioned on the observations is
\begin{equation*}
    p_{\lambda\vert\mathbf{s}}(\lambda | \mathbf{s}) = \frac{p_{\lambda}(\lambda)p_{\mathbf{s}\vert \lambda}(\mathbf{s}| \lambda)}{\int p_{\lambda}(\lambda)p_{\mathbf{s}\vert \lambda}(\mathbf{s}| \lambda) d\lambda}
\end{equation*}
This posterior is \textit{doubly intractable} due to the two challenging integrals: one over $\mathcal{S}$ in the numerator and another over $\lambda$ in the denominator.
% To ensure $\theta(s)\geq 0$ for all $s\in\mathcal{S}$ while allowing flexibility in modeling, we construct $\theta(s)$ from a base function $\theta(s):\mathcal{S}\to\mathbb{R}$. Specifically, we set
% \begin{equation}
%     \theta(s) = \exp(\theta(s))
% \end{equation}
% This approach contrasts with existing method, which either apply a sigmoid transformation to $\theta(s)$ and scale it by a maximum intensity, or alternatively, square $\theta(s)$, as proposed in \cite{Adams2009} and \cite{variational_gp}, respectively.

\paragraph{Likelihood.}
For computational simplicity, we assume that the observations $\mathbf{s}$ can be grouped into $d$ distinct sets, represented by a finite partition $\mathcal{P} = \{\mathcal{S}_{j}\}_{j=1}^{d}$ of $\mathcal{S}$. The count of observations over the partition is denoted by $\boldsymbol{y} = (y_{1},\ldots, y_{d})$, where $y_{j}$ is the number of observations in the set $\mathcal{S}_{j}$ and it is defined as $y_{j} = \sum_{m=1}^{M} \mathbbm{1}_{\{\boldsymbol{s}_{m}\in\mathcal{S}_{j}\}}$. Under this partition, we approximate the function $\lambda$ with a finite dimensional approximation $\boldsymbol{\theta}= (\theta_{1},\ldots,\theta_{d})\in\mathbb{R}^{d}$, where each $\theta_{j}$ approximates the aggregated function $\lambda$ over the subset $\mathcal{S}_{j}$. The discrete Cox process has the finite-dimensional distribution 
\begin{equation}
\label{eq-discrete-nhpp}
   p_{\boldsymbol{y}\vert \boldsymbol{\theta}}(\boldsymbol{y}|\boldsymbol{\theta})=\prod_{j=1}^{d} \text{Poisson}\left(y_{j};\theta_{j}|\mathcal{S}_{j}|\right)
\end{equation}
where $|\mathcal{S}_{j}|>0$ denotes the measure of $\mathcal{S}_{j}$. 
Henceforth, we assume that the sets $\{\mathcal{S}_j\}_{j=1}^d$ are of equal measure, which without loss of generality we set to $|\mathcal{S}_1|$. Furthermore, we assume that we can observe multiple samples, denoted as $\mathbf{y} = (\boldsymbol{y}_1, \ldots, \boldsymbol{y}_N)$, where each 
$\boldsymbol{y}_i$ is independently and identically distributed (i.i.d.) according to the distribution in~\eqref{eq-discrete-nhpp}. The likelihood of the discrete Cox process becomes, 
\begin{equation*}
   p_{\mathbf{y}\vert \boldsymbol{\theta}}(\mathbf{y}|\boldsymbol{\theta})=\prod_{i=1}^{N}\prod_{j=1}^{d} \text{Poisson}\left(y_{i,j};\theta_{j}|\mathcal{S}_{1}|\right).
\end{equation*}
% This discrete approximation enables a simplified sampling scheme: we can approximate samples from the process by drawing counts $N$
% independently from a Poisson distribution with rate parameter $\theta_{\mathcal{B}_{i}}|\mathcal{B}_{i}|$ for each $\mathcal{B}_{i}$. 
% The posterior $p_{\boldsymbol{\theta} \vert \mathbf{y} }(\boldsymbol{\theta} \vert \mathbf{y} )$ under this setting remains intractable due to the integral over $\boldsymbol{\theta}$ in the denominator of the posterior. 
% The task of sampling from the posterior distribution $p_{\boldsymbol{\theta} \vert \mathbf{y} }(\boldsymbol{\theta} \vert \mathbf{y} ) $ can be addressed using the methodology described in Section~\ref{sec:method}. Specifically, we first generate samples from $p_{\mathbf{x}_0 \vert \mathbf{y}}(\mathbf{x}_0 \vert \mathbf{y})$ and then obtain approximate samples from $p_{\boldsymbol{\theta} \vert \mathbf{y} }(\boldsymbol{\theta} \vert \mathbf{y} )$ by applying the inverse link function 

\paragraph{Evidence.}
As shown in~\eqref{eq-discrete-nhpp}, the intensity of the Poisson depends on the measure $|\mathcal{S}_1|$. Using the evidence trick, the log-density $\log p_{\mathbf{y}|\mathbf{x}_t}(\mathbf{y}|\mathbf{x}_t)$ is still tractable. A straightforward computation shows that $\log p_{\mathbf{y}|\mathbf{x}_t}(\mathbf{y}|\mathbf{x}_t)$ can be approximated as  
\begin{multline*}
    \log p_{\mathbf{y}|\mathbf{x}_t}(\mathbf{y}|\mathbf{x}_{t})
     \approx \log h_{\mathbf{y}}(\mathbf{y}) + \vert \mathcal{S}_1\vert\, \mathbf{T}_{\mathbf{y}}(\mathbf{y})  - A_{\boldsymbol{\theta}}\left(\boldsymbol{\nu}(\mathbf{x}_t), \boldsymbol{\tau}(\mathbf{x}_t)\right) + A_{\boldsymbol{\theta}}\left(\mathbf{T}_{\mathbf{y}}(\mathbf{y}) + \boldsymbol{\nu}(\mathbf{x}_t), \boldsymbol{\tau}(\mathbf{x}_t) + (N  \vert \mathcal{S}_1\vert) \mathbf{1}_d\right) .
\end{multline*}
where the likelihood's base measure $h_{\mathbf{y}}$ and sufficient statistics $\mathbf{T}_{\mathbf{y}}$, as well as the natural conjugate prior's log-partition function $A_{\boldsymbol{\theta}}$ are the same as in standard Poisson Likelihood case, presented in Appendix~\ref{app-table_distributions}.

\paragraph{Prior.}
The prior samples of $\mathbf{x}_0$ are images from the ImageNet train dataset, which are first converted to grayscale, then resized such that their smaller
edge measured 256 pixels, center-cropped to the size $256\times 256$ and finally scaled to the range $[-1, 1]$ using the transformation
\begin{align*}
\mathbf{x}_0 = 2 \frac{\mathbf{x}_0^{\text{orginal}} - \min(\mathbf{x}_0^{\text{orginal}})} { \max(\mathbf{x}_0^{\text{orginal}}) -  \min(\mathbf{x}_0^{\text{orginal}})} - 1
\end{align*}
Note that we converted the images to grayscale, reducing them to a single channel, as this aligns with the latent parameter space typically encountered in real-world problems.

\paragraph{Inverse link function.}
We used the following inverse link function:
\begin{equation*}
\boldsymbol{\theta} = g^{-1}(\mathbf{x}_0) = \exp(\mathbf{x}_0).
\end{equation*}

\paragraph{True intensity.}
The true intensity $\boldsymbol{\theta}^{\text{true}}$ is obtained by applying the inverse link function on the true latent variable  $\mathbf{x}_0^{\text{true}}$. We selected the following true latent variables:
\begin{itemize}
    \item Two images from the ImageNet validation set: A picture of a corn and a house were used in the experiments. The images underwent the same preprocessing steps as the prior samples.
    \item Building Height in London: We used the average building height in 2023 (variable \texttt{MEAN\_OBJECT\_HEIGHT\_M}) in Greater London sourced from~\citet{LBSM}. The map was cropped to a region defined by latitudes between $51.3274$ and $51.6874$ and longitudes between $-0.4178$ and $0.1622$. The data were then rescaled to a $256 \times 256$ grid, where each grid cell contained the average building height within its boundaries. Finally, the building height values were normalized to the range $[-1, 1]$ using the following transformations:  
\begin{align*}
\mathbf{x}_0^{\text{true}} = 2  \frac{\mathbf{x}_0^{\text{true, norm}} - \min(\mathbf{x}_0^{\text{true, norm}})}{\max(\mathbf{x}_0^{\text{true, norm}}) - \min(\mathbf{x}_0^{\text{true, norm}})} - 1, \quad \mathbf{x}_0^{\text{true, norm}} = \log(\mathbf{x}_0^{\text{true, original}} + 1).
\end{align*}
\item Satellite image: The satellite image was obtained using Google Earth Engine with Sentinel-2 Image Collection. The data was filtered to cover the region of New York City Manhattan area defined by latitudes between $-74.02$ and $-73.97$ and longitudes between $40.7$ and $40.75$, at a resolution of $10$ meters, and in the time range from January 1, 2021, to December 31, 2021. The RGB image was converted to grayscale.
A $256\times256$ region was cropped starting at coordinates 
$(100,200)$. The least cloudy image was selected based on the \texttt{CLOUDY\_PIXEL\_PERCENTAGE} property.  Finally, entries were normalized following transformations:  
\begin{align*}
\mathbf{x}_0^{\text{true}} = -\left(2  \frac{\mathbf{x}_0^{\text{true, norm}} - \min(\mathbf{x}_0^{\text{true, norm}})}{\max(\mathbf{x}_0^{\text{true, norm}}) - \min(\mathbf{x}_0^{\text{true, norm}})}\right)^4 + 1, \quad \mathbf{x}_0^{\text{true, norm}} = \log(\mathbf{x}_0^{\text{true, original}} + 0.01).
\end{align*}
The purpose of this transformation was to assign high values to the streets and low values to the buildings.
\end{itemize}

\paragraph{Observations.}
We generated $N$ i.i.d samples of the Cox process, $\mathbf{y} = \{\boldsymbol{y}_i\}_{i=1, \ldots, N}$, where the set of variables for one sample is $\boldsymbol{y}_i = \{y_{i,j}\}_{j=1, \ldots, d}$. Given the true intensity $\boldsymbol{\theta}^{\text{true}}$, we generated the variables $y_{i,j}$ with
\begin{equation*}
    y_{i,j} \sim \text{Poisson}(\theta_j^{\text{true}} |\mathcal{S}_1|),
\end{equation*}
where $|\mathcal{S}_1| = 1$ and for $j = 1, \ldots, d$ and $i = 1, \ldots, N$. This resulted in $N \times 256 \times 256 = N \times 65,536$ observations. We randomly allocated 80\% of the observations ($N \times 52,428$) to the training set and 20\% to the test set ($N \times 13,108$).

\subsection{Prevalence of Malaria in Sub-Saharan Africa.}

\paragraph{Likelihood.} 
In each location $j = 1, \ldots, d$, the observations consist of the number of positive cases, denoted as $y_j$, out of the total number of individuals examined, $n_j$. 
Accordingly, the number of positive cases follows a Binomial distribution
\begin{align*}
p_{\mathbf{y}|\boldsymbol{\theta}}(\mathbf{y}|\boldsymbol{\theta}) = \prod_{j = 1}^d \text{Binomial}(y_j; n_j, \theta_j)
\end{align*}
We note that in this experiment $N =1$, so we omit the indexing on $i$.

\paragraph{Prior.} The same prior as in the Score-Based Cox Process experiment was used.

\paragraph{Inverse link function.}
We used the following inverse link function:
\begin{equation*}
\boldsymbol{\theta} = g^{-1}(\mathbf{x}_0) = \text{sigmoid}(s\, \mathbf{x}_0),
\end{equation*}
with $s=5$. Such that the range covered by the prior on parameters $\boldsymbol{\theta}$ is $\text{sigmoid}(-5) \approx 0.001$ and $\text{sigmoid}(5) \approx 0.99$.

\paragraph{Observations.}
We used a real-world dataset on PfPR in Sub-Saharan Africa from the Malaria Atlas Project~\citep{Bhatt2015-uk,Pfeffer2018-cm,Weiss2019-au}. 
We ignore temporal aspects, and only aim to interpolate spatial data across all of Sub-Saharan Africa. We use a grid resolution of $256 \times 256$, equivalent to a $\sim 111 \text{ km}^2$ resolution, and aggregate the count of the number of positive cases and the number of examined to this resolution. Out of the grid, $7,048$ ($10.75$\%) entries had non-missing observations. We randomly allocated 80\% of the observations ($5,621$) to the train set and 20\% to the test set ($1,427$).

\subsection{Benchmark Methods}
\label{app-benchmark}

\subsubsection{Diffusion Posterior Sampling}
\label{app-dps}
Recall from~\eqref{eq-likelihood-y-xt} the following factorization of the conditional density $p_{\mathbf{y}|\mathbf{x}_t}(\mathbf{y}|\mathbf{x}_t)$ that we repeat here for convenience:
\begin{equation*}
    p_{\mathbf{y}|\mathbf{x}_t}(\mathbf{y}|\mathbf{x}_t) 
    = \int p_{\mathbf{y}|\mathbf{x}_0}(\mathbf{y}|\mathbf{x}_0) \, p_{\mathbf{x}_0|\mathbf{x}_t}(\mathbf{x}_0|\mathbf{x}_t) \, \mathrm{d}\mathbf{x}_0.
\end{equation*}
for all $t\in[\epsilon,1]$.
When $p_{\mathbf{y}|\mathbf{x}_0}(\mathbf{y}|\mathbf{x}_0)$ is a Gaussian distribution with mean $\mathcal{H}(\mathbf{x}_0)$ and known covariance matrix $\boldsymbol{\Sigma}_{d} = \sigma^2 \mathbf{I}_{d}$, \citet{chung2023} proposed approximating $p_{\mathbf{y}|\mathbf{x}_t}(\mathbf{y}|\mathbf{x}_t)$ by  $p_{\mathbf{y}|\mathbf{x}_0}(\mathbf{y}|\hat{\mathbf{x}}_0)$ where $\hat{\mathbf{x}}_0$ denotes the mean of $p_{\mathbf{x}_0|\mathbf{x}_t}(\mathbf{x}_0|\mathbf{x}_t)$:
\begin{equation*}
\hat{\mathbf{x}}_0= \mathbb{E}_{\mathbf{x}_0 \sim p_{\mathbf{x}_0|\mathbf{x}_t}}[\mathbf{x}_0]. 
\end{equation*}
This approach approximates $p_{\mathbf{x}_0|\mathbf{x}_t}(\mathbf{x}_0|\mathbf{x}_t)$ with a Dirac delta function centered at $\hat{\mathbf{x}}_0$. Consequently, the integral in~\eqref{eq-likelihood-y-xt} simplifies to the following closed-form expression:
\begin{equation*}
p_{\mathbf{y}|\mathbf{x}_t}(\mathbf{y}|\mathbf{x}_t) \approx \frac{1}{\sqrt{(2\pi)^{n}\sigma^{2n}}}\exp\left[-\frac{\norm{\mathbf{y}-\mathcal{H}(\hat{\mathbf{x}}_{0} (\mathbf{x}_t) )}^{2}}{2\sigma^{2}}\right]
\end{equation*}
where the dependency of $\hat{\mathbf{x}}_{0}$ on $\mathbf{x}_t$ has been made explicit by the notation $\hat{\mathbf{x}}_{0} (\mathbf{x}_t)$.Here, $n$ represents the total number of observations, defined in our notation as $n=N \times d$. It is worth noting that~\citet{chung2023}, in their experiments, focused on the case where $N=1$, making $n=d$. Nevertheless, their methodologies naturally extend to cases where $N>1$. Moreover, note that under our notation, $\mathcal{H}(\cdot) = g^{-1}(\cdot)$.

In the context of VP-SDEs, $\hat{\mathbf{x}}_0$ has an analytically tractable form:
\begin{equation}
\label{eq:tweedieformula}
    \hat{\mathbf{x}}_0 = \frac{1}{\sqrt{\alpha_t} }\left(\mathbf{x}_t + (1-\alpha_t) \nabla_{\mathbf{x}_t} \log p_{\mathbf{x}_t}(\mathbf{x}_t)\right),
\end{equation}
where $\alpha_t = \exp\left(-\left(t \beta_0 + \left(\beta_1 - \beta_0\right) \frac{t^2}{2}\right)\right)$. The expression in~\eqref{eq:tweedieformula}
is often referred to as \textit{Tweedie's formula}. 
Here, the prior score function $\nabla_{\mathbf{x}_t} \log p_{\mathbf{x}_t}(\mathbf{x}_t)$ appearing in~\eqref{eq:tweedieformula} can be approximated using the score network $\mathbf{s}_{\boldsymbol{\phi}^{\star}}(\mathbf{x}_t, t)$. 

\textbf{Observations following a Normal distribution.} For inverse problems corrupted by Gaussian noise with unknown mean and known variance $\sigma^2$, \citet{chung2023} proposed to approximate the posterior score function with:
\begin{equation*}
\nabla_{\mathbf{x}_t}\log p_{\mathbf{x}_t \vert  \mathbf{y}}(\mathbf{x}_t \vert  \mathbf{y}) \simeq \mathbf{s}_{\boldsymbol{\phi}^{\star}}(\mathbf{x}_t, t) - \rho \sum_{i = 1}^N  \nabla_{\mathbf{x}_t} \norm{ \boldsymbol{y}_i - \mathcal{H}(\hat{\mathbf{x}}_{0}) }^2,
\end{equation*}
where $\rho = 1/ \sigma^2$.

\textbf{Observations following a Poisson distribution.} For inverse problems corrupted by Poisson noise, \citet{chung2023} proposed two methodologies. The first, termed \textit{Poisson-LS}, applies the same least squares (LS) method used for Gaussian noise. This approach is justified by the fact that Poisson noise approximates Gaussian noise in high signal-to-noise regimes. The posterior score function associated with this approximation is expressed as:
\begin{equation*}
\nabla_{\mathbf{x}_t}\log p_{\mathbf{x}_t \vert  \mathbf{y}}(\mathbf{x}_t \vert  \mathbf{y}) \simeq \mathbf{s}_{\boldsymbol{\phi}^{\star}}(\mathbf{x}_t, t) - \rho_t \sum_{i = 1}^N  \nabla_{\mathbf{x}_t} \norm{ \boldsymbol{y}_i - \mathcal{H}(\hat{\mathbf{x}}_{0}) }^2.
\end{equation*}
where $\rho_t$ is a weight that depends on diffusion timestep $t$.
The second approach, which we call \textit{Poisson-DPS}, adopts a shot noise approximation. The posterior score function associated with this approximation is expressed as:
\begin{equation} \label{eq:poisson_dps}
\nabla_{\mathbf{x}_t}\log p_{\mathbf{x}_t \vert  \mathbf{y}}(\mathbf{x}_t \vert  \mathbf{y}) \simeq \mathbf{s}_{\boldsymbol{\phi}^{\star}}(\mathbf{x}_t, t) - \rho_t \sum_{i = 1}^N  \nabla_{\mathbf{x}_t} 
\left\lVert \boldsymbol{y}_i - \mathcal{H}(\hat{\mathbf{x}}_{0}) \right\rVert_{\boldsymbol{\Lambda}_i}^2.
\end{equation}
where $\left\lVert \mathbf{a} \right\rVert_{\boldsymbol{\Lambda}}^2 = \mathbf{a}^\top \boldsymbol{\Lambda} \mathbf{a}$ and $\boldsymbol{\Lambda}_i$ is a diagonal matrix with entries $[\boldsymbol{\Lambda}_i]_{jj} = 1/(2 y_{i,j})$.
The step size $\rho_t$ in each method is set to 
\begin{equation*}
    \rho_t = \frac{\rho'}{\sqrt{\sum_{i = 1}^N \norm{\boldsymbol{y}_i - \mathcal{H}(\hat{\mathbf{x}}_{0}(\mathbf{x}_t))}^2}}
\end{equation*}
where $\rho'$ is a hyperparameter which is manually chosen and where the dependency on $\mathbf{x}_{t}$ has been made explicit. The recommended value for Poisson noise is $\rho' = 0.3$. 

The results of the experiments discussed in Section~\ref{sec-1d-synthetic-data} are presented in Appendix~\ref{app-results-synthetic-1d}. 
For the Gaussian likelihood (Figure~\ref{fig:synthetic-data-experiment-gaussian}), we observed that DPS produced point estimates within the correct range but underestimated uncertainty. We hypothesize that this discrepancy stems from DPS not fully capturing the uncertainty of the conditional distribution $p_{\mathbf{y} \vert \mathbf{x}_t}(\mathbf{y} \vert \mathbf{x}_t)$. In contrast, our method yielded results closer to the ground truth MCMC posterior in both point estimates and uncertainty quantification.  

For the Poisson likelihood, we first tested a low-rate regime (Figure~\ref{fig:synthetic-data-experiment-poisson-low-rate}), where both Poisson-LS and Poisson-DPS performed poorly. We suspected that the norm in~\eqref{eq:poisson_dps} introduced numerical instabilities due to division by Poisson observations, which struggle to handle zero-count events. To address this, we added a small noise term $(0.01)$ to zero observations, but this did not improve performance. We then tested a high-rate regime (Figure~\ref{fig:synthetic-data-experiment-poisson-high-rate}), yet the results remained unsatisfactory.  

We also experimented with different values of $\rho'$ but observed no noticeable impact on performance. The results presented use the recommended value $\rho' = 0.3 $.  
  
\subsubsection{Gaussian Process}
To obtain the ground-truth MCMC results of the experiment presented in Section~\ref{sec-1d-synthetic-data}, we used \texttt{PyStan} version 3.10.0~\citep{pystan}. Four Hamiltonian Monte Carlo (HMC) chains were run in parallel for 600 iterations, with the first 100 iterations considered as warm-up.

\subsubsection{Gaussian Markov Random Field}

In the experiment described in Section~\ref{sec-experiment-malaria}, we compared our results with the case where the prior $p_{\mathbf{x}_{0}}(\mathbf{x}_{0})$ was defined by a GMRF. 
We considered a discrete set of $d$ points $\mathcal{S}=\{\mathbf{s}_{1},\ldots,\mathbf{s}_{d}\}$, where $\mathbf{s}_{j}\in\mathbb{R}^{2}$ for $j=1,\ldots,d$.
%Let $W$ and $H$ denote the number of points along the $x$-axis (width) and $y$-axis (height), respectively. The total number of points on the grid is $d = W \times H$. The set of grid points is denoted as $\boldsymbol{G} = (\mathbf{g}_1, \ldots, \mathbf{g}_d)$, where each $\mathbf{g}_j \in \mathbb{R}^2$ for $j = 1, \ldots, d$.
%Given observations $\{(\mathbf{g}_j, \mathbf{x}_0(\mathbf{g}_j))\}_{j=1}^d$, the function $\mathbf{x}_0$ is modeled as a realization of a zero-mean two-dimensional Gaussian Markov Random Field:
% \paragraph{Score-Based Cox Process.}
% In the experiment described in Section~\ref{sec-experiment-cox-process}, we use the following model:
% \begin{align*}
%     &y_{i,j} \sim \text{Poisson}(\theta_j), \\
%     &\theta_j = \exp(\beta_0 + x_{0,j})\\ 
%     &p(\beta_0) \propto 1, \\
%     &\mathbf{x}_{0} \sim \mathcal{N}(\boldsymbol{0}, \mathbf{K}),
% \end{align*}
% for $i = 1, \ldots N$, $j = 1, \ldots, d$ and where $\mathbf{K}$ is a covariance matrix. 
% \paragraph{Malaria Prevalence in Sub-Saharan Africa.}
We use the following model:
\begin{align*}
    &y_{j} \sim \text{Binomial}(n_j, \theta_j), \\
    &\theta_j = \text{Sigmoid}(\beta_0 + x_{0,j})\\ 
    &p(\beta_0) \propto 1, \\
    &\mathbf{x}_{0} \sim \mathcal{N}(\boldsymbol{0}, \mathbf{K}),
\end{align*}
for  $j = 1, \ldots, d$ and where $\mathbf{K}$ is a covariance matrix. Note that there are only one sample of the observations so we omit the indexing on $i$.

The entries of $\mathbf{K}$ are given by
% \begin{equation}
%     \mathbf{K}_{\mathbf{s}, \mathbf{s}'} = \text{Cov}(\mathbf{x}_0(\mathbf{g}), \mathbf{x}_0(\mathbf{g}')) = k(\mathbf{s}, \mathbf{s}') \quad \text{for } \mathbf{s}, \mathbf{s}' \in \mathcal{S},
% \end{equation}
\begin{equation*}
    \mathbf{K}_{i, j}  = k(\mathbf{s}_{i}, \mathbf{s}_{j}) \quad \text{for } \mathbf{s}_{i}, \mathbf{s}_{j} \in \mathcal{S},
\end{equation*}
where $k(\cdot, \cdot)$ is the kernel function. 
\citet{Lindgren2011-fv} introduced an explicit link between a certain stochastic partial differential equation and GMRFs. \citet{Lindgren2011-fv} considers linear stochastic partial differential equations of the form $\mathcal{L} u(\cdot) = \mathcal{W}(\cdot)$ to define random fields $u(\cdot)$ with differential operator $\mathcal{L}$ and $\mathcal{W}$ is a Gaussian white noise process on a general domain. Choosing $\mathcal{L}=\tau(\kappa^2-\triangle)^{\frac{\alpha}{2}}$, the resulting stochastic partial differential equation
\begin{equation*}
    \tau(\kappa^2-\triangle)^{\frac{\alpha}{2}} u = \mathcal{W}
\end{equation*}
have stationary solutions with a Matérn kernel function of the form 
\begin{equation*}
    k(\mathbf{s}, \mathbf{s}') = \frac{\sigma^2}{\Gamma(\nu)2^{\nu-1}}(k||\mathbf{s}-\mathbf{s}'||)^\nu K_\nu (k||\mathbf{s}-\mathbf{s}'||).
\end{equation*}
Where $\nu=\alpha = \frac{d}{2}$ and $\sigma^2 = \Gamma(\nu)\{\Gamma(\alpha)(4\pi)^{d/2}\kappa^{2\nu}\tau^2\}^{-1}$ and $K_\nu$ is the modified Bessel function of the second kind. $\nu>0$ is called the smoothness index (generally fixed to $\frac{1}{2},\frac{3}{2},\frac{5}{2}$), $\kappa>0$ is the spatial range and $\sigma^2$ is the marginal variance. The stochastic partial differential equation can be solved using the finite element method resulting in a GMRF. This approach therefore allows the construction of sparse precision matrices for GMRFs that are invariant to geometry of the spatial neighborhood and allow for extremely accurate low rank approximations with computational tractability. Inference using these precision matricies is performed using approximate Bayesian inference via the integrated nested Laplace approximation~\citep{Rue2009-ty} and constitutes the state-of-the-art in spatial statistics for approximate methods~\citep{Heaton2017-vl}. 

We used \texttt{INLA} version 24.12.11 to fit the GMRF. 
In the experiment, the Matérn stochastic partial differential equation with the parameter $\alpha = 1$ yielded the best results.

\clearpage
\newpage
\section{Further Experimental Results}
\label{app-further-experimental-results}
\subsection{Further Results for Section~\ref{sec-1d-synthetic-data}}
\label{app-results-synthetic-1d}
\subsubsection{Normal Observations}
\begin{figure*}[h!]
\centering
\includegraphics[width=\textwidth]{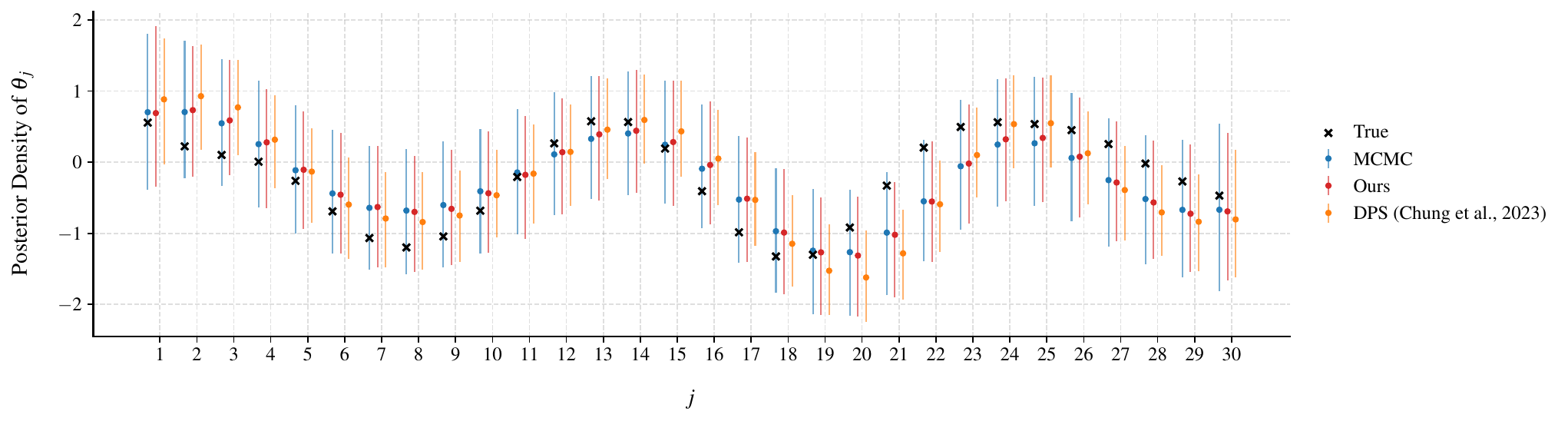}
\caption{\textbf{Posterior Density of $\boldsymbol{\theta}$ given observations following a Normal distribution.} Estimated posterior median (dot) and 95\% credible interval (error bars) by three methods (colors) along with the true value of $\boldsymbol{\theta}$ (cross). The inference was performed given $N=1$ observations following a Normal distribution for which the mean was equal to $\boldsymbol{\theta} = \mathbf{x}_0$ and the standard-deviation was fixed to $\sigma =1$.}
\label{fig:synthetic-data-experiment-gaussian}
\end{figure*}

\subsubsection{Poisson Observations}
\begin{figure*}[h!]
\centering
\includegraphics[width=\textwidth]{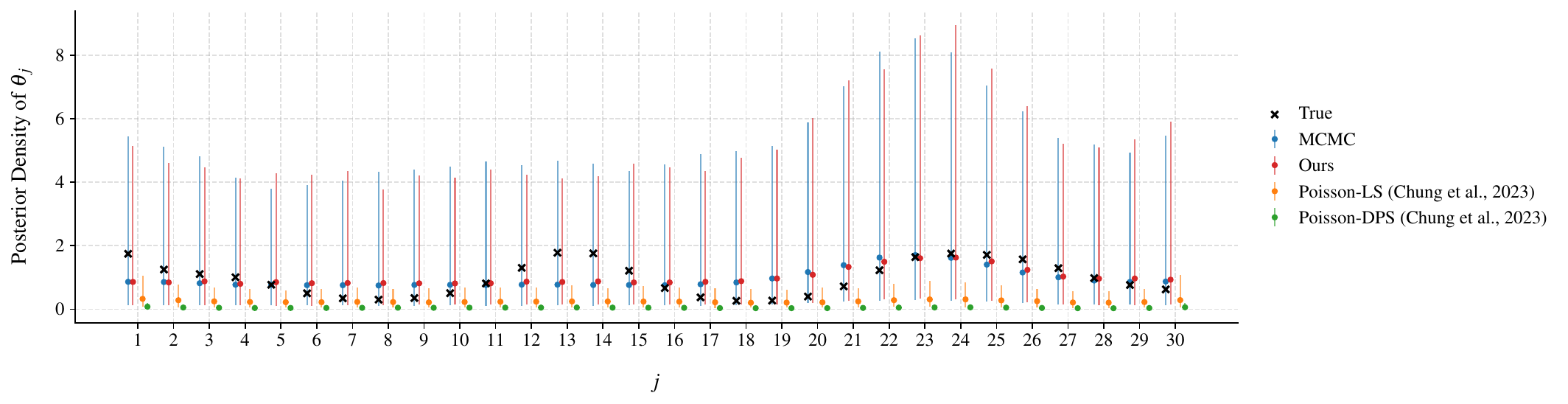}
\caption{\textbf{Posterior Density of $\boldsymbol{\theta}$ given observations following a Poisson distribution with low rate.} Estimated posterior median (dot) and 95\% credible interval (error bars) by four methods (colors) along with the true value of $\boldsymbol{\theta}$ (cross). The inference was performed given $N=1$ Poisson observations for which the rate was equal to $\boldsymbol{\theta} = \exp (\mathbf{x}_0)$.}
\label{fig:synthetic-data-experiment-poisson-low-rate}
\end{figure*}

\begin{figure*}[h!]
\centering
\includegraphics[width=\textwidth]{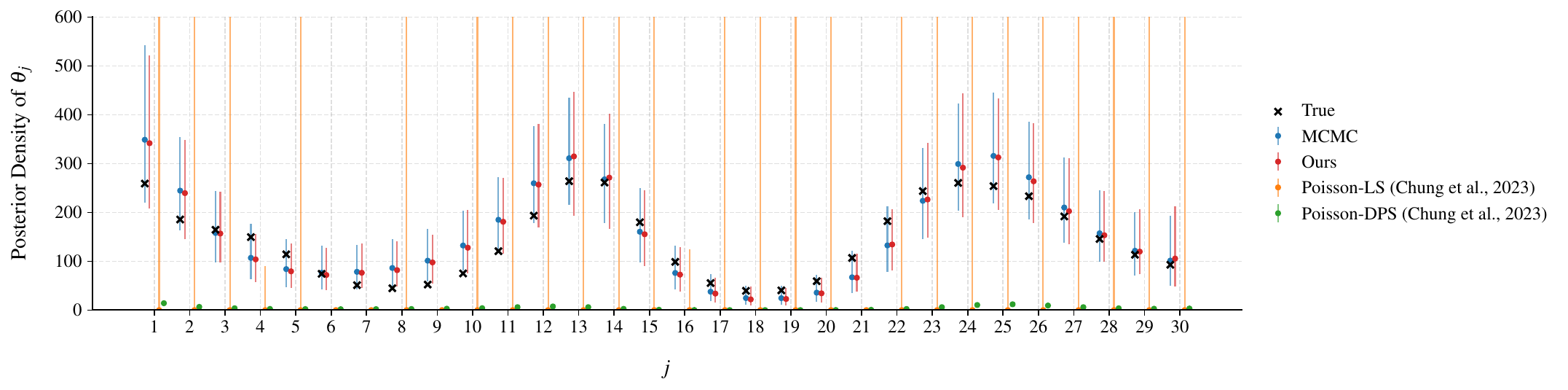}
\caption{\textbf{Posterior Density of $\boldsymbol{\theta}$ given observations following a Poisson distribution with high rate.} Estimated posterior median (dot) and 95\% credible interval (error bars) by four methods (colors) along with the true value of $\boldsymbol{\theta}$ (cross). The inference was performed given $N=1$ Poisson observations for which the rate was equal to $\boldsymbol{\theta} = \exp (5 + \mathbf{x}_0)$.}
\label{fig:synthetic-data-experiment-poisson-high-rate}
\end{figure*}

\clearpage
\newpage
\subsubsection{Log-Normal Observations}
\begin{figure*}[h!]
\centering
\includegraphics[width=\textwidth]{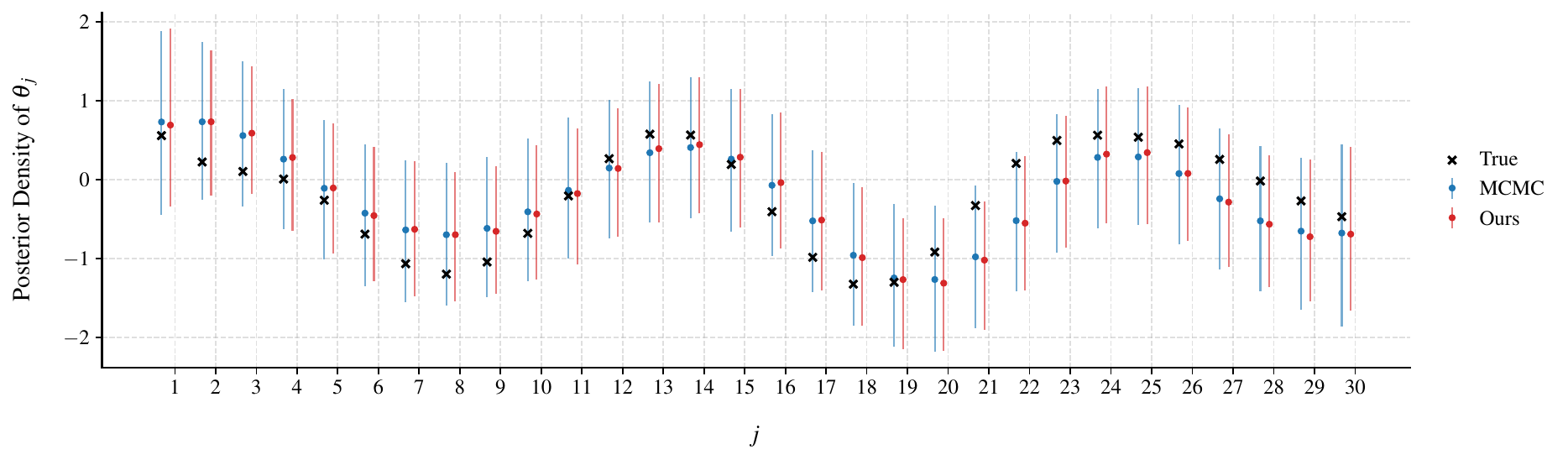}
\caption{\textbf{Posterior Density of $\boldsymbol{\theta}$ given observations following a Log-Normal distribution.} Estimated posterior median (dot) and 95\% credible interval (error bars) by three methods (colors) along with the true value of $\boldsymbol{\theta}$ (cross). The inference was performed given $N=1$ Log-Normal observations for which the logarithm of location was equal to $\boldsymbol{\theta} = \mathbf{x}_0$ and the logarithm of scale was fixed to $\sigma =1$.}
\label{fig:synthetic-data-experiment-lognormal-1}
\end{figure*}

% \clearpage
% \newpage
\subsubsection{Exponential Observations}
\begin{figure*}[h!]
\centering
\includegraphics[width=\textwidth]{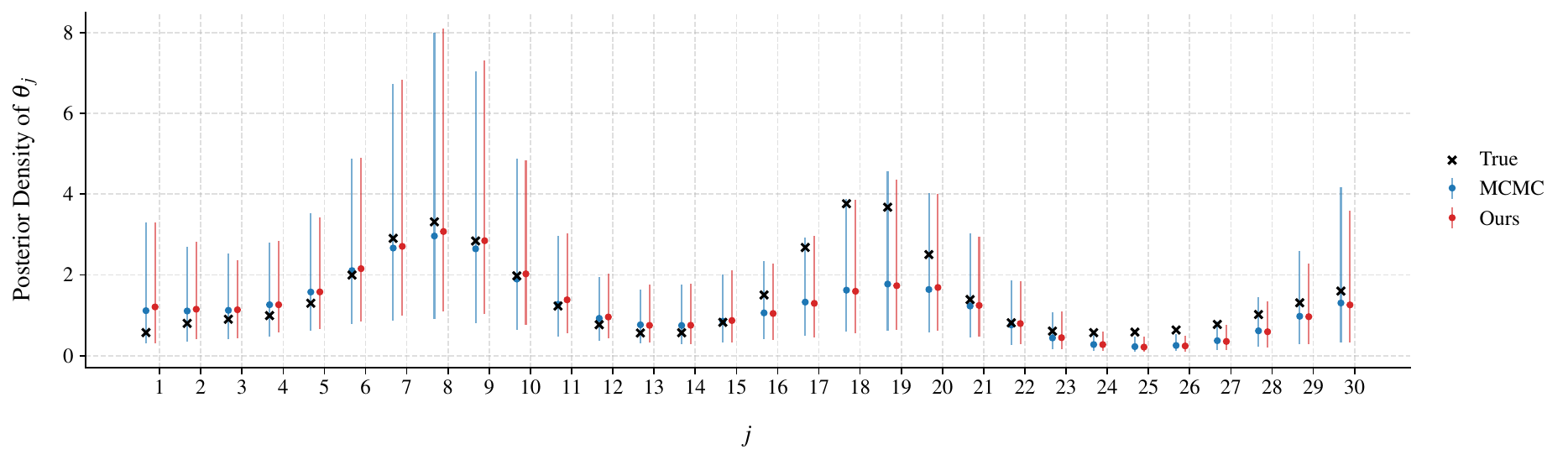}
\caption{\textbf{Posterior Density of $\boldsymbol{\theta}$ given observations following an Exponential distribution.} Estimated posterior median (dot) and 95\% credible interval (error bars) by three methods (colors) along with the true value of $\boldsymbol{\theta}$ (cross). The inference was performed given $N=1$ Exponential observations for which the rate was equal to $\boldsymbol{\theta} = 1 / \exp(\mathbf{x}_0)$.}
\label{fig:synthetic-data-experiment-exponential-1}
\end{figure*}

\clearpage
\newpage
\subsection{Further Results for Section~\ref{sec-experiment-cox-process}}
\label{app-further-experiment-cox-process}

\begin{figure*}[ht!]
\centering
\includegraphics[width=\textwidth]{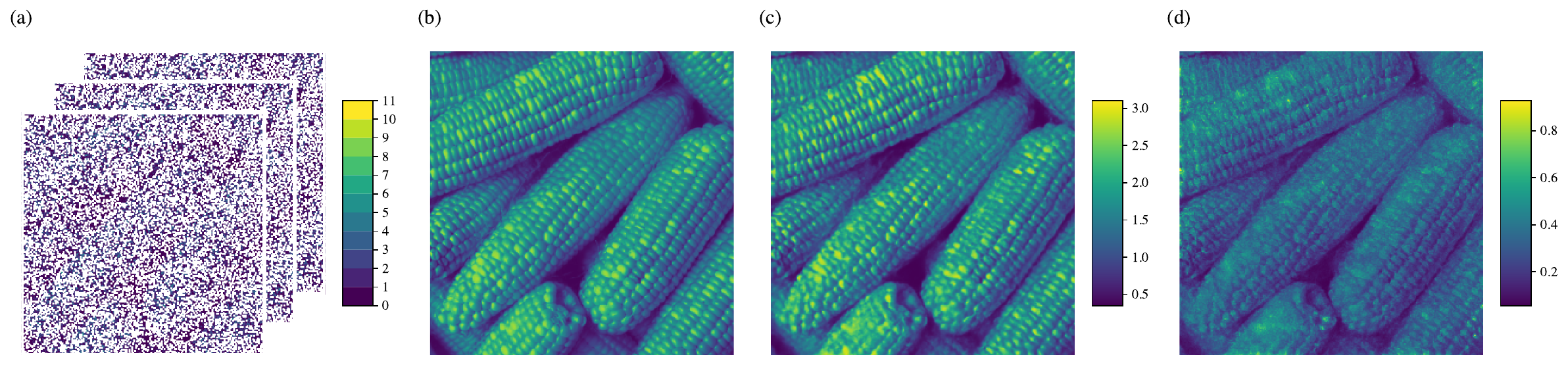}
\caption{\textbf{Further Results for Score-Based Cox Process Experiment with ImageNet Intensity (1).} \textbf{(a)} Observations in the train set (80\% of the grid points) sampled from a Cox Process with true intensity equal to the image in (b).  \textbf{(b)} True Cox Process intensity from the ImageNet validation set, transformed using an exponential link function. \textbf{(c)} Median of the estimated Cox Process intensity posterior distribution using the Score-Based Cox Process method.   \textbf{(d)} Interquartile range of the estimated Cox Process intensity posterior distribution using the Score-Based Cox Process method. }
\end{figure*}

\begin{figure*}[ht!]
\centering
\includegraphics[width=\textwidth]{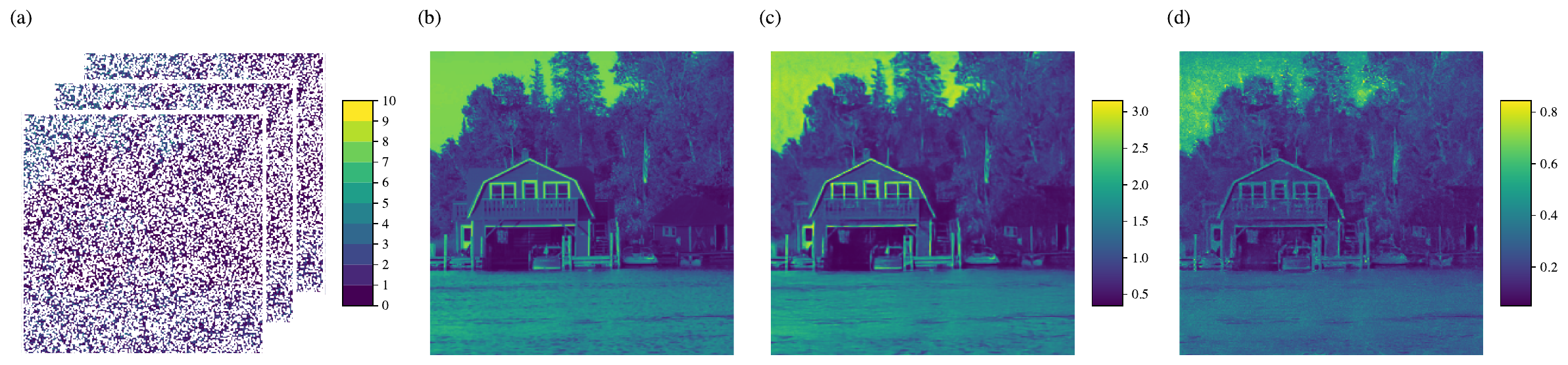}
\caption{\textbf{Further Results for Score-Based Cox Process Experiment with ImageNet Intensity (2).} \textbf{(a)} Observations in the train set (80\% of the grid points) sampled from a Cox Process with true intensity equal to the image in (b).  \textbf{(b)} True Cox Process intensity from the ImageNet validation set, transformed using an exponential link function. \textbf{(c)} Median of the estimated Cox Process intensity posterior distribution using the Score-Based Cox Process method.   \textbf{(d)} Interquartile range of the estimated Cox Process intensity posterior distribution using the Score-Based Cox Process method. }
\end{figure*}

\begin{figure*}[ht!]
\centering
\includegraphics[width=\textwidth]{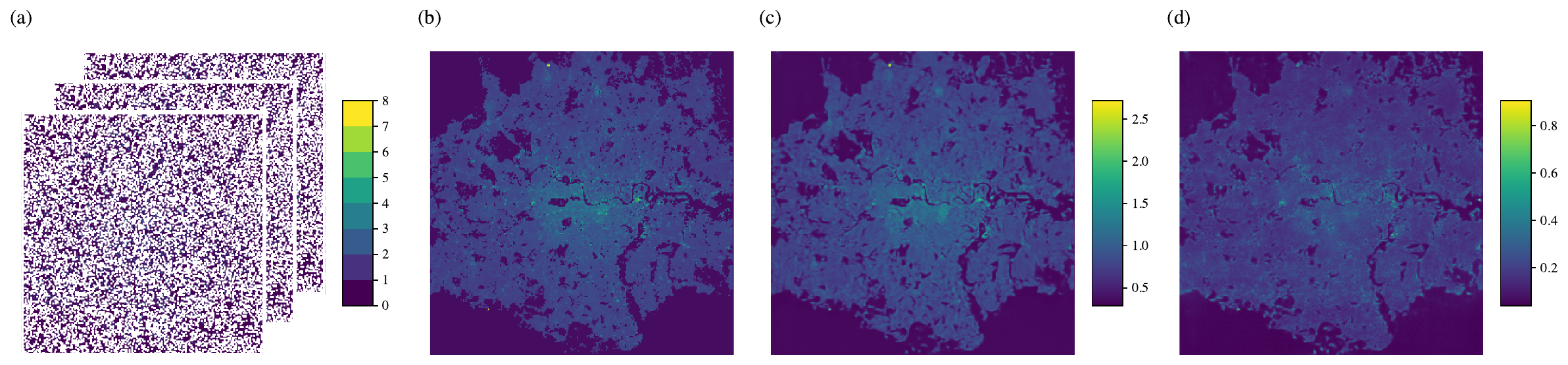}
\caption{\textbf{Further Results for Score-Based Cox Process Experiment with London Buildings Height Intensity.} \textbf{(a)} Observations in the train set (80\% of the grid points) sampled from a Cox Process with true intensity equal to the image in (b).  \textbf{(b)} True Cox Process intensity representing buildings height in London. \textbf{(c)} Median of the estimated Cox Process intensity posterior distribution using the Score-Based Cox Process method.   \textbf{(d)} Interquartile range of the estimated Cox Process intensity posterior distribution using the Score-Based Cox Process method. }
\end{figure*}

\begin{figure*}[ht!]
\centering
\includegraphics[width=\textwidth]{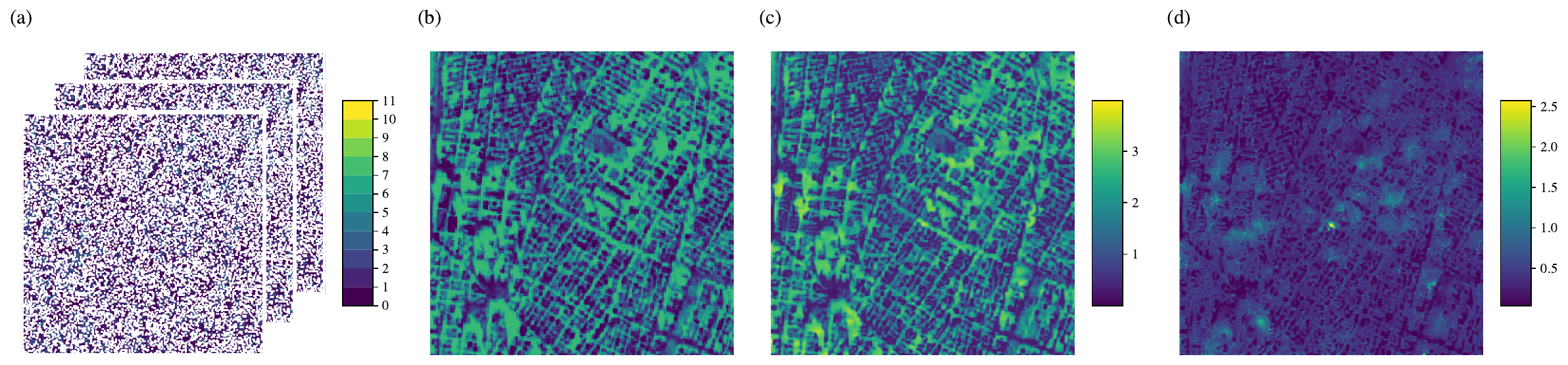}
\caption{\textbf{Further Results for Score-Based Cox Process Experiment with Satellite Image Intensity.} \textbf{(a)} Observations in the train set (80\% of the grid points) sampled from a Cox Process with true intensity equal to the image in (b).  \textbf{(b)} True Cox Process Intensity from Sentinel-2 Satellite Imagery of Manhattan, New York City. \textbf{(c)} Median of the estimated Cox Process intensity posterior distribution using the Score-Based Cox Process method.   \textbf{(d)} Interquartile range of the estimated Cox Process intensity posterior distribution using the Score-Based Cox Process method. }
\end{figure*}

\begin{figure*}[ht!]
\centering
\includegraphics[width=\textwidth]{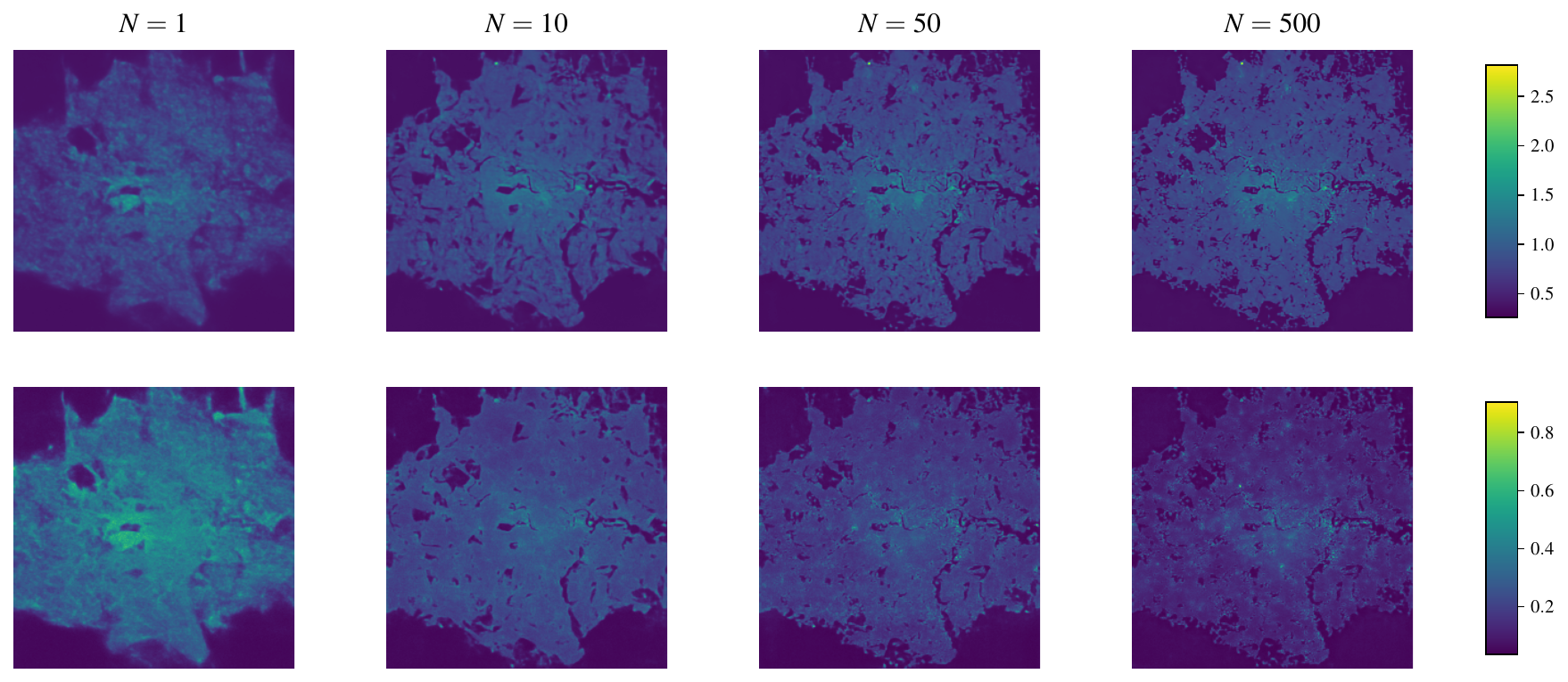}
\caption{\textbf{Further Results for Score-Based Cox Process Experiment with Varying Number of Samples $N$.} \textbf{(Top)} Median of the estimated Cox Process intensity posterior distribution using the Score-Based Cox Process method. \textbf{(Bottom)}
Interquartile range of the estimated Cox Process intensity posterior distribution using the Score-Based Cox Process method. }
\end{figure*}

\clearpage
\newpage
\subsection{Further Results for Section~\ref{sec-experiment-malaria}}
\label{app-further-experiment-malaria}

\begin{figure*}[ht!]
\centering
\includegraphics[width=\textwidth]{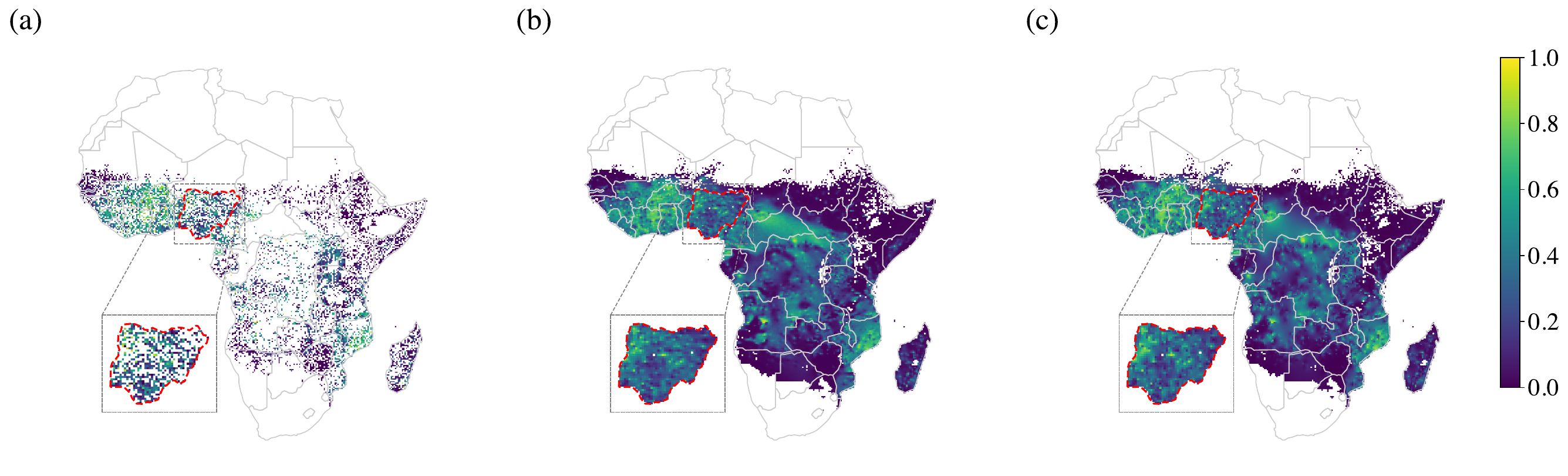}
\caption{
\textbf{Benchmark Comparison on the Prevalence of Malaria in Sub-Saharan Africa Experiment.}
\textbf{(a)} Empirical PfPR. \textbf{(b)} Median of the estimated PfPR posterior distribution using our approach (MAE = $0.1207$). \textbf{(c)} Mean of the estimated PfPR posterior distribution using a GMRF (MAE = $0.1225$).
The inset plots highlight Nigeria, one of the countries with the highest malaria burden worldwide.
The empty entries either correspond to locations outside Sub-Saharan Africa or are attributed to lakes or desert zones. }
\end{figure*}

\begin{figure*}[ht!]
\centering
\includegraphics[width=\textwidth]{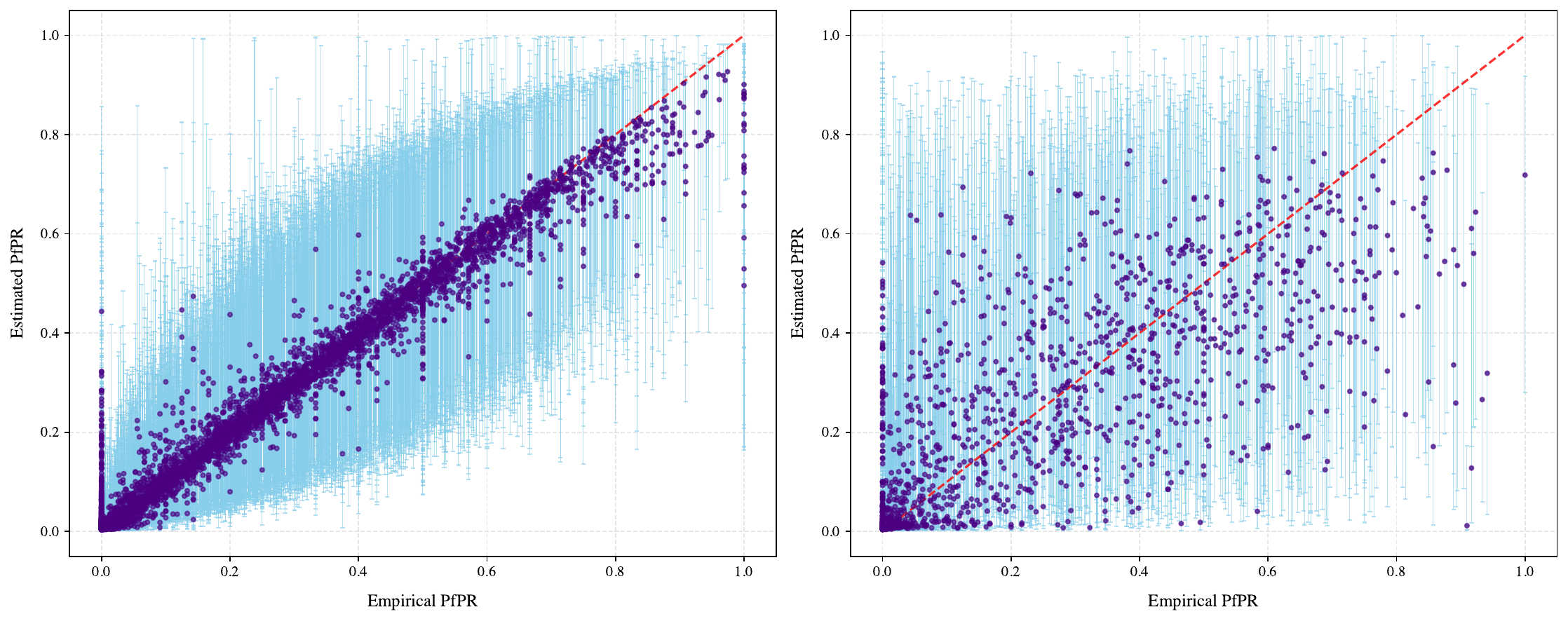}
\caption{\textbf{Posterior Predictive Checks of the Prevalence of Malaria in Sub-Saharan Africa Experiment.}  
\textbf{(Left)} Empirical PfPR in the train set against the median of the estimated PfPR posterior distribution (dot) along with the 95\% credible interval (error bars). The number of empirical PfPR in the train set lying inside the 95\% credible interval was $80$\%.
\textbf{(Right)} Empirical PfPR in the test set against the median of the estimated PfPR posterior distribution (dot) along with the 95\% credible interval (error bars). The number of empirical PfPR in the test set lying inside the 95\% credible interval was $78$\%. }
\end{figure*}

\begin{figure*}[ht!]
\centering
\includegraphics[width=\textwidth]{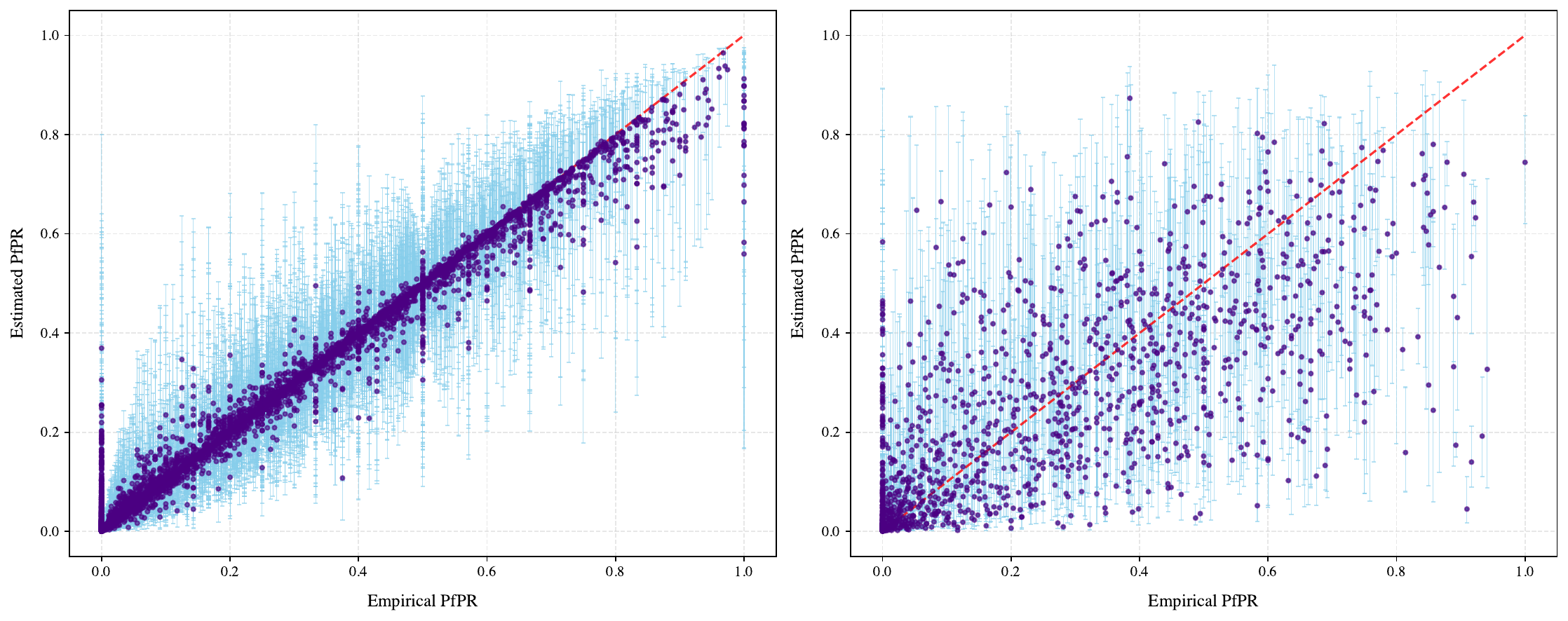}
\caption{\textbf{Posterior Predictive Checks for the Gaussian Markov Random Field model of the Prevalence of Malaria in Sub-Saharan Africa Experiment.}  
\textbf{(Left)} Empirical PfPR in the train set against the mean of the estimated PfPR posterior distribution by a GMRF (dot) along with the 95\% credible interval (error bars). The number of empirical PfPR in the train set lying inside the 95\% credible interval was $81$\%.
\textbf{(Right)} Empirical PfPR in the test set against the mean of the estimated PfPR posterior distribution by a GMRF (dot) along with the 95\% credible interval (error bars). The number of empirical PfPR in the test set lying inside the 95\% credible interval was $43$\%. }
\end{figure*}

\clearpage
\newpage
\section{Limitations}
\label{app-limitations}
Our methodology inherits the characteristics of diffusion model-based methods, which tend to be relatively slow in terms of sampling speed (see Appendix~\ref{app-implementation}). This limitation could potentially be mitigated by incorporating advanced samplers.
Due to the inherent stochasticity in the posterior sampling, as observed in~\citet{chung2023}, we encountered failures among the posterior samples when the signal-to-noise ratio was not properly tuned. To ensure numerical stability, we also had to apply gradient clipping on the approximated posterior gradients (see Appendix~\ref{app-implementation}). Developing more robust techniques to stabilize the sampling process presents an interesting avenue for future research.
To the best of our knowledge, among methods that explicitly approximate the measurement-matching term $\nabla_{\mathbf{x}_t}\log p_{\mathbf{y}\vert\mathbf{x}_t}(\mathbf{y}\vert\mathbf{x}_t)$, our approach is the only one capable of handling observations following non-Gaussian distributions (see \citet{Daras2024} for a comprehensive survey on the subject). However, compared to existing Gaussian-based methodologies, this flexibility comes at the additional cost of training a separate neural network. Notably, though, both networks can be trained in parallel, potentially mitigating computational overhead.

\newpage
\section{Proofs}
\label{app-proofs}
\subsection{Proof of Lemma \ref{lemma-KL-divergence}} \label{sec:proof_lemma-KL-divergence}
% For the next result, it is convenient to define the log-partition function of the conjugate prior defined in~\eqref{eq-prior-q-theta} with $A_{\boldsymbol{\theta}}(\boldsymbol{\zeta}(\mathbf{x}_t)) = - \log h_{\boldsymbol{\theta}}(\boldsymbol{\nu}(\mathbf{x}_t), \boldsymbol{\tau}(\mathbf{x}_t))$, where $\boldsymbol{\zeta}(\mathbf{x}_t) = (\boldsymbol{\nu}(\mathbf{x}_t), \boldsymbol{\tau}(\mathbf{x}_t))$
\begin{proof}[Proof of Lemma \ref{lemma-KL-divergence}.]
The KL divergence between $ p_{\boldsymbol{\theta}|\mathbf{x}_t} $ and $ q_{\boldsymbol{\theta}|\boldsymbol{\zeta}(\mathbf{x}_t)} $ can be expressed as:
\begin{equation}
\label{eq-kl-divergence-after-factorization}
   D_{\text{KL}}(p_{\boldsymbol{\theta}|\mathbf{x}_t} \vert\vert q_{\boldsymbol{\theta}|\boldsymbol{\zeta}(\mathbf{x}_t)})=  \int p_{\boldsymbol{\theta}|\mathbf{x}_t}(\boldsymbol{\theta}\vert \mathbf{x}_{t}) \log p_{\boldsymbol{\theta}|\mathbf{x}_t}(\boldsymbol{\theta}\vert \mathbf{x}_{t}) \, \mathrm{d}\boldsymbol{\theta} - \int p_{\boldsymbol{\theta}|\mathbf{x}_t}(\boldsymbol{\theta}\vert \mathbf{x}_{t}) \log q_{\boldsymbol{\theta}|\boldsymbol{\zeta}(\mathbf{x}_t)}(\boldsymbol{\theta}\vert \boldsymbol{\zeta}(\mathbf{x}_t)) \, \mathrm{d}\boldsymbol{\theta}
\end{equation}
Let the function $ C(\mathbf{x}_t) $ be defined as
\begin{equation*}
C(\mathbf{x}_t) := \int p_{\boldsymbol{\theta}|\mathbf{x}_t}(\boldsymbol{\theta}\vert \mathbf{x}_{t}) \log p_{\boldsymbol{\theta}|\mathbf{x}_t}(\boldsymbol{\theta}\vert \mathbf{x}_{t}) \, \mathrm{d}\boldsymbol{\theta} -  \int p_{\boldsymbol{\theta}|\mathbf{x}_t}(\boldsymbol{\theta}\vert \mathbf{x}_{t}) \log(h_{\boldsymbol{\theta}}(\boldsymbol{\theta})) \mathrm{d}\boldsymbol{\theta}.
\end{equation*}
Substituting~\eqref{eq-prior-q-theta} into the KL divergence in~\eqref{eq-kl-divergence-after-factorization}, we obtain
\begin{equation}
\label{eq-kl-divergence-before-cov}
\begin{aligned}
   D_{\text{KL}}(p_{\boldsymbol{\theta}|\mathbf{x}_t} \vert\vert q_{\boldsymbol{\theta}|\boldsymbol{\zeta}(\mathbf{x}_t)}) &= C(\mathbf{x}_t)  +  A_{\boldsymbol{\theta}}(\boldsymbol{\zeta}(\mathbf{x}_t)) \left( \int p_{\boldsymbol{\theta}|\mathbf{x}_t}(\boldsymbol{\theta}\vert \mathbf{x}_{t}) \mathrm{d}\boldsymbol{\theta} \right) - \boldsymbol{\zeta}(\mathbf{x}_t)^\top \left( \int p_{\boldsymbol{\theta}|\mathbf{x}_t}(\boldsymbol{\theta}\vert \mathbf{x}_{t}) \mathbf{T}_{\boldsymbol{\theta}}(\boldsymbol{\theta}) \, \mathrm{d}\boldsymbol{\theta} \right) \\
    &= C(\mathbf{x}_t) +  A_{\boldsymbol{\theta}}(\boldsymbol{\zeta}(\mathbf{x}_t)) -  \boldsymbol{\zeta}(\mathbf{x}_t)^\top \mathbb{E}_{p_{\boldsymbol{\theta}|\mathbf{x}_t}}[ \mathbf{T}_{\boldsymbol{\theta}}(\boldsymbol{\theta})].
\end{aligned}
\end{equation}
Notice that
\begin{equation*}
\mathbb{E}_{p_{\boldsymbol{\theta}|\mathbf{x}_t}}[ \mathbf{T}_{\boldsymbol{\theta}}(\boldsymbol{\theta})] = \mathbb{E}_{p_{\mathbf{x}_{0}|\mathbf{x}_t}}[ \mathbf{T}_{\boldsymbol{\theta}}(g^{-1}(\mathbf{x}_{0}))]
\end{equation*}
can be plugged in \eqref{eq-kl-divergence-before-cov} to obtain the desired result.
\end{proof}
\subsection{Proof of Theorem~\ref{prop-new-objective}} \label{proof-prop-new-objective}
Throughout this section, we denote by $p_{\boldsymbol{\theta}}(\boldsymbol{\theta})$ the marginal distribution of $\boldsymbol{\theta}$. Before proving Theorem~\ref{prop-new-objective} we need to show the following result.
\begin{lemma}
\label{lemma-bound-fubini}
Let $\boldsymbol{\zeta}(\mathbf{x}_t)$ be a Lipschitz continuous function. Suppose the following conditions hold:
\begin{equation}
\label{eq-lemma-finite-conditions}
\begin{aligned}
\mathbb{E}_{\boldsymbol{\theta}\sim  p_{\boldsymbol{\theta}}}\left[\norm{\mathbf{T}_{\boldsymbol{\theta}}(\boldsymbol{\theta})}\right] &< \infty, \\
\mathbb{E}_{\boldsymbol{\theta}\sim  p_{\boldsymbol{\theta}}}\left[\norm{g(\boldsymbol{\theta})}\norm{\mathbf{T}_{\boldsymbol{\theta}}(\boldsymbol{\theta})}\right] &< \infty.
\end{aligned}
\end{equation}
Then, the following integral is finite for all $ t \in [\epsilon, 1] $:
\begin{equation}
\label{eq-finite-integral-zeta-T}
\int p_{\mathbf{x}_0}(\mathbf{x}_0) 
\int p_{\mathbf{x}_t|\mathbf{x}_0}(\mathbf{x}_t)
\vert \boldsymbol{\zeta}(\mathbf{x}_t)^{\top} 
\mathbf{T}_{\boldsymbol{\theta}}(g^{-1}({\mathbf{x}}_{0}))\vert 
\mathrm{d}\mathbf{x}_t \mathrm{d}\mathbf{x}_0 < \infty.
\end{equation}
\end{lemma}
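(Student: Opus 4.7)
\textbf{Proof plan for Lemma \ref{lemma-bound-fubini}.} The plan is to bound the integrand pointwise by a product of two factors that can be integrated separately: one depending on $\mathbf{x}_t$ through the Lipschitz function $\boldsymbol{\zeta}$, the other depending on $\mathbf{x}_0$ through $\mathbf{T}_{\boldsymbol{\theta}}\circ g^{-1}$. First I would apply Cauchy–Schwarz to write
\begin{equation*}
\bigl|\boldsymbol{\zeta}(\mathbf{x}_t)^{\top}\mathbf{T}_{\boldsymbol{\theta}}(g^{-1}(\mathbf{x}_{0}))\bigr|\;\le\;\norm{\boldsymbol{\zeta}(\mathbf{x}_t)}\,\norm{\mathbf{T}_{\boldsymbol{\theta}}(g^{-1}(\mathbf{x}_{0}))}.
\end{equation*}
Lipschitz continuity of $\boldsymbol{\zeta}$ with constant $L$ then gives $\norm{\boldsymbol{\zeta}(\mathbf{x}_t)}\le \norm{\boldsymbol{\zeta}(\mathbf{0})} + L\norm{\mathbf{x}_t}$, decoupling the $\mathbf{x}_t$ dependence into an affine function of $\norm{\mathbf{x}_t}$.

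Next I would perform the inner integral over $\mathbf{x}_t$. By the forward kernel~\eqref{eq-forward-transition-kernel}, $\mathbf{x}_t\mid\mathbf{x}_0 = \sqrt{\alpha_t}\,\mathbf{x}_0 + \sqrt{v_t}\,\mathbf{z}$ with $\mathbf{z}\sim\mathcal{N}(\mathbf{0},\mathbf{I}_{d_x})$, so by the triangle inequality
\begin{equation*}
\mathbb{E}_{\mathbf{x}_t\sim p_{\mathbf{x}_t\vert\mathbf{x}_0}}[\norm{\mathbf{x}_t}]\;\le\;\sqrt{\alpha_t}\,\norm{\mathbf{x}_0} + \sqrt{v_t}\,M_{d_x},
\end{equation*}
where $M_{d_x}:=\mathbb{E}[\norm{\mathbf{z}}]<\infty$. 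Substituting and integrating the inner integral yields an upper bound of the form
\begin{equation*}
\bigl(C_1 + C_2\norm{\mathbf{x}_0}\bigr)\norm{\mathbf{T}_{\boldsymbol{\theta}}(g^{-1}(\mathbf{x}_0))},
\end{equation*}
where $C_1,C_2$ depend on $t$, $L$, $\norm{\boldsymbol{\zeta}(\mathbf{0})}$, and $M_{d_x}$ but not on $\mathbf{x}_0$; these constants are finite because $t\in[\epsilon,1]$ keeps $\alpha_t,v_t$ bounded.

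Finally I would push the outer integral through the change of variables $\mathbf{x}_0=g(\boldsymbol{\theta})$, so that $p_{\mathbf{x}_0}$ becomes $p_{\boldsymbol{\theta}}$ and $\norm{\mathbf{x}_0}=\norm{g(\boldsymbol{\theta})}$. The resulting bound is
\begin{equation*}
C_1\,\mathbb{E}_{\boldsymbol{\theta}\sim p_{\boldsymbol{\theta}}}\!\bigl[\norm{\mathbf{T}_{\boldsymbol{\theta}}(\boldsymbol{\theta})}\bigr] + C_2\,\mathbb{E}_{\boldsymbol{\theta}\sim p_{\boldsymbol{\theta}}}\!\bigl[\norm{g(\boldsymbol{\theta})}\norm{\mathbf{T}_{\boldsymbol{\theta}}(\boldsymbol{\theta})}\bigr],
\end{equation*}
both terms of which are finite exactly by the two hypotheses in~\eqref{eq-lemma-finite-conditions}. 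This establishes~\eqref{eq-finite-integral-zeta-T}.

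The argument is essentially bookkeeping: the main point to handle carefully is the Lipschitz+Gaussian-moment step, since this is the only place where the $\mathbf{x}_t$-dependence is tamed without knowing anything more about $\boldsymbol{\zeta}$ than that it grows at most linearly. The remaining steps are a direct application of Tonelli (all quantities are nonnegative, so no integrability is needed beforehand to interchange integrals) followed by the change of variables induced by the one-to-one link $g$ from Assumption~\ref{ass-link-function}.
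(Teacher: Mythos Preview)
Your argument is correct and follows essentially the same route as the paper's proof: Cauchy--Schwarz, the linear-growth bound from Lipschitz continuity, the Gaussian representation $\mathbf{x}_t=\sqrt{\alpha_t}\mathbf{x}_0+\sqrt{v_t}\,\mathbf{z}$ to control $\mathbb{E}[\norm{\mathbf{x}_t}\mid\mathbf{x}_0]$, and then the two moment hypotheses after the change of variables $\mathbf{x}_0=g(\boldsymbol{\theta})$. Your version is in fact a little more direct than the paper's, which centers the Lipschitz estimate at $\sqrt{\alpha_t}\mathbf{x}_0$ rather than at $\mathbf{0}$ and thereby splits the bound into three integrals $\mathcal{I}_1,\mathcal{I}_2,\mathcal{I}_3$ that are then handled separately; centering at the origin collapses this to the single two-term bound you wrote.
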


% \begin{lemma}
% \label{lemma-bound-fubini}
% Let $\boldsymbol{\zeta}(\mathbf{x}_t)$ be a Lipschitz continuous function. Assume the following conditions hold:
% \begin{equation}
% \label{eq-lemma-finite-conditions}
% \begin{aligned}
% \mathbb{E}_{\boldsymbol{\theta}\sim  p_{\boldsymbol{\theta}}}\left[\norm{\mathbf{T}_{\boldsymbol{\theta}}(\boldsymbol{\theta})}\right] &< \infty \\
% \mathbb{E}_{\boldsymbol{\theta}\sim  p_{\boldsymbol{\theta}}}\left[\norm{g(\boldsymbol{\theta})}\norm{\mathbf{T}_{\boldsymbol{\theta}}(\boldsymbol{\theta})}\right] &< \infty 
% \end{aligned}
% \end{equation}
% Then, 
% \begin{equation}
% \label{eq-finite-integral-zeta-T}
% \int p_{\mathbf{x}_0}(\mathbf{x}_0)\int  p_{\mathbf{x}_t|\mathbf{x}_0}(\mathbf{x}_t)\vert \boldsymbol{\zeta}(\mathbf{x}_t)^{\top} \mathbf{T}_{\boldsymbol{\theta}}(g^{-1}({\mathbf{x}}_{0}))\vert \mathrm{d}\mathbf{x}_t \mathrm{d}\mathbf{x}_0< \infty
% \end{equation}
% for all $t\in[\epsilon,1]$.
% \end{lemma}

\begin{proof}
We express the conditions in~\eqref{eq-lemma-finite-conditions} in terms of $\mathbf{x}_{0}$. Recall that $\mathbf{x}_0 = g(\boldsymbol{\theta})$, therefore:
\begin{equation}
\label{eq-lemma-finite-conditions-x0}
\begin{aligned}
\mathbb{E}_{\mathbf{x}_{0}\sim  p_{\mathbf{x}_{0}}}\left[\norm{\mathbf{T}_{\boldsymbol{\theta}}(g^{-1}(\mathbf{x}_{0}))}\right] &< \infty \\
\mathbb{E}_{\mathbf{x}_{0}\sim  p_{\mathbf{x}_{0}}}\left[\norm{\mathbf{x}_{0}}\norm{\mathbf{T}_{\boldsymbol{\theta}}(g^{-1}(\mathbf{x}_{0}))}\right] &< \infty.
\end{aligned}
\end{equation}
Let $\boldsymbol{\zeta}(\mathbf{x}_t)$ be a $K$-Lipschitz continuous function with respect to the Euclidean norm $\norm{\cdot}$. Recall from~\eqref{eq-forward-transition-kernel} that $\mathbf{x}_t\vert\mathbf{x}_0$ follows a multivariate normal distribution\footnote{Note that in~\eqref{eq-forward-transition-kernel}, $\mathbf{x}_t$ is defined with dimension $d_x$. However, throughout Section~\ref{sec:method}, we assume $\mathbf{x}_t$ has the same dimension $d$ as $\mathbf{y}$ for consistency with the problem setup.} with mean $\sqrt{\alpha_t}\mathbf{x}_0$ and covariance matrix $v_{t}\mathbf{I}_{d}$. We use the triangle inequality and then the Cauchy-Schwarz inequality to obtain the following bound 
\begin{equation}
\label{eq-triangle-cauchy-bound}
\begin{aligned}
\vert\boldsymbol{\zeta}(\mathbf{x}_t)^{\top} \mathbf{T}_{\boldsymbol{\theta}}(g^{-1}(\mathbf{x}_0)) \vert &\leq  \vert(\boldsymbol{\zeta}(\mathbf{x}_t)- \boldsymbol{\zeta}(\sqrt{\alpha_t}\mathbf{x}_0))^{\top} \mathbf{T}_{\boldsymbol{\theta}}(g^{-1}(\mathbf{x}_0)) \vert + \vert\boldsymbol{\zeta}(\sqrt{\alpha_t}\mathbf{x}_0)^{\top} \mathbf{T}_{\boldsymbol{\theta}}(g^{-1}(\mathbf{x}_0))\vert \\
&\leq\norm{\boldsymbol{\zeta}(\mathbf{x}_t)- \boldsymbol{\zeta}(\sqrt{\alpha_t}\mathbf{x}_0)}\norm{\mathbf{T}_{\boldsymbol{\theta}}(g^{-1}(\mathbf{x}_0))} + \norm{\boldsymbol{\zeta}(\sqrt{\alpha_t}\mathbf{x}_0)}\norm{\mathbf{T}_{\boldsymbol{\theta}}(g^{-1}(\mathbf{x}_0))}
\end{aligned}
\end{equation}
for all $t\in[\epsilon,1]$. Additionally, the Lipschitz continuity of $\boldsymbol{\zeta}$ can be applied to the bound in~\eqref{eq-triangle-cauchy-bound}, resulting in the following bound
\begin{equation}
\label{eq-lipschitz-triangle-bound}
\begin{aligned}
\vert\boldsymbol{\zeta}(\mathbf{x}_t)^{\top} \mathbf{T}_{\boldsymbol{\theta}}(g^{-1}(\mathbf{x}_0)) \vert &\leq K \norm{\mathbf{x}_t-\sqrt{\alpha_t}\mathbf{x}_0}\norm{\mathbf{T}_{\boldsymbol{\theta}}(g^{-1}(\mathbf{x}_0))} + \norm{\boldsymbol{\zeta}(\sqrt{\alpha_t}\mathbf{x}_0)}\norm{\mathbf{T}_{\boldsymbol{\theta}}(g^{-1}(\mathbf{x}_0))} \\ 
&\leq K \norm{\mathbf{x}_t}\norm{\mathbf{T}_{\boldsymbol{\theta}}(g^{-1}(\mathbf{x}_0))} +  K\sqrt{\alpha_t}\norm{\mathbf{x}_0}\norm{\mathbf{T}_{\boldsymbol{\theta}}(g^{-1}(\mathbf{x}_0))} \\ &+\norm{\boldsymbol{\zeta}(\sqrt{\alpha_t}\mathbf{x}_0)}\norm{\mathbf{T}_{\boldsymbol{\theta}}(g^{-1}(\mathbf{x}_0))}
\end{aligned}
\end{equation}
where the final inequality follows from the triangle inequality. The inequality in~\eqref{eq-lipschitz-triangle-bound} represents a bound for the integrand in~\eqref{eq-finite-integral-zeta-T}.
The comparison test for Lebesgue integrability (see~\citep[Proposition 4.16]{Royden_2010}) states that, to show the integral in~\eqref{eq-finite-integral-zeta-T} is finite given the bound in~\eqref{eq-lipschitz-triangle-bound}, it is sufficient to verify that the following integrals are finite:
\begin{subequations}
\begin{align}
\mathcal{I}_{1} &:=  K\sqrt{\alpha_t}\int p_{\mathbf{x}_0}(\mathbf{x}_0) \norm{\mathbf{x}_0}\norm{\mathbf{T}_{\boldsymbol{\theta}}(g^{-1}(\mathbf{x}_0))} \mathrm{d}\mathbf{x}_0 \label{eq-I1}\\
\mathcal{I}_{2} &:= \int p_{\mathbf{x}_0}(\mathbf{x}_0)\norm{\boldsymbol{\zeta}(\sqrt{\alpha_t}\mathbf{x}_0)}\norm{\mathbf{T}_{\boldsymbol{\theta}}(g^{-1}(\mathbf{x}_0))} \mathrm{d}\mathbf{x}_0 \label{eq-I2}\\
\mathcal{I}_{3} &:= K  \int p_{\mathbf{x}_0}(\mathbf{x}_0)\int  p_{\mathbf{x}_t|\mathbf{x}_0}(\mathbf{x}_t) \norm{\mathbf{x}_t}\norm{\mathbf{T}_{\boldsymbol{\theta}}(g^{-1}(\mathbf{x}_0))}  \mathrm{d}\mathbf{x}_t \mathrm{d}\mathbf{x}_0 \label{eq-I3} 
\end{align}
\end{subequations}
We will now show that each $\mathcal{I}_{1}$, $\mathcal{I}_{2}$ and $\mathcal{I}_{3}$ is finite for any $t\in[\epsilon,1]$.
\paragraph{$\mathcal{I}_{1}$ is finite.} This follows immediately from~\eqref{eq-I1} and \eqref{eq-lemma-finite-conditions-x0}.
\paragraph{$\mathcal{I}_{2}$ is finite.}
Let $\mathbf{c}$ be an arbitrary point in the domain of $\boldsymbol{\zeta}$. By using the triangle inequality and the Lipschitz continuity of $\boldsymbol{\zeta}$, the integrand of~\eqref{eq-I2} can be bounded as follows
\begin{equation}
\label{eq-bound-integrand-I2}
\begin{aligned}
\norm{\boldsymbol{\zeta}(\sqrt{\alpha_t}\mathbf{x}_0)}\norm{\mathbf{T}_{\boldsymbol{\theta}}(g^{-1}(\mathbf{x}_0))} 
&\leq \norm{\boldsymbol{\zeta}(\sqrt{\alpha_t}\mathbf{x}_0)-\boldsymbol{\zeta}(\mathbf{c})}\norm{\mathbf{T}_{\boldsymbol{\theta}}(g^{-1}(\mathbf{x}_0))}  + \norm{\boldsymbol{\zeta}(\mathbf{c})}\norm{\mathbf{T}_{\boldsymbol{\theta}}(g^{-1}(\mathbf{x}_0))} \\ 
&\leq   K\norm{\sqrt{\alpha_t}\mathbf{x}_0-\mathbf{c}}\norm{\mathbf{T}_{\boldsymbol{\theta}}(g^{-1}(\mathbf{x}_0))}  +  \norm{\boldsymbol{\zeta}(\mathbf{c})}\norm{\mathbf{T}_{\boldsymbol{\theta}}(g^{-1}(\mathbf{x}_0))} \\ 
&\leq (K\sqrt{\alpha_t} +K\norm{\mathbf{c}} +\norm{\boldsymbol{\zeta}(\mathbf{c})})\norm{\mathbf{x}_0}\norm{\mathbf{T}_{\boldsymbol{\theta}}(g^{-1}(\mathbf{x}_0))}  
\end{aligned}
\end{equation}
where the last inequality follows again by the triangle inequality. Notice that it follows immediately from~\eqref{eq-bound-integrand-I2} and~\eqref{eq-lemma-finite-conditions-x0} that
\begin{equation}
\label{eq-bound-integral-I2}
(K\sqrt{\alpha_t} +K\norm{\mathbf{c}} +\norm{\boldsymbol{\zeta}(\mathbf{c})})\int p_{\mathbf{x}_{0}}(\mathbf{x}_{0})\norm{\mathbf{x}_0}\norm{\mathbf{T}_{\boldsymbol{\theta}}(g^{-1}(\mathbf{x}_0))} d\mathbf{x}_{0} < \infty
\end{equation}
for all $t\in[\epsilon,1]$. Hence, it follows immediately from~\eqref{eq-bound-integral-I2} and the comparison test for Lebesgue integrability that $\mathcal{I}_{2}$ is finite for all $t\in[\epsilon,1]$.
\paragraph{$\mathcal{I}_{3}$ is finite.} Notice that $\mathbf{x}_{t}\overset{d}{=}\sqrt{\alpha_t}\mathbf{x}_{0} + \sqrt{v_t} \mathbf{z}$ with $\mathbf{z}\sim\mathcal{N}_{d}(0,\mathbf{I}_{d})$. Therefore, it follows from~\eqref{eq-I3} that
\begin{equation}
\label{eq-expression-I3-normal}
\mathcal{I}_{3} = K  \int p_{\mathbf{x}_0}(\mathbf{x}_0)\int  p_{\mathbf{z}}(\mathbf{z}) \norm{\sqrt{\alpha_t}\mathbf{x}_{0} + \sqrt{v_t} \mathbf{z}}\norm{\mathbf{T}_{\boldsymbol{\theta}}(g^{-1}(\mathbf{x}_0))}  \mathrm{d}\mathbf{z} \mathrm{d}\mathbf{x}_0 
\end{equation}
where $p_{\mathbf{z}}(\mathbf{z})$ is the density of the standard multivariate normal random variable $\mathbf{z}$. We wish to show that $\mathcal{I}_{3}$ is finite by using the expression in \eqref{eq-expression-I3-normal}. It follows from the triangle inequality that the the integrand of \eqref{eq-expression-I3-normal} satisfies the following inequality
\begin{equation}
\label{bound-integrand-I3}
\norm{\sqrt{\alpha_t}\mathbf{x}_{0} + \sqrt{v_t} \mathbf{z}}\norm{\mathbf{T}_{\boldsymbol{\theta}}(g^{-1}(\mathbf{x}_0))}  \leq \sqrt{\alpha_t}\norm{\mathbf{x}_{0}}\norm{\mathbf{T}_{\boldsymbol{\theta}}(g^{-1}(\mathbf{x}_0))}  + \sqrt{v_t} \norm{\mathbf{z}}\norm{\mathbf{T}_{\boldsymbol{\theta}}(g^{-1}(\mathbf{x}_0))} 
\end{equation}
Given \eqref{bound-integrand-I3}, it is sufficient to show that
\begin{equation}
\label{eq-finite-integral-bound-I3}
\sqrt{\alpha_t}\int p_{\mathbf{x}_{0}}(\mathbf{x}_{0})\norm{\mathbf{x}_{0}}\norm{\mathbf{T}_{\boldsymbol{\theta}}(g^{-1}(\mathbf{x}_0))} \mathrm{d}\mathbf{x}_{0}  + \sqrt{v_t} \int p_{\mathbf{x}_{0}}(\mathbf{x}_{0})\int p_{\mathbf{z}}(\mathbf{z})\norm{\mathbf{z}}\norm{\mathbf{T}_{\boldsymbol{\theta}}(g^{-1}(\mathbf{x}_0))} \mathrm{d}\mathbf{z}\mathrm{d}\mathbf{x}_{0} <\infty 
\end{equation}
for all $t\in[\epsilon,1]$ to conclude, by the comparison test for Lebesgue integrability, that $\mathcal{I}_{3}$ is finite for all $t\in[\epsilon,1]$.

Notice that it follows from  \eqref{eq-lemma-finite-conditions-x0} that
\begin{equation}
\label{eq-first-bound-I3}
\sqrt{\alpha_t}\int p_{\mathbf{x}_{0}}(\mathbf{x}_{0})\norm{\mathbf{x}_{0}}\norm{\mathbf{T}_{\boldsymbol{\theta}}(g^{-1}(\mathbf{x}_0))} \mathrm{d}\mathbf{x}_{0} <\infty
\end{equation}
for all $t\in[\epsilon,1]$. Furthermore, recall that for a standard multivariate normal random variable $\mathbf{z}\sim\mathcal{N}_{d}(0,\mathbf{I}_{d})$ with $\mathbf{z} = (z_{1},\ldots,z_{d})$, the expectation of its $\ell^{2}$-norm satisfies the following bound:
\begin{equation}
\label{eq-bounded-norm-gaussian}
\mathbb{E}_{p_{\mathbf{z}}}[\norm{\mathbf{z}}] \leq \sqrt{\sum_{i=1}^{d}\text{Var}(z_{i}^{2})} =\sqrt{d}.
\end{equation}
where the first inequality follows from Jensen's inequality.
It follows from \eqref{eq-bounded-norm-gaussian}  that
\begin{equation}
\label{eq-bound-z-T-integral}
\begin{aligned}
\int p_{\mathbf{x}_0}(\mathbf{x}_0)\int  p_{\mathbf{z}}(\mathbf{z}) \norm{\sqrt{v_t} \mathbf{z}}\norm{\mathbf{T}_{\boldsymbol{\theta}}(g^{-1}(\mathbf{x}_0))}  \mathrm{d}\mathbf{z} \mathrm{d}\mathbf{x}_0  &= \sqrt{v_{t}}\left(\mathbb{E}_{p_{\mathbf{z}}}[\norm{\mathbf{z}}] \right) \left(\int p_{\mathbf{x}_0}(\mathbf{x}_0)\norm{\mathbf{T}_{\boldsymbol{\theta}}(g^{-1}(\mathbf{x}_0))}\mathrm{d}\mathbf{x}_0 \right) 
\\ &\leq \sqrt{v_{t}d}  \left(\int p_{\mathbf{x}_0}(\mathbf{x}_0)\norm{\mathbf{T}_{\boldsymbol{\theta}}(g^{-1}(\mathbf{x}_0))}\mathrm{d}\mathbf{x}_0 \right) 
\\ &<\infty
\end{aligned}
\end{equation}
for all $t\in[\epsilon,1]$, where the last inequality follows from \eqref{eq-lemma-finite-conditions-x0}. Then, it follows from \eqref{eq-first-bound-I3} and \eqref{eq-bound-z-T-integral} that the inequality in \eqref{eq-finite-integral-bound-I3} is satisfied for all $t\in[\epsilon,1]$. This shows that $\mathcal{I}_{3}$ is finite for all $t\in[\epsilon,1]$.

We have verified that $\mathcal{I}_{1}$, $\mathcal{I}_{2}$ and $\mathcal{I}_{3}$ are finite for any $t\in[\epsilon,1]$. This concludes the proof.
\end{proof}

\begin{proof}[Proof of Theorem~\ref{prop-new-objective}]
In order to show the desired result, it is sufficient to show that 
\begin{equation}
\label{eq-sufficient-expectation}
\mathbb{E}_{t\sim U(\epsilon, 1), \mathbf{x}_t \sim p_{\mathbf{x}_t}}\left[\boldsymbol{\zeta}(\mathbf{x}_t)^{\top}\mathbb{E}_{p_{\tilde{\mathbf{x}}_{0}\vert\mathbf{x}_t}}[\mathbf{T}_{\boldsymbol{\theta}}(g^{-1}(\tilde{\mathbf{x}}_{0}))]\right]  = \mathbb{E}_{t\sim U(\epsilon, 1),\mathbf{x}_0\sim  p_{\mathbf{x}_0}, \mathbf{x}_t \sim p_{\mathbf{x}_t|\mathbf{x}_0}, }\left[\boldsymbol{\zeta}(\mathbf{x}_t)^{\top}\mathbf{T}_{\boldsymbol{\theta}}(g^{-1}({\mathbf{x}}_{0}))\right].
\end{equation}
To begin, we express the LHS of~\eqref{eq-sufficient-expectation} explicitly as an integral:
\begin{equation}
\label{eq-explicit-integral-proof}
\begin{aligned}
&\mathbb{E}_{t\sim U(\epsilon, 1), \mathbf{x}_t \sim p_{\mathbf{x}_t}}\left[\boldsymbol{\zeta}(\mathbf{x}_t)^{\top}\mathbb{E}_{p_{\tilde{\mathbf{x}}_{0}\vert\mathbf{x}_t}}[\mathbf{T}_{\boldsymbol{\theta}}(g^{-1}(\tilde{\mathbf{x}}_{0}))]\right] \\
&=\int_{t}    p_{ U(\epsilon, 1)}(t)\int_{\mathbf{x}_t}  p_{\mathbf{x}_t}(\mathbf{x}_{t})  \int_{\tilde{\mathbf{x}}_0} p_{\tilde{\mathbf{x}}_0\vert \mathbf{x}_t}(\tilde{\mathbf{x}}_{0})\boldsymbol{\zeta}(\mathbf{x}_t)^{\top} \mathbf{T}_{\boldsymbol{\theta}}(g^{-1}(\tilde{\mathbf{x}}_{0})) \mathrm{d}\tilde{\mathbf{x}}_{0}\mathrm{d}\mathbf{x}_t\mathrm{d}t\\
&=\int_{t} p_{ U(\epsilon, 1)}(t)\int_{\mathbf{x}_t} \int_{\tilde{\mathbf{x}}_0} p_{\mathbf{x}_t}(\mathbf{x}_{t})       p_{\tilde{\mathbf{x}}_0\vert \mathbf{x}_t}(\tilde{\mathbf{x}}_{0}) \boldsymbol{\zeta}(\mathbf{x}_t)^{\top}\mathbf{T}_{\boldsymbol{\theta}}(g^{-1}(\tilde{\mathbf{x}}_{0})) \mathrm{d}\tilde{\mathbf{x}}_{0}\mathrm{d}\mathbf{x}_t\mathrm{d}t \\
&=\int_{t} p_{ U(\epsilon, 1)}(t)\int_{\mathbf{x}_t} \int_{\tilde{\mathbf{x}}_0} p_{\mathbf{x}_t|\tilde{\mathbf{x}}_{0}}(\mathbf{x}_{t}) p_{\tilde{\mathbf{x}}_{0}}(\tilde{\mathbf{x}}_{0}) \boldsymbol{\zeta}(\mathbf{x}_t)^{\top}\mathbf{T}_{\boldsymbol{\theta}}(g^{-1}(\tilde{\mathbf{x}}_{0})) \mathrm{d}\tilde{\mathbf{x}}_{0}\mathrm{d}\mathbf{x}_t\mathrm{d}t 
\end{aligned}
\end{equation}
where in the last equality we have used Bayes' theorem as follows
\begin{equation*}
 p_{\mathbf{x}_t}(\mathbf{x}_{t})  p_{\tilde{\mathbf{x}}_0\vert \mathbf{x}_t}(\tilde{\mathbf{x}}_{0})  = p_{\mathbf{x}_t|\tilde{\mathbf{x}}_{0}}(\mathbf{x}_{t}) p_{\tilde{\mathbf{x}}_{0}}(\tilde{\mathbf{x}}_{0}).
\end{equation*}
Lemma~\ref{lemma-bound-fubini} verifies, under the assumptions in the statement of the theorem, a sufficient condition to apply Fubini's theorem as follows
\begin{multline}
\label{eq-fubini-swap-integral}
\int_{\mathbf{x}_t} \int_{\tilde{\mathbf{x}}_0} p_{\mathbf{x}_t|\tilde{\mathbf{x}}_{0}}(\mathbf{x}_{t}) p_{\tilde{\mathbf{x}}_{0}}(\tilde{\mathbf{x}}_{0}) \boldsymbol{\zeta}(\mathbf{x}_t)^{\top}\mathbf{T}_{\boldsymbol{\theta}}(g^{-1}(\tilde{\mathbf{x}}_{0})) \mathrm{d}\tilde{\mathbf{x}}_{0}\mathrm{d}\mathbf{x}_t = \\ \int_{\tilde{\mathbf{x}}_0}   \int_{\mathbf{x}_t}  p_{\tilde{\mathbf{x}}_{0}}(\tilde{\mathbf{x}}_{0}) p_{\mathbf{x}_t|\tilde{\mathbf{x}}_{0}}(\mathbf{x}_{t}) \boldsymbol{\zeta}(\mathbf{x}_t)^{\top}\mathbf{T}_{\boldsymbol{\theta}}(g^{-1}(\tilde{\mathbf{x}}_{0}))\mathrm{d}\mathbf{x}_t\mathrm{d}\tilde{\mathbf{x}}_{0}
\end{multline}
for all $t\in[\epsilon,1]$.
We now plug~\eqref{eq-fubini-swap-integral} into~\eqref{eq-explicit-integral-proof} to obtain
\begin{equation*}
\begin{aligned}
&\mathbb{E}_{t\sim U(\epsilon, 1), \mathbf{x}_t \sim p_{\mathbf{x}_t}}\left[\boldsymbol{\zeta}(\mathbf{x}_t)^{\top}\mathbb{E}_{p_{\tilde{\mathbf{x}}_{0}\vert\mathbf{x}_t}}[\mathbf{T}_{\boldsymbol{\theta}}(g^{-1}(\tilde{\mathbf{x}}_{0}))]\right] \\ 
 &= \int_{t} p_{ U(\epsilon, 1)}(t)\int_{\tilde{\mathbf{x}}_0}  p_{\tilde{\mathbf{x}}_{0}}(\tilde{\mathbf{x}}_{0})  \int_{\mathbf{x}_t}   p_{\mathbf{x}_t|\tilde{\mathbf{x}}_{0}}(\mathbf{x}_{t}) \boldsymbol{\zeta}(\mathbf{x}_t)^{\top}\mathbf{T}_{\boldsymbol{\theta}}(g^{-1}(\tilde{\mathbf{x}}_{0}))\mathrm{d}\mathbf{x}_t\mathrm{d}\tilde{\mathbf{x}}_{0}\mathrm{d}t \\ 
&=\mathbb{E}_{t\sim U(\epsilon, 1),\mathbf{x}_0\sim  p_{\mathbf{x}_0}, \mathbf{x}_t \sim p_{\mathbf{x}_t|\mathbf{x}_0} }\left[\boldsymbol{\zeta}(\mathbf{x}_t)^{\top}\mathbf{T}_{\boldsymbol{\theta}}(g^{-1}({\mathbf{x}}_{0}))\right]
\end{aligned}
\end{equation*}
which shows the desired result.
\end{proof}

\end{document}